\definecolor{myblue}{rgb}{0.21, 0.34, 0.74}
\definecolor{mygrey}{rgb}{0.55, 0.57, 0.67}
\definecolor{myred}{rgb}{0.79, 0.0, 0.09}
\definecolor{mygreen}{rgb}{0.05, 0.5, 0.06}
\DeclareMathAlphabet{\mathscrbf}{OMS}{mdugm}{b}{n}
\newcommand{\calN}{\mathcal{N}}
\newcommand{\bR}{\mathbb{R}}
\newcommand{\R}{\bR}
\newcommand{\bC}{\mathbb{C}}
\newcommand{\bN}{\mathbb{N}}
\newcommand{\bP}{\mathbb{P}}
\newcommand{\p}{\bP}
\newcommand{\bE}{\mathbb{E}}
\newcommand{\calF}{\mathcal{F}}
\newcommand{\calB}{\mathcal{B}}
\newcommand{\norm}[1]{\left\| #1\right\|}
\newcommand{\rs}{\textsf{rs}}
\newcommand{\nrs}{\textsf{nrs}}
\numberwithin{equation}{section}
\theoremstyle{plain}
\newtheorem{theorem}{Theorem}[section]
\newtheorem{definition}[theorem]{Definition}
\newtheorem{proposition}[theorem]{Proposition}
\newtheorem{lemma}[theorem]{Lemma}
\newtheorem{remark}[theorem]{Remark}
\newtheorem{assumption}{Assumption}
\begin{document}

\title[Residual connections provably mitigate oversmoothing in GNNs]{Residual connections provably mitigate oversmoothing in graph neural networks}

\author[Z. Chen]{Ziang Chen}
\address{(ZC) Department of Mathematics, Massachusetts Institute of Technology, 77 Massachusetts Ave, 02139 Cambridge MA, USA} 
\email{ziang@mit.edu}

\author[Z. Lin]{Zhengjiang Lin}
\address{(ZL) Department of Mathematics, Massachusetts Institute of Technology, 77 Massachusetts Ave, 02139 Cambridge MA, USA} 
\email{linzj@mit.edu}

\author[S. Chen]{Shi Chen}
\address{(SC) Department of Mathematics, Massachusetts Institute of Technology, 77 Massachusetts Ave, 02139 Cambridge MA, USA} 
\email{schen636@mit.edu}

\author[Y. Polyanskiy]{Yury Polyanskiy}
\address{(YP) Department of Electrical Engineering and Computer Science, Massachusetts Institute of Technology, 77 Massachusetts Ave, 02139 Cambridge MA, USA}
\email{yp@mit.edu}

\author[P. Rigollet]{Philippe Rigollet}
\address{(PR) Department of Mathematics, Massachusetts Institute of Technology, 77 Massachusetts Ave, 02139 Cambridge MA, USA} 
\email{rigollet@math.mit.edu}


\date{\today}

\keywords{Deep graph neural networks, oversmoothing, residual connection, multiplicative ergodic theorem}

\maketitle

\begin{abstract}

Graph neural networks (GNNs) have achieved remarkable empirical success in processing and representing graph-structured data across various domains. However, a significant challenge known as ``oversmoothing'' persists, where vertex features become nearly indistinguishable in deep GNNs, severely restricting their expressive power and practical utility. In this work, we analyze the asymptotic oversmoothing rates of deep GNNs with and without residual connections by deriving explicit convergence rates for a normalized vertex similarity measure. Our analytical framework is grounded in the multiplicative ergodic theorem. Furthermore, we demonstrate that adding residual connections effectively mitigates or prevents oversmoothing across several broad families of parameter distributions. The theoretical findings are strongly supported by numerical experiments.
\end{abstract}

\tableofcontents

\newpage
\section{Introduction}

Graph neural networks (GNNs) \cite{scarselli2008graph,wu2020comprehensive,zhou2020graph,kipf2016semi,velivckovic2017graph} have achieved significant empirical success in learning and representing graph data, with broad applications across fields such as physics \cite{shlomi2020graph}, bioinformatics \cite{zhang2021graph}, finance \cite{wang2021review}, electronic engineering \cite{liao2021review, he2021overview, lee2022graph}, and operations research \cite{gasse2019exact}.

Among many types of GNN architectures, the most foundational one is the message-passing graph neural network (MP-GNN), which employs a message-passing mechanism \cite{gilmer2017neural} to update each vertex's feature by aggregating information from its neighbors. Specifically, given an undirected and unweighted graph \( G = (V, E) \), where \( V = \{v_1,\dots,v_n\} \) is the vertex set and \( E \subseteq V\times V \) represents the edges, the vertex features $\{x_i(t):i=1,\dots,n\}$ at the \( t \)-th layer are computed as:
\begin{equation}\label{eq:MP}
    x_i{(t+1)} = f^{(t)}\left(x_i{(t)}, \mathrm{AGGREGATE}\left(\left\{\left\{g^{(t)}(x_j{(t)}):j\in\calN_i\right\}\right\}\right)\right),
\end{equation}
where \( f^{(t)} \) and \( g^{(t)} \) are trainable functions of local updates, \( \mathrm{AGGREGATE} \) is an aggregation function (e.g., sum, average, max, attention), \( \calN_i = \{j\in\{1,\dots,n\} : (v_i,v_j)\in E\} \) is the index set of neighbors of \( v_i \), and \( \{\{\dots\}\} \) denotes a multiset allowing repeated elements.

The widespread adoption and practical success of MP-GNNs can be attributed to two key features. The first is their permutation-invariant/equivariant property, which ensures that when vertices are relabeled, the vertex features undergo a corresponding relabeling, preserving the MP-GNN's outputs on isomorphic graphs up to a permutation. The second is their efficient and scalable implementation, where vertex features are updated according to~\eqref{eq:MP} using only local functions and neighborhood information from a small subgraph. Together, these properties make MP-GNNs particularly well-suited for graph learning tasks, allowing them to handle graphs of varying sizes and scale effectively across large datasets.

Despite their strengths, MP-GNNs face certain limitations that constrain their applications. A primary challenge is the \emph{oversmoothing} phenomenon in deep GNNs, which is well documented empirically~\cite{rusch2023survey,zhang2023comprehensive,chen2020measuring,li2018deeper,oono2019graph,wu2022non}. More specifically, vertex features tend to become increasingly similar as the network depth grows. Although researchers have proposed various mitigation strategies including residual connections, attention mechanisms, and normalization layers \cite{wu2024demystifying,chen2020measuring,rusch2023survey,scholkemper2024residual,dezoort2023principles}—oversmoothing remains a persistent challenge. This phenomenon indicates that deep MP-GNNs struggle to maintain vertex distinctiveness even when the underlying properties differ significantly. The resulting homogeneity in vertex features creates numerical difficulties during training, often forcing the use of shallow architectures and thereby limiting MP-GNNs' expressiveness and, in turn, hinders their performance in complex, large-scale applications.

This work focuses on understanding and mitigating oversmoothing in deep GNNs. Specifically, we conduct a theoretical investigation into the oversmoothing phenomenon in deep MP-GNNs, offering practical insights and guidance.

We analyze the asymptotic behavior of oversmoothing in two widely used families of MP-GNN architectures. The first architecture is given by
\begin{equation}\label{eq:GNN}\tag{\nrs}
    x_i{(t+1)} = \sum_{j=1}^n P_{ij}^{(t)} W^{(t)} \sigma\left( x_j{(t)}\right),
\end{equation}
where $P_{ij}^{(t)}\geq 0$ and $P_{ij}^{(t)}>0$ if and only if $j\in\calN_i$.
This architecture encompasses graph convolutional networks (GCNs) \cite{kipf2016semi}, where the aggregation coefficient matrix $P^{(t)}=P$ is time-independent and is given by \(P = D^{-1} A\) or $P = D^{-1/2} A D^{-1/2}$, with \(A \in \{0,1\}^{n \times n}\) as the symmetric adjacency matrix and \(D \in \mathbb{R}^{n \times n}\) as the degree matrix, a diagonal matrix whose diagonal entries represent the vertex degrees, and graph attention networks (GATs) \cite{velivckovic2017graph}, where $P^{(t)}$ is time-varying and is determined by the attention mechanism on the graph. 

The second architecture employs a residual connection, and is defined as
\begin{equation}\label{eq:res_GNN}\tag{\rs}
    x_i{(t+1)} = x_i{(t)} +\alpha\sum_{j=1}^n P_{ij}^{(t)} W^{(t)} \sigma\left( x_j{(t)}\right).
\end{equation}

In \eqref{eq:GNN} and \eqref{eq:res_GNN}, $W^{(t)}\in\bR^{d\times d}$ is a matrix of learnable parameters with $d$ being the dimension of vertex feature, $P_{ij}^{(t)}$ represents the coefficients in information aggregation, $\sigma:\bR\to\bR$ is an activation function, and $\alpha>0$ is a constant. By examining these architectures, we aim to provide a deeper understanding of oversmoothing and strategies to alleviate its impact.

\medskip

\paragraph{\textbf{Related work}} Several works in the literature have analyzed the oversmoothing phenomenon in deep GNNs from a theoretical perspective. 

For GNNs without residual connections, \cite{li2018deeper,keriven2022not} prove that for GCNs with time-independent $P^{(t)} = P$, the vertex features converge to a subspace of identical or highly correlated features. This analysis is further refined in \cite{oono2019graph,cai2020note}, showing that the convergence occurs at an exponential rate, determined by the eigengap of the normalized graph Laplacian. Additionally, non-asymptotic analysis for GCNs is provided in \cite{wu2022non}. For GATs without residual connections and with time-varying $P^{(t)}$, \cite{wu2024demystifying} demonstrates that they also suffer from oversmoothing at an exponential rate. Moreover, the upper bound of the convergence rate established in \cite{wu2024demystifying} is shown to be greater than or equal to the rate derived for GCNs in \cite{oono2019graph,cai2020note}. While this observation pertains only to upper bounds that could be loose, it points to a potential mitigation of oversmoothing in GATs even in absence of a residual connection.

Previous works have also analyzed GNNs with residual connections. \cite{scholkemper2024residual} demonstrates that oversmoothing can be prevented by introducing a residual connection between each hidden layer and the \emph{initial} layer which amounts to adding a constant drift $x_i(0)$ to the right-hand side of~\eqref{eq:GNN}. The nature of this remedy is in sharp contrast to \eqref{eq:res_GNN}, where the residual connection links consecutive layers as in classical ResNets~\cite{he2016deep}. 

Closer to our setting in \eqref{eq:res_GNN}, \cite{dezoort2023principles} investigates the oversmoothing phenomenon in residual GNNs when $\sigma$ is a ReLU activation and the entries of $W^{(t)}$ are i.i.d. Gaussian. They prove that for a deep residual GNN with $T$ is message-passing layers, the oversmoothing issue can be prevented if the stepsize $\alpha$ is chosen as small, say $\alpha=\Theta(1/T)$. 
\medskip

\paragraph{\textbf{Our contribution}} We define a normalized vertex similarity measure $\mu(x) \in[0,1]$ for $x=(x_1,\dots,x_n)\in\bR^{d\times n}$, where $\mu(x)=0$ indicates if and only if all vertex features are identical, and analyze the asymptotic behavior of $\mu(x_{\nrs}(t))$ and $\mu(x_{\rs}(t))$, where $\{x_{\nrs}(t)\}_{t\in\bN}$ and $\{x_{\rs}(t)\}_{t\in\bN}$ are generated by \eqref{eq:GNN} and \eqref{eq:res_GNN}, respectively. 

Our analysis relies on three main assumptions: (i) $\sigma=\mathrm{Id}$, (ii) $P^{(t)}\equiv P\in \bR^{n\times n}$  is a nonnegative matrix for which the Perron-Frobenius theorem applies with leading eigenvector $\mathbf{1}=(1,\dots,1)^\top\in\bR^n$ associated to the leading eigenvalue assumed equal to $1$, and (iii) $W^{(t)},\ t\in\bN$ are sampled i.i.d. from some probability distribution on $\bR^{d\times d}$. Note that (ii) holds when $P$ is the probability transition matrix of a simple random walk on a graph that is irreducible and aperiodic, as in~\cite{norris1998markov}. A significant strength of our analysis is that it extends readily to asymmetric matrices $P$.

Our main contributions are summarized informally as follows:
\begin{itemize}
    \item \emph{Non-residual dynamics:} we prove that $\mu(x_{\nrs}(t))^{1/2t}\to \max_{\lambda\in \mathrm{spec}(P)\backslash \{1\}}|\lambda| < 1$ as $t\to+\infty$ almostly surely, and the limit is \emph{independent} of the distribution of $W^{(t)}$. {This result confirms the findings of \cite{oono2019graph, cai2020note} that $\mu(x_{\nrs}(t))$ converges to zero exponentially fast. It offers greater precision by establishing the exact rate of convergence, whereas \cite{oono2019graph, cai2020note} only present upper bounds.}
    \item \emph{Residual dynamics:} We show that almost surely, $\lim_{t\to\infty} \mu(x_{\rs}(t))^{1/2t}$ is lower bounded by a constant depending on the spectrum of $P$ and the distribution of $W^{(t)}$. The lower bound is achieved if $P$ is diagonalizable in $\bC$. Furthermore, this bound can be explicitly computed or estimated for several commonly used distributions of $W^{(t)}$. {In comparison to \cite{dezoort2023principles}, our analysis encompasses a broader class of parameter distributions, not limited to i.i.d. Gaussian. Additionally, we impose no requirement for $\alpha$ to depend on the number of layers, whereas \cite{dezoort2023principles} carefully selects $\alpha$ based on the predetermined depth of GNNs.}
    \item \emph{Comparison of non-residual and residual dynamics:} For several distributions of $W^{(t)}$, we rigorously demonstrate that $\lim_{t\to\infty} \mu(x_{\rs}(t))^{1/2t}>\lim_{t\to\infty} \mu(x_{\nrs}(t))^{1/2t}$, and in some cases, $\lim_{t\to\infty} \mu(x_{\rs}(t))^{1/2t}=1$, which indicates that $\mu(x_{\rs}(t)$ can only converge to zero at a subexponential rate. This shows that residual connections effectively mitigate the oversmoothing issue.
\end{itemize}
{These results rely on a new and unified analytical framework that rigorously demonstrates that residual connections mitigate or prevent oversmoothing in deep GNNs.
Our analysis accounts for the presence of complex eigenvalues in $P$, whereas prior works \cite{oono2019graph, cai2020note, dezoort2023principles} assume symmetry or a real spectrum for $P$.} Our proof techniques draw inspiration from the multiplicative ergodic theorem \cite{oseledets, Arnold_RDS}, which precisely characterizes the asymptotic behavior of linear random dynamical systems or products of random matrices.

\medskip

\paragraph{\textbf{Organization}} The rest of this paper will be organized as follows. We state our main theory on the asymptotic oversmoothing rate and their applications for several commonly used distributions in \Cref{sec:main_results} . All proofs are presented in \Cref{sec:proof} and numerical experiments are conducted in \Cref{sec:numerics}. \Cref{sec:conclude} concludes the paper.

\medskip

\section{Main Results}
\label{sec:main_results}

This section presents our main results. In \Cref{sec:oversmoothing_rate}, we establish the asymptotic oversmoothing rates for \eqref{eq:GNN} and \eqref{eq:res_GNN}. In \Cref{sec:application}, we compare the oversmoothing rates of \eqref{eq:GNN} and \eqref{eq:res_GNN} for several specific distributions.

\subsection{Asymptotic oversmoothing rate}
\label{sec:oversmoothing_rate}
We will use the (normalized) vertex similarity measure introduced below to quantify the degree of oversmoothing.

\begin{definition}[Vertex similarity measure]\label{def:vertex sim}
    For $x = (x_1,\dots,x_n)\in\bR^{d\times n}$, we define (normalized) vertex similarity measure as
    \begin{equation}\label{eq:vertex_sim}
        \mu(x) = \frac{\sum_{i=1}^n \|x_i - \Bar{x}\|_2^2}{\sum_{i=1}^n \|x_i\|_2^2},
    \end{equation}
    where $\Bar{x} = \frac{1}{n}\sum_{i=1}^n x_i$.
\end{definition}

It is straightforward to check that $\mu(x)\in[0,1]$ for any $x\in\bR^{d\times n}\backslash\{0\}$ and that  $\mu(x)=0$ if and only if $x_1=\cdots=x_n$. Next, we outline the assumptions used in our analysis. The first assumption is that $\sigma=\mathrm{Id}$, which simplifies \eqref{eq:GNN} and \eqref{eq:res_GNN} to linear dynamics.

\begin{assumption}[Activation function]\label{asp:sigma}
    We assume that the activation function $\sigma:\bR\to\bR$ is the identity map, i.e., the dynamics in \eqref{eq:GNN} and \eqref{eq:res_GNN} are linear.
\end{assumption}

The second assumption concerns the aggregation coefficients $P^{(t)}$.

\begin{assumption}[Aggregation coefficients]\label{asp:P}
    We assume that the aggregation coefficients $P^{(t)}\in\bR^{n\times n}$ satisfy the followings:
    \begin{itemize}
        \item[(i)] $P^{(t)} = P$ is a constant matrix in $t\in\mathbb{N}$.
        \item[(ii)] $P$ is a primitive matrix:  $\exists \ k\ge 1$ such that  for any $i,j$,  $P_{ij}\geq 0$ and $P_{ij}^k>0$.
        \item[(iii)] $P_{ij}>0$ if and only if $j\in\calN_i$, and for any $i\in\{1,\dots,n\}$, it holds that
        \begin{equation}\label{eq:P_row_sum}
            \sum_{j=1}^n P_{ij} = \sum_{j\in\calN_i} P_{ij} = 1.
        \end{equation}
    \end{itemize}
\end{assumption}

\Cref{asp:P} states that $P^{(t)} = P\in\bR^{n\times n}$ is a probability transition matrix of an irreducible aperiodic Markov chain on the underlying graph $G$.
Moreover, for a nonnegative matrix $P\geq 0$ with $P_{ij}>0$ if and only if $j\in\calN_i$, the primitivity of $P$, or equivalently the irreducibility and aperiodicity of $P$, can be ensured by specific properties of the graph $G$: namely, $G$ is connected and contains at least one odd-length cycle.

\begin{remark}\label{rmk:spectrum_P}
    Suppose that \Cref{asp:P} holds. By the Perron-Frobenius theorem, the spectral radius of the matrix $P$ is $1$, and $\lambda_1=1$ is an eigenvalue with multiplicity being one. One can see from \eqref{eq:P_row_sum} that $\mathbf{1}=(1,\dots,1)^\top\in\bR^n$ is an eigenvector corresponding to $\lambda_1=1$. Moreover, the magnitudes of all other eigenvalues of $P$ are strictly smaller than $1$, i.e.,
    \begin{equation*}
        |\lambda| < 1,\quad\forall~\lambda\in\mathrm{spec}(P)\backslash\{1\},
    \end{equation*}
    where $\mathrm{spec}(P)\subseteq\bC$ is the spectrum of $P$. Note that we do not assume that $P$ is symmetric as in \cite{dezoort2023principles}, and hence the spectrum of $P$ might be complex.
\end{remark}

We also assume that the weight matrices $W_1, W_2, \ldots$ are i.i.d. from some ensemble.

\begin{assumption}[Weight matrices]\label{asp:W}
    We assume that $W^{(t)},\ t\in\mathbb{N}$ are i.i.d. drawn from a Borel probability measure $\bP_W$ over $\bR^{d\times d}$ that satisfies that
    \begin{equation}\label{eq:logW_L1}
        \bE\left[\max\left\{\log \|W^{(t)}\|_2,0\right\}\right]<+\infty.
    \end{equation}
\end{assumption}

Moreover, we require that the probability distribution $\bP_W$ satisfies some non-degenerate conditions. To state them rigorously, we present the next proposition that will be proved in \Cref{sec:MET}.

\begin{proposition}\label{prop:R_PW}
    Suppose that \Cref{asp:W} holds. The followings are true.
    \begin{itemize}
        \item[(i)] There exists a constant $R(\bP_W)\geq 0$ depending on the probability distribution $\bP_W$, such that
        \begin{equation}\label{eq:R_PW}
            \lim_{t\to+\infty} \norm{W^{(t-1)} \cdots W^{(1)} W^{(0)}}_2^{1/t} = R(\bP_W), \quad \text{a.s.}
        \end{equation} 
        \item[(ii)] For any $\beta\in\bC$, there exists a constant $R(\beta,\bP_W)\geq 0$ depending on $\beta$ and the probability distribution $\bP_W$, such that
        \begin{equation}\label{eq:R_beta_PW}
            \lim_{t\to+\infty} \norm{\left(\mathrm{Id}+\beta W^{(t-1)}\right) \cdots \left( \mathrm{Id}+\beta W^{(1)}\right) \left( \mathrm{Id} + \beta W^{(0)}\right)}_2^{1/t} = R(\beta,\bP_W), \quad \text{a.s.}
        \end{equation}
    \end{itemize}
\end{proposition}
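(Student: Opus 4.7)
The plan is to recognize that both statements are instances of the classical Furstenberg--Kesten theorem on the top Lyapunov exponent of a product of i.i.d.\ matrices, which in turn is a direct consequence of Kingman's subadditive ergodic theorem. For part (i), I would define
\begin{equation*}
S_{s,t} := \log\bigl\| W^{(t-1)} W^{(t-2)} \cdots W^{(s)} \bigr\|_2, \qquad 0 \le s < t,
\end{equation*}
and verify the three hypotheses of Kingman's theorem: (a) \emph{subadditivity}, $S_{s,u} \le S_{s,t} + S_{t,u}$ for $s \le t \le u$, which follows from submultiplicativity of the operator norm; (b) \emph{stationarity} of the family $\{S_{s+k,t+k}\}$ in $k$, which is immediate from the i.i.d.\ assumption on the $W^{(t)}$; and (c) \emph{integrability of the positive part}, $\bE[S_{0,1}^+] = \bE[\max\{\log \|W^{(0)}\|_2, 0\}] < +\infty$, which is exactly \Cref{asp:W}. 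Kingman's theorem then produces a constant $\gamma \in [-\infty, +\infty)$ such that $S_{0,t}/t \to \gamma$ almost surely, and setting $R(\bP_W) := e^{\gamma}$ (with the convention $e^{-\infty} = 0$) delivers~\eqref{eq:R_PW}.

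For part (ii), I would apply the same machinery to the i.i.d.\ sequence $\tilde W^{(t)} := \mathrm{Id} + \beta W^{(t)}$, regarded as elements of $\bC^{d \times d}$ when $\beta \notin \bR$. Subadditivity and stationarity transfer verbatim, so the only item to check is that $\bE[\log^+ \|\tilde W^{(0)}\|_2] < +\infty$. Using $\|\tilde W^{(0)}\|_2 \le 1 + |\beta|\, \|W^{(0)}\|_2$ together with the elementary inequalities $\log(1+x) \le \log 2 + \log^+ x$ for $x \ge 0$ and $\log^+(xy) \le \log^+ x + \log^+ y$, one obtains
\begin{equation*}
\log^+ \|\tilde W^{(0)}\|_2 \;\le\; \log 2 \;+\; \log^+ |\beta| \;+\; \log^+ \|W^{(0)}\|_2,
\end{equation*}
and integrability follows from \Cref{asp:W}. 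Kingman's theorem then produces $\tilde \gamma \in [-\infty, +\infty)$ with $\log\|\tilde W^{(t-1)}\cdots \tilde W^{(0)}\|_2 / t \to \tilde\gamma$ a.s., and setting $R(\beta, \bP_W) := e^{\tilde\gamma}$ yields~\eqref{eq:R_beta_PW}.

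The proof is essentially a verification exercise and contains no substantial obstacle: the analytic content is entirely inside Kingman's theorem, whose multiplicative (Oseledets) refinement the paper has already announced as the backbone of its framework. The only mildly delicate point is that \Cref{asp:W} is a one-sided moment bound on $\log\|W^{(0)}\|_2$, so the limit $\gamma$ (resp.\ $\tilde\gamma$) is a priori allowed to be $-\infty$ and the corresponding $R$ may vanish; this is compatible with the proposition, which only asserts $R \ge 0$. No non-degeneracy or irreducibility hypothesis on $\bP_W$ is needed at this stage, which is appropriate because such hypotheses will only be required later to identify the limit explicitly.
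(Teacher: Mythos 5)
Your proof is correct, and it takes a genuinely different (and more elementary) route than the paper. The paper derives both parts directly from the Oseledets multiplicative ergodic theorem (their \Cref{thm: MET}, quoted from Arnold's book): $\log R(\bP_W)$, resp.\ $\log R(\beta,\bP_W)$, is identified as the top Lyapunov exponent of the linear random dynamical system induced by $A(\omega)=W$, resp.\ $A(\omega)=\mathrm{Id}+\beta W$, with ergodicity supplying the almost-sure constancy. You instead invoke Kingman's subadditive ergodic theorem (equivalently the Furstenberg--Kesten theorem) applied to $S_{s,t}=\log\|W^{(t-1)}\cdots W^{(s)}\|_2$, which delivers precisely the top exponent and nothing more. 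Since the proposition only asserts existence of the limit of the norm, Kingman is sufficient and cleaner; the paper reaches for the full Oseledets machinery here because the filtration, invariance, and eigenspace structure in \Cref{thm: MET} are needed later (e.g.\ in \Cref{prop:leading Lya on invariant subspaces} and the proofs of \Cref{thm:nonres_GNN,thm:res_GNN}), so they simply set it up once and cite it throughout. Your integrability verification for part (ii) is correct and is in substance the same computation the paper performs: both bound $\log\|\mathrm{Id}+\beta W\|_2$ by $\log(1+|\beta|\|W\|_2)$ and then split into a bounded constant plus $\log^+\|W\|_2$, arriving at a finite bound under \Cref{asp:W}. You also correctly note that the limit may be $-\infty$ (so $R=0$), which is why \Cref{asp:R_PW,asp:R_beta_PW} are imposed separately later. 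No gaps.
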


Our last set of assumptions is on the constants in \Cref{prop:R_PW}.

\begin{assumption}\label{asp:R_PW}
    The constant $R(\bP_W)$ defined in \Cref{prop:R_PW} (i) satisfies that $R(\bP_W)>0$.
\end{assumption}

\begin{assumption}\label{asp:R_beta_PW}
    The spectrum of $P$ and the constants in \Cref{prop:R_PW} (ii) satisfy that $\max_{\lambda\in \mathrm{spec}(P)} R(\alpha\lambda,\bP_W) > 0$, where $\alpha > 0$ is defined in~\eqref{eq:res_GNN}.
\end{assumption}

With the assumptions outlined above, we are now prepared to present our main results. The first result characterizes the asymptotic oversmoothing rate of deep GNNs described by \eqref{eq:GNN} without residual connections.

\begin{theorem}[Asymptotic oversmoothing rate of deep non-residual GNNs]\label{thm:nonres_GNN}
    Consider the dynamics $\{x_{\mathrm{nrs}}(t)\}_{t\in\bN}$ generated by \eqref{eq:GNN}, and suppose that Assumptions~\ref{asp:sigma}, \ref{asp:P}, \ref{asp:W}, and \ref{asp:R_PW} hold. With probability one, we have for almost every $x_{\mathrm{nrs}}(0)\in\bR^{d\times n}$  that
    \begin{equation}\label{eq:GNN_rate}
        \lim_{t\to+\infty} \mu(x_{\mathrm{nrs}}{(t)})^{1/2t} = \max_{\lambda\in \mathrm{spec}(P)\backslash \{1\}}|\lambda| < 1.
    \end{equation}
\end{theorem}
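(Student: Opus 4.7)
The strategy is to recast the linear dynamics as a two-sided matrix product, isolate the role of the graph matrix via a Perron--Frobenius decomposition, and combine the multiplicative ergodic theorem with Gelfand-type estimates to obtain matched upper and lower bounds on $\|X(t)\|_F$ and $\|X(t) - \bar X(t)\|_F$, where $X(t) := (x_1(t),\ldots,x_n(t))\in\bR^{d\times n}$.

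\textbf{Setup and key identity.} With $\sigma=\mathrm{Id}$, \eqref{eq:GNN} reads $X(t+1) = W^{(t)} X(t) P^\top$, so $X(t) = \Phi_t X(0) (P^\top)^t$ with $\Phi_t := W^{(t-1)}\cdots W^{(0)}$. Let $\pi\in\bR^n$ be the left Perron eigenvector of $P$ with $\pi^\top \mathbf{1}=1$ and write $P = \mathbf{1}\pi^\top + N$. Then $N\mathbf{1}=0$, $\pi^\top N=0$, so $\mathrm{spec}(N)=(\mathrm{spec}(P)\setminus\{1\})\cup\{0\}$ and $(P^\top)^t = \pi\mathbf{1}^\top + (N^\top)^t$. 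Writing $\lambda_\star := \max_{\lambda\in\mathrm{spec}(P)\setminus\{1\}}|\lambda|$ and $\Pi := I - \tfrac{1}{n}\mathbf{1}\mathbf{1}^\top$, a direct computation using $\bar X(t) = \tfrac1n X(t)\mathbf{1}\mathbf{1}^\top$ yields the crucial identity
\[
X(t) - \bar X(t) \;=\; \Phi_t\, X(0)\, (N^\top)^t\, \Pi.
\]

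\textbf{Upper bounds.} Submultiplicativity and the above identity give $\|X(t)\|_F \leq \|\Phi_t\|_2\|X(0)\|_F\|P^t\|_2$ and $\|X(t)-\bar X(t)\|_F \leq \|\Phi_t\|_2\|X(0)\|_F\|N^t\|_2$. Using \Cref{prop:R_PW} (i) to get $\|\Phi_t\|_2^{1/t}\to R(\bP_W)$ almost surely, boundedness of $\|P^t\|_2$, and Gelfand's formula $\|N^t\|_2^{1/t}\to\lambda_\star$, one immediately obtains $\limsup_t\|X(t)\|_F^{1/t}\leq R(\bP_W)$ and $\limsup_t\|X(t)-\bar X(t)\|_F^{1/t}\leq \lambda_\star R(\bP_W)$ almost surely.

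\textbf{Lower bounds.} For $\|X(t)\|_F$, use $\|X(t)\|_F\geq \|X(t)\mathbf{1}\|_2/\sqrt n$ together with $(P^\top)^t\mathbf{1} = n\pi + (N^\top)^t\mathbf{1}$; the remainder has norm of order $(\lambda_\star R(\bP_W))^{t(1+o(1))}$, subexponentially smaller than the dominant term $n\Phi_t X(0)\pi$. Applying the filtration form of the multiplicative ergodic theorem (which refines \Cref{prop:R_PW} (i)) to the fixed direction $X(0)\pi\in\bR^d$, together with \Cref{asp:R_PW}, yields $\|\Phi_t X(0)\pi\|_2^{1/t}\to R(\bP_W)$ a.s.\ for a.e.\ $X(0)$, hence $\|X(t)\|_F^{1/t}\to R(\bP_W)$. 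For $\|X(t)-\bar X(t)\|_F$, pick any eigenvalue $\lambda$ of $P$ with $|\lambda|=\lambda_\star$ and let $U\subset\bR^n$ be the real invariant subspace of $N^\top$ associated with $\lambda$ (dimension $1$ if $\lambda\in\bR$, dimension $2$ otherwise). From $\mathbf{1}^\top N^\top=0$ and $\lambda\neq 0$ one checks $U\subset\mathbf{1}^\perp$, so $\Pi|_U=\mathrm{id}_U$. Set $M:=N^\top|_U$; this is an invertible operator on $U$ with spectral radius $\lambda_\star$. The key ``pre-image'' trick: for any fixed $w\in U\setminus\{0\}$, define the unit vector $v_t:=M^{-t}w/\|M^{-t}w\|\in U$; then $(N^\top)^t v_t = M^tv_t = w/\|M^{-t}w\|$, giving
\[
\|X(t)-\bar X(t)\|_F \;\geq\; \|(X(t)-\bar X(t))v_t\|_2 \;=\; \frac{\|\Phi_t X(0) w\|_2}{\|M^{-t}w\|}.
\]
Gelfand applied to $M^{-1}$ (eigenvalues all of modulus $1/\lambda_\star$) forces $\|M^{-t}w\|^{1/t}\to 1/\lambda_\star$, while the multiplicative ergodic theorem at the fixed direction $X(0)w$ yields $\|\Phi_t X(0)w\|_2^{1/t}\to R(\bP_W)$ a.s.\ for a.e.\ $X(0)$. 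Hence $\liminf_t\|X(t)-\bar X(t)\|_F^{1/t}\geq \lambda_\star R(\bP_W)$, matching the upper bound; taking the ratio with $\|X(t)\|_F^{1/t}\to R(\bP_W)$ gives $\mu(x_{\mathrm{nrs}}(t))^{1/(2t)}\to \lambda_\star$.

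\textbf{Main obstacle.} The most delicate step is the lower bound on $\|X(t)-\bar X(t)\|_F$ when the sub-dominant eigenvalues of $P$ are complex: then $(N^\top)^t$ acts on the corresponding 2-dimensional invariant subspace as a rotation-scaling, so $(N^\top)^t v$ never stabilizes along a fixed direction and one cannot apply the multiplicative ergodic theorem directly to any single test vector. The pre-image choice $v_t := M^{-t}w/\|M^{-t}w\|$ sidesteps this by ``pre-rotating'' the test vector so that $(N^\top)^t v_t$ always points along the fixed $w$, reducing the lower bound to a single application of MET in the fixed direction $X(0)w$; the rotation cost is bookkept through the scalar denominator $\|M^{-t}w\|$, whose $1/t$-th root is controlled exactly by Gelfand via the spectral radius of $M^{-1}$.
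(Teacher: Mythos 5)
Your proof is correct, and it takes a genuinely different route from the paper's.

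The paper decomposes $\bC^{d\times n}$ along the Jordan chains of $P$ (choosing a basis $\{\varphi_{i,j}\}$ of $\bC^n$ from the Jordan normal form), defines the auxiliary coefficient norm $\|\cdot\|_D$, tracks the exact evolution of each coefficient $y_{i,j}(t)$ via binomial formulas, and reads off the dominant rate from the top-of-chain coefficient $y_{2,1}(t) = \lambda_2^t\,\Phi_W(t,\omega)\,y_{2,1}(0)$ (which has modulus $|\lambda_2|^t\,\|\Phi_W(t,\omega)y_{2,1}(0)\|_2$ even when $\lambda_2$ is complex). It then passes from $\bC^{d\times n}$ back to $\bR^{d\times n}$ via \Cref{lem:proper_subspace_Rm}. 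You instead stay entirely in $\bR$, use the rank-one Perron decomposition $P = \mathbf{1}\pi^\top + N$ to obtain the clean operator identity $X(t)-\bar X(t) = \Phi_t X(0)(N^\top)^t\Pi$, and replace the Jordan bookkeeping with Gelfand's formula for the matching upper bounds. Your pre-image trick ($v_t := M^{-t}w/\|M^{-t}w\|$) is an elegant substitute for the paper's exact coefficient formula: where the paper exploits the Jordan basis to isolate a scalar factor $|\lambda_2|^t$, you pre-rotate the test vector so that the image of $(N^\top)^t$ is along the fixed direction $w$ and bookkeep the cost through the scalar $\|M^{-t}w\|$, whose growth rate is pinned down by applying Gelfand to $M^{-1}$. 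Both arguments then invoke the filtration form of the multiplicative ergodic theorem (\Cref{thm: MET}(iii)) at a fixed direction in $\bR^d$ to obtain the limit $R(\bP_W)$ off a null set of initial conditions. Your approach is more elementary and avoids complexification for this theorem, but the paper's Jordan-block machinery is set up so that it can be reused nearly verbatim for the residual case (\Cref{thm:res_GNN}), where the aggregated rate on each generalized eigenspace $\bC^d\otimes V_i$ must be tracked separately; a Perron-style rank-one split would not isolate those subspaces as cleanly. One small remark: you should note that when $\lambda_\star = \max_{\lambda\in\mathrm{spec}(P)\setminus\{1\}}|\lambda| = 0$ (i.e., $N$ nilpotent), your lower-bound construction has no nonzero $\lambda$ of modulus $\lambda_\star$ to work with, but in that case $X(t)-\bar X(t)=0$ for large $t$ and the claimed limit $0$ holds trivially; the interesting case $\lambda_\star>0$ is the one you handle.
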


\Cref{thm:nonres_GNN} establishes that \(\mu(x_{\mathrm{nrs}}{(t)})\) converges to \(0\) at an exponential rate determined solely by the spectrum of \(P\), independent of the distribution of \(W^{(t)}\). Specifically, the convergence rate is governed by the eigenvalue \(\lambda\) with the second largest magnitude. Notably, \Cref{thm:nonres_GNN} remains valid even when \(P\) has complex eigenvalues. The proof of \Cref{thm:nonres_GNN} are provided in \Cref{section:proof of main GNN}, following an introduction to linear random dynamical systems and to the multiplicative ergodic theorem in \Cref{sec:MET}.

In the special case of GCNs where \(P = D^{-1} A\), the gap \(1 - \max_{\lambda \in \mathrm{spec}(P) \setminus \{1\}} |\lambda|\) corresponds to the second smallest eigenvalue of the normalized graph Laplacian \(\Delta = I - D^{-1/2} A D^{-1/2}\). Moreover, recall that for graphs satisfying \Cref{asp:P}, all eigenvalues of $\Delta$ lie in $[0,2)$, and there is exactly one eigenvalue equal to $0$. Hence, the convergence rate established in \Cref{thm:nonres_GNN} strengthens the upper bounds of \cite{oono2019graph, cai2020note} in the case of an identity activation function.

Our second main theorem characterizes the asymptotic oversmoothing rate of deep residual GNNs as described by \eqref{eq:res_GNN}.

\begin{theorem}[Asymptotic oversmoothing rate of deep residual GNNs]\label{thm:res_GNN}
    Consider the dynamics $\{x_{\mathrm{rs}}(t)\}_{t\in\bN}$ generated by \eqref{eq:res_GNN}, and suppose that Assumptions~\ref{asp:sigma}, \ref{asp:P}, \ref{asp:W}, and \ref{asp:R_beta_PW} hold.
   With probability one, we have for almost every  $x_{\mathrm{rs}}(0)\in\bR^{d\times n}$ that 
    \begin{equation}\label{eq:resGNN_rate_lb}
        \lim_{t\to+\infty} \mu(x_{\mathrm{rs}}(t))^{1/2t} \geq \frac{ \max\limits_{\lambda\in \mathrm{spec}(P)\backslash\{1\}} R(\alpha\lambda,\bP_W)}{ \max\limits_{\lambda\in \mathrm{spec}(P)} R(\alpha\lambda,\bP_W)}.
    \end{equation}
    Moreover, if $P$ is diagonalizable in $\bC$, then the inequality above becomes an equality. 
\end{theorem}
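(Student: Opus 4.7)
The plan is to cast the dynamics in matrix form. Let $X(t) \in \bR^{d\times n}$ collect the vertex features as columns, so that $X(t+1) = X(t) + \alpha W^{(t)} X(t) P^\top$, and the similarity measure becomes
\begin{equation*}
\mu(X) = \frac{\|X(I_n - J_n)\|_F^2}{\|X\|_F^2}, \quad J_n := \frac{1}{n}\mathbf{1}\mathbf{1}^\top.
\end{equation*}
For the equality part, diagonalize $P^\top = U\Lambda U^{-1}$ over $\bC$ with $\Lambda = \mathrm{diag}(\lambda_1,\ldots,\lambda_n)$ and $\lambda_1 = 1$. By \Cref{rmk:spectrum_P}, $\lambda_1 = 1$ is simple and $\mathbf{1}$ is a left eigenvector of $P^\top$, so $U$ may be normalized so that the first row of $U^{-1}$ equals $\mathbf{1}^\top$ and the first column of $U$ is the stationary distribution.

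\textbf{Decoupling and per-mode growth rates.} Setting $Y(t) := X(t) U$ converts the recursion into $Y(t+1) = Y(t) + \alpha W^{(t)} Y(t) \Lambda$, which decouples column-by-column: writing $Y(t) = (y_1(t),\ldots,y_n(t))$ with $y_k(t) \in \bC^d$,
\begin{equation*}
y_k(t+1) = (I_d + \alpha\lambda_k W^{(t)})\,y_k(t).
\end{equation*}
Combining \Cref{prop:R_PW}(ii) with the Oseledets multiplicative ergodic theorem applied to $\prod_s (I_d + \alpha\lambda_k W^{(s)})$, for a.e. realization and a.e. initial $y_k(0) \in \bC^d$ we get $\|y_k(t)\|_2^{1/t} \to R(\alpha\lambda_k,\bP_W)$.

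\textbf{Norm asymptotics and equality.} Let $r_1^\top,\ldots,r_n^\top$ be the rows of $U^{-1}$, so $r_1 = \mathbf{1}$ and $X(t) = \sum_k y_k(t) r_k^\top$. Since $r_1^\top(I_n - J_n) = 0$,
\begin{equation*}
X(t)(I_n - J_n) = \sum_{k=2}^n y_k(t)\,\tilde r_k^\top, \quad \tilde r_k := (I_n - J_n)^\top r_k,
\end{equation*}
and a direct calculation shows $\tilde r_2,\ldots,\tilde r_n$ form a basis of $\{v \in \bC^n : \mathbf{1}^\top v = 0\}$. The Gram matrices of $\{r_k\}$ and of $\{\tilde r_k\}_{k\geq 2}$ are therefore invertible, yielding constants $c_1,c_2>0$ (depending only on $U$) with
\begin{equation*}
c_1 \sum_{k=1}^n \|y_k(t)\|_2^2 \leq \|X(t)\|_F^2 \leq c_2 \sum_{k=1}^n \|y_k(t)\|_2^2,
\end{equation*}
and the analogous sandwich for $\|X(t)(I_n - J_n)\|_F^2$ against $\sum_{k\geq 2}\|y_k(t)\|_2^2$. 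Taking $t$-th roots, $\|X(t)\|_F^{1/t} \to \max_{\lambda \in \mathrm{spec}(P)} R(\alpha\lambda,\bP_W)$ and $\|X(t)(I_n - J_n)\|_F^{1/t} \to \max_{\lambda \in \mathrm{spec}(P)\setminus\{1\}} R(\alpha\lambda,\bP_W)$, and the ratio gives exactly the claimed equality.

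\textbf{General $P$ and main obstacle.} For non-diagonalizable $P$, replace the diagonalization with the Jordan decomposition $P^\top = UJU^{-1}$; the $Y$-dynamics become triangular within each Jordan block. The leading column of each block, corresponding to a genuine eigenvector, still satisfies the homogeneous recursion $(I_d + \alpha\lambda W^{(t)})$ and grows at rate $R(\alpha\lambda,\bP_W)$, while the forced columns grow at most at the same rate by a Duhamel bound. Using a single eigenvector mode per eigenvalue $\lambda \neq 1$ to lower-bound $\|X(t)(I_n - J_n)\|_F^{1/t}$, and the triangle inequality to upper-bound $\|X(t)\|_F^{1/t}$, yields the stated inequality. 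The main technical hurdle is upgrading the operator-norm convergence in \Cref{prop:R_PW}(ii) to the pointwise growth rate $\|y_k(t)\|_2^{1/t} \to R(\alpha\lambda_k,\bP_W)$ for a.e. initial $y_k(0)$, which requires the Oseledets filtration; a secondary subtlety is verifying that a.e. real $X(0)$ simultaneously avoids the exceptional slow-growth subspaces across all modes.
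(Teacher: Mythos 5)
Your diagonalizable-case argument is essentially the same as the paper's, rephrased in matrix rather than tensor-product language: conjugating by the (left) eigenbasis of $P$ decouples \eqref{eq:res_GNN} into independent one-dimensional $\bC^d$-cocycles $y_k\mapsto(I_d+\alpha\lambda_k W^{(t)})y_k$, whose growth rates are the $R(\alpha\lambda_k,\bP_W)$ from \Cref{prop:R_PW}(ii), and norm equivalence transfers these to $\|X(t)\|_F$ and $\|X(t)(I_n-J_n)\|_F$. The two secondary issues you flag (pointwise growth via the Oseledets filtration, and genericity of the real initial condition) are exactly the ones the paper handles via \Cref{thm: MET}(iii) and \Cref{lem:proper_subspace_Rm}, so you are in good shape there.

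For non-diagonalizable $P$, however, there is a real gap. Your plan lower-bounds the numerator $\|X(t)(I_n-J_n)\|_F^{1/t}$ via the eigenvector modes, then upper-bounds the denominator $\|X(t)\|_F^{1/t}$ by the triangle inequality plus the claim that each forced Jordan column grows at rate at most $R(\alpha\lambda,\bP_W)$ ``by a Duhamel bound.'' That claim requires a uniform-in-$s$ control of $\bigl\|\prod_{r=s+1}^{t-1}(I_d+\alpha\lambda W^{(r)})\bigr\|$, which does \emph{not} follow from the a.s.\ convergence of $\|\Phi_\lambda(t,0)\|^{1/t}$; you would need a tempered-type estimate (or the fact that the Lyapunov spectrum of a block-triangular cocycle equals the union of the diagonal spectra), neither of which is in your sketch. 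The paper avoids the issue entirely. It applies \Cref{prop:leading Lya on invariant subspaces} to the invariant subspaces $\bC^d\otimes V_i$ to obtain the \emph{exact} Lyapunov exponents
\begin{equation*}
\lim_{t\to\infty}\tfrac1t\log\|X(t)\|_F=\max_i\nu_1(\bC^d\otimes V_i),\qquad
\lim_{t\to\infty}\tfrac1t\log\|\widetilde X(t)\|_F=\max_{i\geq2}\nu_1(\bC^d\otimes V_i),
\end{equation*}
and then only needs the one-sided bound $\nu_1(\bC^d\otimes V_i)\geq\log R(\alpha\lambda_i,\bP_W)$ (from the eigenvector mode) together with the exact value $\nu_1(\bC^d\otimes V_1)=\log R(\alpha,\bP_W)$, which holds because $\lambda_1=1$ is simple so $V_1$ is one-dimensional. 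With this structure, an excess $\nu_1(\bC^d\otimes V_i)>\log R(\alpha\lambda_i,\bP_W)$ for some $i\geq2$ can only \emph{increase} the exact ratio, so the stated inequality survives; whereas your triangle-inequality upper bound on the denominator would simply be false in that scenario. To repair your proof you should either establish the tempered Duhamel estimate, or replace the triangle-inequality step by the paper's exact Lyapunov-exponent computation on the invariant subspaces and argue as above that only the lower bound on $\nu_1(\bC^d\otimes V_i)$ for $i\geq 2$ and the exact value for $i=1$ are needed.
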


\Cref{thm:res_GNN} establishes a lower bound for \(\lim_{t \to +\infty} \mu(x_{\mathrm{rs}}(t))^{1/2t}\), which depends on the distribution \(\mathbb{P}_W\), the spectrum of \(P\), and the step size \(\alpha\) in the residual connection. This lower bound is achieved when \(P\) is diagonalizable in \(\mathbb{C}\). The proof of \Cref{thm:res_GNN} is presented in \Cref{section:proof of main res GNN} and is also based on linear random dynamical systems and the multiplicative ergodic theorem.

From \Cref{thm:nonres_GNN} and \Cref{thm:res_GNN}, one can conclude that the oversmoothing issue is mitigated if the lower bound in \eqref{eq:resGNN_rate_lb} is strictly greater than \(\max_{\lambda \in \mathrm{spec}(P) \setminus \{1\}} |\lambda|\). Below, we discuss several commonly used probability distributions \(\mathbb{P}_W\) for which the lower bound in \eqref{eq:resGNN_rate_lb} can be explicitly computed or estimated. In these cases, it is either strictly greater than \(\max_{\lambda \in \mathrm{spec}(P) \setminus \{1\}} |\lambda|\) or equal to \(1\).

To conclude this section, we note that if we do not assume the row sums of \(P\) are equal to 1, as in \eqref{eq:P_row_sum}, our theory still holds for a modified vertex similarity measure, given by
\begin{equation}\label{eq2:vertex_sim}
    \mu(x) = \frac{\|x - x \pi_1 \pi_1^\top\|_F^2}{\|x\|_F^2},
\end{equation}
where \(\|\cdot\|_F\) denotes the Frobenius norm, and \(\pi_1 \in \mathbb{R}^n\) is the eigenvector corresponding to the leading eigenvalue of \(P\), with \(\|\pi_1\|_2 = 1\) and strictly positive entries, as guaranteed by the Perron-Frobenius theorem. The proofs almost follow the same lines. If \eqref{eq:P_row_sum} is assumed, then \(\pi_1 = \frac{1}{\sqrt{n}} (1, \dots, 1)^\top\), and \eqref{eq2:vertex_sim} coincides with \eqref{eq:vertex_sim}. If the entries of $\pi_1$ are not identical, then $\mu(x)=0$ does not imply that $x_1, \ldots, x_n$ are equal but rather that they all lie on a one-dimensional subspace of $\bR^d$.

\subsection{Weight matrices}
\label{sec:application}

Comparing Theorems~\ref{thm:nonres_GNN} and~\ref{thm:res_GNN} indicates that the distribution $\bP_W$ of the weight matrix controls the oversmoothing of residual GNNs whereas it has no impact on their nonresidual counterparts. In this section, we examine the effect of this distribution in  more details in the context of specific choices for $\bP_W$. 

In all the cases considered here indicate that residual connections effectively mitigate oversmoothing over a wide class of distributions. All the proofs are deferred to \Cref{sec:pf_application}.

\subsubsection{Deterministic}
We begin by considering the case where $W^{(t)}=W$ is deterministic, that is where $\p_W$ is a point mass at $W \in \R^{d \times d}$. In this case, equality holds in  \eqref{eq:resGNN_rate_lb} even if $\bP$ is not diagonalizable in $\bC$.

\begin{theorem}\label{thm:deterministic_W}
     Suppose that Assumptions~\ref{asp:sigma}, \ref{asp:P}, \ref{asp:W}, and \ref{asp:R_beta_PW} hold with $W^{(t)} = W$ almost surely, where $W$ is a matrix with constant entries. Let $\{x_{\mathrm{rs}}(t)\}_{t\in\bN}$ be   generated by \eqref{eq:res_GNN}. Then we have for almost every $x_{\mathrm{rs}}(0)\in\bR^{d\times n}$ that 
    \begin{equation*}
        \lim_{t\to+\infty} \mu(x_{\mathrm{rs}}{(t)})^{1/2t} = \frac{\max\limits_{\lambda\in \mathrm{spec}(P)\backslash\{1\}} \rho(\mathrm{Id}+\alpha\lambda W)}{\max\limits_{\lambda\in \mathrm{spec}(P)} \rho(\mathrm{Id}+\alpha\lambda W)},
    \end{equation*}
    where $\rho(A) = \max\limits_{\lambda\in\mathrm{spec}(A)}|\lambda|$ is the spectral radius of a matrix $A$.
\end{theorem}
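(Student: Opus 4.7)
The plan is to exploit the fact that, with $W^{(t)} \equiv W$ deterministic, the residual dynamics in \eqref{eq:res_GNN} become fully deterministic, so no invocation of the multiplicative ergodic theorem is needed and the entire argument reduces to Gelfand's spectral-radius formula on appropriate invariant subspaces. First I would rewrite the dynamics as a linear iteration on $\bR^{nd}$ via vectorization. Setting $X(t) = [x_1(t), \ldots, x_n(t)] \in \bR^{d \times n}$, the update reads $X(t+1) = X(t) + \alpha W X(t) P^\top$, which under $y(t) := \mathrm{vec}(X(t))$ becomes $y(t) = M^t y(0)$ with $M = I_{nd} + \alpha P \otimes W$. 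With $J = I_n - \tfrac{1}{n}\mathbf{1}\mathbf{1}^\top$ and $K = J \otimes I_d$, the similarity measure takes the form $\mu(X(t)) = \|K M^t y(0)\|_2^2 / \|M^t y(0)\|_2^2$, so the theorem reduces to comparing two powered norms of the same deterministic linear dynamics.

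Next I would identify the $M$-invariant splitting induced by Perron-Frobenius. Since $\lambda_1 = 1$ is a simple eigenvalue of $P$ with eigenvector $\mathbf{1}$, one has $\bC^n = V_1 \oplus V_P$ with $V_1 = \mathrm{span}(\mathbf{1})$ and $V_P$ the complementary $P$-invariant subspace (the sum of generalized eigenspaces of $P$ for $\mathrm{spec}(P) \setminus \{1\}$). Correspondingly $\bC^{nd} = E_1 \oplus E_P$ for $E_1 = V_1 \otimes \bC^d$ and $E_P = V_P \otimes \bC^d$, both $M$-invariant, with $M|_{E_1}$ similar to $I_d + \alpha W$ and $M|_{E_P}$ similar to $I + \alpha (P|_{V_P}) \otimes W$. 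Standard spectral properties of Kronecker products then give
\begin{equation*}
    \rho(M) = \max_{\lambda \in \mathrm{spec}(P)} \rho(I + \alpha \lambda W), \qquad \rho(M|_{E_P}) = \max_{\lambda \in \mathrm{spec}(P) \setminus \{1\}} \rho(I + \alpha \lambda W),
\end{equation*}
which are precisely the denominator and numerator appearing in the claim.

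The key geometric observation is that $K$ annihilates $E_1$ (since $J \mathbf{1} = 0$) yet is injective on $E_P$ (since $\ker J = V_1$ meets $V_P$ trivially), so there exists $c > 0$ with $c \|z\| \le \|K z\| \le \|z\|$ for every $z \in E_P$. Decomposing $y(0) = y_1 + y_P$ along $E_1 \oplus E_P$ and using $M$-invariance, one obtains $K M^t y(0) = K M^t y_P$, and the two-sided bound above forces $\|K M^t y(0)\|^{1/t}$ and $\|M^t y_P\|^{1/t}$ to share the same limit. For almost every $y(0)$, both $y(0)$ and its $E_P$-projection $y_P$ have nonzero components in the top-modulus generalized eigenspaces of $M$ and of $M|_{E_P}$ respectively; the pointwise Gelfand identity $\|A^t z\|^{1/t} \to \rho(A|_U)$ (with $U$ the smallest $A$-invariant subspace containing $z$) then yields $\|M^t y(0)\|^{1/t} \to \rho(M)$ and $\|M^t y_P\|^{1/t} \to \rho(M|_{E_P})$. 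Taking the ratio produces the stated limit.

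I expect the main obstacle to be bookkeeping around non-diagonalizability: since neither $P$ nor $W$ is assumed diagonalizable, $M$ may carry nontrivial Jordan blocks, which is exactly the regime left open by \Cref{thm:res_GNN}. The pointwise Gelfand asymptotic still holds in this setting, but one must verify that the generic-initial-condition requirement defines an event of full Lebesgue measure in $\bR^{d \times n}$. This is routine: each requirement excludes only a proper linear subspace, and a finite union of such subspaces is Lebesgue-null. Everything else reduces to standard linear algebra on tensor-product invariant subspaces.
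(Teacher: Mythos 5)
Your proposal is correct, and it takes a genuinely different (and arguably more self-contained) route than the paper's proof, even though both ultimately rest on Gelfand's formula and the invariant-subspace structure coming from the spectral decomposition of $P$.

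The paper works inside the tensor-product framework $x(t) = \sum_{i,j} y_{i,j}(t)\otimes\varphi_{i,j}$ from \S 3.2, treats each Jordan block $\bC^d\otimes V_i$ of $P$ as a separate invariant subspace, and upgrades the generic lower bound $\nu_1(\bC^d\otimes V_i)\ge\log\rho(\mathrm{Id}+\alpha\lambda_i W)$ from the proof of Theorem~\ref{thm:res_GNN} to an equality by writing out $x(t)$ explicitly: it expands $\left(\mathrm{Id}+\alpha W\cdot P^\top\right)^t$ via a binomial identity $\binom{t}{k}\binom{k}{s-j}=\binom{t}{s-j}\binom{t-(s-j)}{k-(s-j)}$, isolates a polynomial-in-$t$ prefactor times $\|(\mathrm{Id}+\alpha\lambda_i W)^{t-n_i}\|_2$, and then applies Gelfand. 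The result is fed into Lemma~\ref{lemma:leading part Lya res GNN}, which itself sits on top of the multiplicative ergodic theorem machinery (Proposition~\ref{prop:leading Lya on invariant subspaces}).

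You instead vectorize to $y(t)=M^t y(0)$ with $M=I_{nd}+\alpha P\otimes W$, split $\bC^{nd}=E_1\oplus E_P$ into only two $M$-invariant blocks, read off $\rho(M)=\max_\lambda\rho(\mathrm{Id}+\alpha\lambda W)$ and $\rho(M|_{E_P})=\max_{\lambda\neq 1}\rho(\mathrm{Id}+\alpha\lambda W)$ from the Kronecker spectral theorem, and apply the pointwise Gelfand asymptotic $\|A^t z\|^{1/t}\to\rho(A|_{U(z)})$ on each block, with the comparison $c\|z\|\le\|Kz\|\le\|z\|$ on $E_P$ translating between $\mu$ and the powered norms. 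This sidesteps the per-Jordan-block binomial bookkeeping, uses the Kronecker spectrum rather than re-deriving it on each $\bC^d\otimes V_i$, and never touches the multiplicative ergodic theorem, which is appropriate since the system is deterministic. The only care required is verifying that the genericity condition (nonvanishing top spectral component of $y(0)$ in $\bC^{nd}$ and of its $E_P$-projection) cuts out a finite union of proper complex subspaces whose real traces are still proper, which is exactly the content of Lemma~\ref{lem:proper_subspace_Rm}; you correctly flag this. Both proofs are sound; the paper's is the natural specialization of its general machinery, while yours is leaner and more elementary for the deterministic case.
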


The following theorem characterizes several cases with deterministic $W^{(t)}$ in which the oversmoothing problem is provably mitigated or prevented.

\begin{theorem}\label{thm2:deterministic_W}
    Suppose that Assumptions~\ref{asp:sigma}, \ref{asp:P}, \ref{asp:W}, \ref{asp:R_PW}, and \ref{asp:R_beta_PW} hold
    with $W^{(t)} = W$ almost surely, and in addition that $P$ has a real spectrum.
    Let $\{x_{\mathrm{nrs}}(t)\}_{t\in\bN}$ and $\{x_{\mathrm{rs}}(t)\}_{t\in\bN}$ be   generated by \eqref{eq:GNN} and \eqref{eq:res_GNN}, respectively. We also assume that $\rho(I+\alpha W)$ is achieved by $|1+\alpha\mu_0|$ for  $\mu_0 \in\mathrm{spec}(W)$.

    \begin{itemize}
        \item[(i)] If $\max\limits_{\lambda\in \mathrm{spec}(P)\backslash \{1\}}|\lambda|$ is achieved by $\lambda_0\in \mathrm{spec}(P)\backslash \{1\}$ and $1+\lambda_0(2\alpha\mathrm{Re}(\mu_0)+1)>0$, then for almost every $x_{\mathrm{nrs}}(0)\in\bR^{d\times n}$ and $x_{\mathrm{rs}}(0)\in\bR^{d\times n}$,
        \begin{equation*}
            \lim_{t\to+\infty} \mu(x_{\mathrm{rs}}{(t)})^{1/2t} > \max_{\lambda\in \mathrm{spec}(P)\backslash \{1\}}|\lambda| = \lim_{t\to+\infty} \mu(x_{\mathrm{nrs}}{(t)})^{1/2t}.
        \end{equation*}
        \item[(ii)] If $2\mathrm{Re}(\mu_0)+ \alpha (\lambda+1)|\mu_0|^2\leq 0$ for some $\lambda\in \mathrm{spec}(P)\backslash \{1\}$, then for almost every $x_{\mathrm{rs}}(0)\in\bR^{d\times n}$,
        \begin{equation*}
            \lim_{t\to+\infty} \mu(x_{\mathrm{rs}}{(t)})^{1/2t} = 1.
        \end{equation*}
    \end{itemize}
\end{theorem}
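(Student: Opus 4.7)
My plan is to reduce both parts to two elementary algebraic inequalities between $|1+\alpha\lambda\mu_0|$ and $|1+\alpha\mu_0|$, by plugging the eigenvalue $\mu_0$ into the exact formula of \Cref{thm:deterministic_W}. Combining \Cref{thm:nonres_GNN,thm:deterministic_W} gives
\[
\lim_{t\to+\infty}\mu(x_{\mathrm{nrs}}(t))^{1/2t}=|\lambda_0|,\qquad \lim_{t\to+\infty}\mu(x_{\mathrm{rs}}(t))^{1/2t}=\frac{N}{D},
\]
where $N:=\max_{\lambda\in\mathrm{spec}(P)\setminus\{1\}}\rho(\mathrm{Id}+\alpha\lambda W)$ and $D:=\max_{\lambda\in\mathrm{spec}(P)}\rho(\mathrm{Id}+\alpha\lambda W)$. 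Since $W$ is constant, $\mathrm{spec}(\mathrm{Id}+\alpha\lambda W)=\{1+\alpha\lambda\mu:\mu\in\mathrm{spec}(W)\}$, so $\rho(\mathrm{Id}+\alpha\lambda W)\ge|1+\alpha\lambda\mu_0|$ for all $\lambda\in\bR$, with equality at $\lambda=1$ by hypothesis. In particular $D=\max\{N,\,|1+\alpha\mu_0|\}$, and the remainder of the argument uses only the identity $|1+\alpha\lambda\mu_0|^2=1+2\alpha\lambda\,\mathrm{Re}(\mu_0)+\alpha^2\lambda^2|\mu_0|^2$ for real $\lambda$.

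For part (i), I would split on which term dominates in $D$. If $D>|1+\alpha\mu_0|$, then $D=N$ and $N/D=1>|\lambda_0|$ trivially. Otherwise $D=|1+\alpha\mu_0|$, and combining $N\ge|1+\alpha\lambda_0\mu_0|$ with the factorization
\[
|1+\alpha\lambda_0\mu_0|^2-\lambda_0^2|1+\alpha\mu_0|^2=(1-\lambda_0)\bigl(1+\lambda_0(1+2\alpha\,\mathrm{Re}(\mu_0))\bigr)
\]
yields $N/D\ge|1+\alpha\lambda_0\mu_0|/|1+\alpha\mu_0|>|\lambda_0|$, since both factors on the right-hand side are strictly positive: $1-\lambda_0>0$ because $\lambda_0\in(-1,1)$ by \Cref{rmk:spectrum_P} and the assumption $\mathrm{spec}(P)\subset\bR$, and the second factor is positive by hypothesis.

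For part (ii), it suffices to show $N\ge|1+\alpha\mu_0|$, since this forces $N\ge D$ and hence $N/D=1$. Taking the $\lambda\in\mathrm{spec}(P)\setminus\{1\}$ guaranteed by the hypothesis and using $N\ge|1+\alpha\lambda\mu_0|$, the task reduces to $|1+\alpha\lambda\mu_0|\ge|1+\alpha\mu_0|$, which follows from the companion factorization
\[
|1+\alpha\lambda\mu_0|^2-|1+\alpha\mu_0|^2=\alpha(\lambda-1)\bigl[2\,\mathrm{Re}(\mu_0)+\alpha(\lambda+1)|\mu_0|^2\bigr]
\]
combined with $\alpha>0$, $\lambda-1<0$, and the sign hypothesis on the bracket.

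The whole argument is driven by the two factorizations above, both of which are elementary once one knows $\mathrm{spec}(P)\subset\bR$. I do not expect any substantive obstacle beyond the small case split at $\lambda=1$ in (i); the real-spectrum assumption is precisely what turns $|1+\alpha\lambda\mu_0|^2$ into a clean real quadratic in $\lambda$, and is the reason this deterministic setting is more transparent than the general residual case of \Cref{thm:res_GNN}.
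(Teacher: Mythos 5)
Your proposal is correct and takes essentially the same route as the paper: both invoke \Cref{thm:deterministic_W} and \Cref{thm:nonres_GNN} to reduce the claim to a comparison of spectral radii, then conclude via the same two quadratic factorizations of $|1+\alpha\lambda_0\mu_0|^2-\lambda_0^2|1+\alpha\mu_0|^2$ and $|1+\alpha\lambda\mu_0|^2-|1+\alpha\mu_0|^2$. If anything, your write-up is slightly more careful in keeping $|\lambda_0|$ (rather than $\lambda_0$) throughout part (i), but the underlying argument is identical.
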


We remark that the condition that $P$ has a real spectrum holds for GCNs with $P = D^{-1} A$ since $\mathrm{spec}(D^{-1} A) = \mathrm{spec}(D^{-1/2} A D^{-1/2}) \subset (-1,1]$. 

\subsubsection{Ginibre ensemble} Next, consider the case where the entries of \(W^{(t)}\) are i.i.d. Gaussian with a mean of 0 and variance \(\tau^2\) (Ginibre ensemble) as in~\cite{dezoort2023principles}. The following theorem provides a characterization of the lower bound in \eqref{eq:resGNN_rate_lb} in terms of one-dimensional normal distribution and chi-squared distribution.

\begin{theorem}\label{thm:gaussian_W}
    Suppose that Assumptions~\ref{asp:sigma}, \ref{asp:P}, \ref{asp:W}, and \ref{asp:R_beta_PW} hold with entries of $W^{(t)}$ i.i.d. drawn from $\calN(0,\tau^2)$, and in addition that $P$ has a real spectrum. Let $\{x_{\mathrm{rs}}(t)\}_{t\in\bN}$ be   generated by \eqref{eq:res_GNN}. Then with probability one, for almost every $x_{\mathrm{rs}}(0)\in\bR^{d\times n}$,
    \begin{equation}\label{eq:gaussian_W}
        \lim_{t\to+\infty} \mu(x_{\mathrm{rs}}(t))^{1/2t} \geq \frac{ \max\limits_{\lambda\in \mathrm{spec}(P)\backslash\{1\}} \exp\left(\frac{1}{2} \bE \log \left((1  + \alpha\lambda\tau \xi) ^2 + \alpha^2\lambda^2\tau^2 \chi_{d-1} ^2 \right)\right)}{ \max\limits_{\lambda\in \mathrm{spec}(P)} \exp\left(\frac{1}{2} \bE \log \left((1  + \alpha\lambda\tau \xi) ^2 + \alpha^2\lambda^2\tau^2 \chi_{d-1} ^2 \right)\right)},
    \end{equation}
    where $\xi\sim\calN(0,1)$ is a one-dimensional standard real Gaussian distribution, and $\chi^2_{d-1}$ is a chi-squared random variable with $d-1$ degrees of freedom that is independent of $\xi$. Additionally, if $P$ is diagonalizable in $\bC$, then the equality in \eqref{eq:gaussian_W} holds.
\end{theorem}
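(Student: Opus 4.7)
The plan is to reduce to Theorem~\ref{thm:res_GNN} by explicitly computing $R(\alpha\lambda, \bP_W)$ using the orthogonal invariance of the Ginibre ensemble. Since $P$ has real spectrum by assumption, $\alpha\lambda \in \bR$ for every $\lambda \in \mathrm{spec}(P)$, so it suffices to compute $R(\beta, \bP_W)$ for real $\beta$. The starting point is the distributional identity: if $W$ has i.i.d.\ $\calN(0, \tau^2)$ entries and $u \in \bR^d$ is any fixed unit vector, then $Wu \sim \calN(0, \tau^2 I_d)$, and decomposing $Wu$ into its component along $u$ and its component in $u^\perp$ yields
\[
\|(I + \beta W)u\|_2^2 \stackrel{d}{=} (1 + \beta\tau\xi)^2 + \beta^2 \tau^2 \chi_{d-1}^2,
\]
where $\xi \sim \calN(0,1)$ and $\chi_{d-1}^2$ are independent. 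Crucially, this distribution does not depend on $u$.

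Next, set $A_i := I + \beta W^{(i)}$ and, for an arbitrary fixed initial unit vector $v$, define $\bar{v}_{-1} := v$ and $\bar{v}_i := A_i \bar{v}_{i-1}/\|A_i \bar{v}_{i-1}\|_2$. Telescoping gives
\[
\log \|A_{t-1} \cdots A_0 v\|_2 = \sum_{i=0}^{t-1} \log \|A_i \bar{v}_{i-1}\|_2.
\]
I claim the summands are i.i.d. Since $\bar{v}_{i-1}$ is measurable with respect to the $\sigma$-algebra generated by $A_0, \ldots, A_{i-1}$ and therefore independent of $A_i$, conditioning on this $\sigma$-algebra turns $\bar{v}_{i-1}$ into a fixed unit vector, and the identity above shows that the conditional distribution of $\log\|A_i \bar{v}_{i-1}\|_2$ does not depend on $\bar{v}_{i-1}$. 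Hence each summand is independent of the past and shares the common law of $\log\|A_0 e_1\|_2$. The strong law of large numbers then yields, for every fixed unit $v$,
\[
\lim_{t \to \infty} \tfrac{1}{t} \log\|A_{t-1} \cdots A_0 v\|_2 = \mu(\beta) := \tfrac{1}{2}\bE \log\bigl((1 + \beta\tau\xi)^2 + \beta^2 \tau^2 \chi_{d-1}^2\bigr) \quad \text{a.s.}
\]

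To convert this into an operator-norm statement, I invoke the multiplicative ergodic theorem used in the proof of \Cref{prop:R_PW}: there exist Lyapunov exponents $\gamma_1 \geq \cdots \geq \gamma_d$ and a random proper subspace $V_2 \subsetneq \bR^d$ such that $\lim_{t \to \infty} \tfrac{1}{t}\log\|A_{t-1} \cdots A_0\|_2 = \gamma_1 = \log R(\beta, \bP_W)$ almost surely, while $\lim_{t \to \infty} \tfrac{1}{t}\log\|A_{t-1} \cdots A_0 v\|_2 = \gamma_1$ for $v \notin V_2$; moreover, the law of $V_2$ inherits the orthogonal invariance of $A_i$, so that $\bP(v \in V_2) = 0$ for every fixed $v$. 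Since the latter limit therefore equals $\mu(\beta)$ almost surely for every fixed $v$, we must have $\gamma_1 = \mu(\beta)$, i.e., $R(\beta, \bP_W) = e^{\mu(\beta)}$. Substituting $\beta = \alpha\lambda$ into Theorem~\ref{thm:res_GNN} delivers \eqref{eq:gaussian_W}, and the equality case when $P$ is diagonalizable over $\bC$ is inherited directly from Theorem~\ref{thm:res_GNN}. The main obstacle is pinning down the i.i.d.\ structure of the telescoped summands, which relies crucially on the \emph{full} orthogonal invariance of the Ginibre ensemble — namely that $Wu$ has the same Gaussian law for every unit $u$ — and this is precisely what yields the clean closed-form expression for $\mu(\beta)$ and sets the Ginibre case apart from a generic Lyapunov computation.
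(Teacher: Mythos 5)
Your proof is correct and arrives at the same closed-form expression for $R(\alpha\lambda,\bP_W)$, but it takes a genuinely more self-contained route. The paper proves \Cref{thm:gaussian_W} by first establishing Proposition~\ref{prop: iid Gaussian W}: it verifies that $\|(\mathrm{Id}+\beta W^{(t)})x\|_2$ has a distribution independent of $x\in\mathbb{S}^{d-1}$ (by showing that the law of a Ginibre matrix is invariant under orthogonal conjugation), and then invokes Proposition~2.1 of \cite{Cohen84} (quoted as Theorem~\ref{thm: explicit Lya exponent}) as a black box to conclude $\log R(\beta,\bP_W)=\bE\log\|(\mathrm{Id}+\beta W^{(t)})e_1\|_2$. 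You, by contrast, re-derive this Cohen-type identity from scratch: you exploit the same distributional identity $\|(I+\beta W)u\|_2^2 \stackrel{d}{=} (1+\beta\tau\xi)^2+\beta^2\tau^2\chi_{d-1}^2$, then run the classical Furstenberg--Kesten telescoping argument (the summands $\log\|A_i\bar v_{i-1}\|_2$ are i.i.d.\ because the conditional law given the past is the fixed law of $\log\|A_0e_1\|_2$), apply the strong law of large numbers, and finally upgrade the fixed-vector limit to the operator-norm limit via the multiplicative ergodic theorem. Both proofs hinge on exactly the orthogonal invariance of the Ginibre ensemble; yours trades a citation for a short, elementary derivation, at the cost of being slightly longer.

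One small remark on your last step: invoking the orthogonal invariance of the Oseledets subspace $V_2(\omega)$ to conclude $\bP(v\in V_2)=0$ is correct (integrate the indicator over the sphere and use Fubini, since $V_2$ has measure zero on $\mathbb{S}^{d-1}$), but it is slightly more machinery than needed. A cleaner route is to apply your SLLN conclusion to $d$ linearly independent test vectors $v_1,\dots,v_d$ simultaneously: almost surely, the limit equals $\mu(\beta)$ for all of them, yet $V_2(\omega)$ is a proper subspace, so at least one $v_j\notin V_2(\omega)$, forcing $\gamma_1=\mu(\beta)$ without any appeal to the distribution of $V_2$. Either way, the argument is sound and the reduction to Theorem~\ref{thm:res_GNN} (including the equality case when $P$ is diagonalizable) matches the paper.
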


With \eqref{eq:gaussian_W}, \(\lim_{t \to +\infty} \mu(x_{\mathrm{rs}}(t))^{1/2t}\) can be shown as strictly larger than \(\lim_{t \to +\infty} \mu(x_{\mathrm{nrs}}(t))^{1/2t}\), for any $\alpha>0$ and $\tau>0$.

\begin{theorem}\label{thm2:gaussian_W}
    Suppose that Assumptions~\ref{asp:sigma}, \ref{asp:P}, \ref{asp:W}, \ref{asp:R_PW}, and \ref{asp:R_beta_PW} hold with entries of $W^{(t)}$ i.i.d. drawn from $\calN(0,\tau^2)$, and in addition that $P$ has a real spectrum. Let $\{x_{\mathrm{nrs}}(t)\}_{t\in\bN}$ and $\{x_{\mathrm{rs}}(t)\}_{t\in\bN}$ be   generated by \eqref{eq:GNN} and \eqref{eq:res_GNN}, respectively. It holds for all $\alpha>0$ and $\tau>0$ that, with probability one, for almost every $x_{\mathrm{nrs}}(0)\in\bR^{d\times n}$ and $x_{\mathrm{rs}}(0)\in\bR^{d\times n}$,
    \begin{equation*}
        \lim_{t\to+\infty} \mu(x_{\mathrm{rs}}{(t)})^{1/2t} > \max_{\lambda\in \mathrm{spec}(P)\backslash \{1\}}|\lambda| = \lim_{t\to+\infty} \mu(x_{\mathrm{nrs}}{(t)})^{1/2t}.
    \end{equation*}
\end{theorem}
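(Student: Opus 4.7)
The plan is to deduce the strict inequality from the lower bound provided by Theorem~\ref{thm:gaussian_W} together with the exact rate supplied by Theorem~\ref{thm:nonres_GNN}. Define
\[
F(s) := \exp\!\left(\tfrac{1}{2}\,\bE\log\!\left((1+s\xi)^2 + s^2\chi_{d-1}^2\right)\right), \qquad s \in \bR,
\]
which is even in $s$ by the symmetry of $\xi$. Set $|\lambda_0| := \max_{\lambda \in \mathrm{spec}(P)\setminus\{1\}}|\lambda| < 1$, where strictness follows from Perron-Frobenius. Theorem~\ref{thm:nonres_GNN} gives $\lim_{t\to\infty}\mu(x_{\nrs}(t))^{1/2t} = |\lambda_0|$, and Theorem~\ref{thm:gaussian_W} gives
\[
\lim_{t\to\infty} \mu(x_{\rs}(t))^{1/2t} \;\geq\; \frac{\max_{\lambda\in\mathrm{spec}(P)\setminus\{1\}} F(\alpha|\lambda|\tau)}{\max_{\lambda\in\mathrm{spec}(P)} F(\alpha|\lambda|\tau)}.
\]
It therefore suffices to show that this ratio strictly exceeds $|\lambda_0|$.

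The technical heart of the argument is the claim that $s \mapsto F(s)/s$ is \emph{strictly decreasing} on $(0,\infty)$. Rewriting $\log F(s) - \log s = \tfrac{1}{2}\bE\log\!\left((1/s + \xi)^2 + \chi_{d-1}^2\right)$ and setting $t = 1/s$, this reduces to showing that $g(t) := \bE\log((t+\xi)^2 + \chi_{d-1}^2)$ is strictly increasing for $t > 0$. Differentiating under the expectation (justified by standard dominated convergence bounds when $d \geq 2$), conditioning on $\chi_{d-1}^2 = c$, and substituting $u = t + \xi$ splits the Gaussian integral as
\[
\bE_\xi \frac{t+\xi}{(t+\xi)^2 + c} \;=\; \int_0^\infty \frac{u}{u^2 + c}\bigl[\phi(u - t) - \phi(u + t)\bigr]\,du,
\]
where $\phi$ denotes the standard Gaussian density and we used $\phi(-\xi) = \phi(\xi)$. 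For $t > 0$ and $u > 0$, $(u-t)^2 < (u+t)^2$ gives $\phi(u-t) > \phi(u+t)$, so the integrand is strictly positive, yielding $g'(t) > 0$.

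With the monotonicity in hand, the proof concludes by a case distinction on where $\lambda \mapsto F(\alpha|\lambda|\tau)$ attains its maximum on $\mathrm{spec}(P)$. If this maximum is attained at some $\lambda^* \neq 1$, then $\lambda^*$ also lies in $\mathrm{spec}(P) \setminus \{1\}$, so the numerator and denominator above coincide and the ratio equals $1 > |\lambda_0|$. Otherwise the denominator equals $F(\alpha\tau)$, while the numerator is at least $F(\alpha|\lambda_0|\tau)$; when $|\lambda_0| > 0$, applying the key claim at $s_0 = \alpha|\lambda_0|\tau < s_1 = \alpha\tau$ yields $F(s_0)/F(s_1) > s_0/s_1 = |\lambda_0|$, and when $|\lambda_0| = 0$ the numerator equals $F(0) = 1$, making the ratio $1/F(\alpha\tau) > 0 = |\lambda_0|$.

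The main obstacle is establishing the strict monotonicity of $F(s)/s$, as $F$ itself need not be monotone in general: e.g., for $d = 1$ one has $F(s) = 1 - s^2/2 + O(s^4)$ near $0$, so it is the \emph{ratio}, not $F$, that drives the mitigation of oversmoothing. The symmetrization identity above handles all $c \geq 0$ uniformly and hence integrates against the $\chi_{d-1}^2$ distribution for free; a small separate sanity check covers $d = 1$ by applying the same identity with $c = 0$, since $\bE\log|t+\xi|$ and its $t$-derivative remain finite and positivity persists.
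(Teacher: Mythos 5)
Your proposal is correct and follows essentially the same route as the paper: the paper's Lemma~\ref{lem:monotone_gaussian} is exactly your claim that $t\mapsto\bE\log((t+\xi)^2+c)$ is strictly increasing (proved by the identical Gaussian symmetrization $\phi(u-t)>\phi(u+t)$ for $u,t>0$), and the paper then applies it after the same rescaling $\log(\alpha^2\lambda^2\tau^2)+\bE\log((\tfrac{1}{\alpha|\lambda|\tau}+\xi)^2+\chi_{d-1}^2)$ to get $F(\alpha\lambda\tau)>|\lambda|F(\alpha\tau)$ before invoking Theorems~\ref{thm:nonres_GNN} and~\ref{thm:gaussian_W}. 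Your explicit case distinction on where the denominator's maximum is attained (and the $|\lambda_0|=0$ edge case) is slightly more careful than the paper's one-line conclusion, but the substance is the same.
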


\subsubsection{Bounded norm} Useful quantitative lower bounds can be extracted under minimal assumptions on the distribution of weight matrices. In this section, we examine compactly supported distributions for which it can be proven that \(\lim_{t \to +\infty} \mu(x_{\mathrm{rs}}(t))^{1/2t}\) is strictly greater than \(\lim_{t \to +\infty} \mu(x_{\mathrm{nrs}}(t))^{1/2t}\), provided that \(\alpha\) is sufficiently small.

\begin{theorem}\label{thm:bounded_W}
    Suppose that Assumptions~\ref{asp:sigma}, \ref{asp:P}, \ref{asp:W}, \ref{asp:R_PW}, and \ref{asp:R_beta_PW} hold with $\|W^{(t)}\|\leq r_W$ almost surely. Let $\{x_{\mathrm{nrs}}(t)\}_{t\in\bN}$ and $\{x_{\mathrm{rs}}(t)\}_{t\in\bN}$ be   generated by \eqref{eq:GNN} and \eqref{eq:res_GNN}, respectively. Then it holds for any $\alpha\leq 1/r_W$ that, with probability one, for almost every $x_{\mathrm{rs}}(0)\in\bR^{d\times n}$,
    \begin{equation*}
        \lim_{t\to+\infty} \mu(x_{\mathrm{rs}}{(t)})^{1/2t} \geq \frac{1-\alpha r_W  \min_{\lambda\in\mathrm{spec}(P)\backslash\{1\}}|\lambda|}{1+\alpha r_W}.
    \end{equation*}
    As a corollary, if 
            \begin{align*}
           \frac{\displaystyle 1-\max_{\lambda\in \mathrm{spec}(P)\backslash \{1\}}|\lambda|}{\displaystyle 1-\min_{\lambda\in \mathrm{spec}(P)\backslash \{1\}}|\lambda|}> \frac{\alpha r_W}{1+\alpha r_W},
        \end{align*}
    then with probability one, we have for almost every $x_{\mathrm{nrs}}(0)\in\bR^{d\times n}$ and $x_{\mathrm{rs}}(0)\in\bR^{d\times n}$ that
    \begin{equation*}
        \lim_{t\to+\infty} \mu(x_{\mathrm{rs}}{(t)})^{1/2t} > \lim_{t\to+\infty} \mu(x_{\mathrm{nrs}}{(t)})^{1/2t}.
    \end{equation*}
\end{theorem}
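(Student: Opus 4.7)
The approach is to invoke Theorem~\ref{thm:res_GNN} to reduce the problem to estimating the ratio
$\max_{\lambda\in\mathrm{spec}(P)\setminus\{1\}} R(\alpha\lambda,\bP_W)\big/\max_{\lambda\in\mathrm{spec}(P)} R(\alpha\lambda,\bP_W)$,
and then to bound the numerator from below and the denominator from above using only the deterministic bound $\|W^{(t)}\|_2\leq r_W$ together with the Perron--Frobenius bound $|\lambda|\leq 1$, strict for $\lambda\neq 1$.

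For the denominator, submultiplicativity of the operator norm gives, for every $\lambda\in\mathrm{spec}(P)$ and every realization,
\[
\bigl\|(\mathrm{Id}+\alpha\lambda W^{(t-1)})\cdots(\mathrm{Id}+\alpha\lambda W^{(0)})\bigr\|_2 \;\leq\; \prod_{k=0}^{t-1}\bigl(1+\alpha|\lambda|r_W\bigr) \;\leq\; (1+\alpha r_W)^t,
\]
so that taking $t$-th roots and passing to the limit via Proposition~\ref{prop:R_PW}(ii) yields $R(\alpha\lambda,\bP_W)\leq 1+\alpha r_W$, and hence the denominator is at most $1+\alpha r_W$. For the numerator, I would pick $\lambda^{\star}\in\mathrm{spec}(P)\setminus\{1\}$ achieving $\min_{\lambda\neq 1}|\lambda|$ and use the dual inequality $\sigma_{\min}(\mathrm{Id}+\beta W)\geq 1-|\beta|\,\|W\|_2$, combined with supermultiplicativity of $\sigma_{\min}$ (valid since every factor is square) and the trivial bound $\|M\|_2\geq \sigma_{\min}(M)$, to obtain
\[
\bigl\|(\mathrm{Id}+\alpha\lambda^{\star} W^{(t-1)})\cdots(\mathrm{Id}+\alpha\lambda^{\star} W^{(0)})\bigr\|_2 \;\geq\; \bigl(1-\alpha r_W|\lambda^{\star}|\bigr)^t.
\]
The right-hand side is strictly positive because $|\lambda^{\star}|<1$ by Remark~\ref{rmk:spectrum_P} and $\alpha\leq 1/r_W$, so another application of Proposition~\ref{prop:R_PW}(ii) gives $R(\alpha\lambda^{\star},\bP_W)\geq 1-\alpha r_W\min_{\lambda\neq 1}|\lambda|$, and therefore the numerator is at least this quantity. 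Combining the two estimates via Theorem~\ref{thm:res_GNN} produces the first claim of the theorem.

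The corollary then follows from a purely algebraic comparison: substituting the lower bound just obtained into the non-residual rate $\lim_{t\to\infty}\mu(x_{\mathrm{nrs}}(t))^{1/2t}=\max_{\lambda\neq 1}|\lambda|$ furnished by Theorem~\ref{thm:nonres_GNN}, and rearranging the resulting inequality $(1-\alpha r_W\min_{\lambda\neq 1}|\lambda|)/(1+\alpha r_W)>\max_{\lambda\neq 1}|\lambda|$ so as to isolate $\alpha r_W$, yields the stated sufficient condition without any further probabilistic input.

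The argument is essentially mechanical once Theorem~\ref{thm:res_GNN} is available. The \emph{one subtlety} worth flagging is the need to verify that $1-\alpha r_W|\lambda^{\star}|$ is strictly positive, since otherwise the singular-value lower bound becomes vacuous; this is exactly where both the hypothesis $\alpha\leq 1/r_W$ and the strict Perron--Frobenius inequality $|\lambda^{\star}|<1$ (a consequence of Assumption~\ref{asp:P}) are used, and it is also why the statement restricts to $\alpha\leq 1/r_W$ rather than allowing an unrestricted $\alpha$.
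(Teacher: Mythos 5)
Your derivation of the main lower bound matches the paper's argument essentially line for line. Both obtain $R(\alpha\lambda,\bP_W)\le 1+\alpha r_W$ from submultiplicativity, and both obtain $R(\alpha\lambda,\bP_W)\ge 1-\alpha|\lambda|r_W$ from the reverse-triangle estimate; the paper writes this as $\|(\mathrm{Id}+\alpha\lambda W^{(t)})y\|_2\ge(1-\alpha|\lambda|r_W)\|y\|_2$ applied across the product, while you phrase it via $\sigma_{\min}$-supermultiplicativity, which is the same inequality. You also correctly isolate where $\alpha\le 1/r_W$ and the strict Perron--Frobenius bound $|\lambda^\star|<1$ are used. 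That part is fine and takes the paper's route.

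The corollary step is the gap. You assert that rearranging $\frac{1-\alpha r_W m}{1+\alpha r_W}>M$ (with $m=\min_{\lambda\neq 1}|\lambda|$, $M=\max_{\lambda\neq 1}|\lambda|$) so as to isolate $\alpha r_W$ ``yields the stated sufficient condition,'' but it does not. Carrying out the algebra gives $\alpha r_W(m+M)<1-M$, equivalently $\frac{\alpha r_W}{1+\alpha r_W}<\frac{1-M}{1+m}$, whereas the theorem as printed reads $\frac{\alpha r_W}{1+\alpha r_W}<\frac{1-M}{1-m}$. Since $1-m<1+m$ for $m>0$, the printed condition is strictly weaker and does \emph{not} imply the conclusion: for instance $m=M=\tfrac12$ and $\alpha r_W=1$ satisfy the printed inequality ($0.8>0.5$) yet the lower bound $\frac{1-\alpha r_W m}{1+\alpha r_W}=\tfrac14$ falls below $M=\tfrac12$. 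The paper's own proof in fact assumes $\alpha r_W(m+M)<1-M$, so the printed statement appears to contain a sign typo ($1-\min$ should be $1+\min$). The point for your write-up: the corollary is not as ``mechanical'' as claimed, and had you actually performed the isolation of $\alpha r_W$ you would have obtained a different, corrected condition rather than the one in the theorem.
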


\subsubsection{Simultaneously diagonalizable}

Finally, we consider the scenario where all matrices in the support of \(\bP\) are simultaneously diagonalizable.

\begin{assumption}\label{asp:diagonal_W}
    We assume that $W^{(t)}\sim\bP_W$ satisfies
    \begin{equation*}
        W^{(t)} = Q^{-1} \text{diag}\left(w_1^{(t)},\dots, w_d^{(t)}\right) Q,
    \end{equation*}
    where $Q\in \bR^{d\times d}$ is a fixed invertible matrix and $w_i^{(t)},\ t\in\bN$ are i.i.d. drawn from some Borel probability distribution $\bP_i$ over $\bR$ satisfying
    \begin{equation*}
    \log \big|1+\alpha\lambda w_i^{(t)}\big| |\in L^1(\bP_i),\quad\forall~\lambda\in\mathrm{spec}(P).
    \end{equation*}
    for $i=1,\dots,d$.
\end{assumption}

Under \Cref{asp:diagonal_W}, the following theorem represents the constant \(R(\alpha\lambda, \bP_W)\) in terms of one-dimensional distributions.

\begin{theorem}\label{thm:diagonal_W}
    Suppose that Assumptions~\ref{asp:sigma}, \ref{asp:P}, \ref{asp:W}, \ref{asp:R_beta_PW}, and \ref{asp:diagonal_W} hold. Let $\{x_{\mathrm{rs}}{(t)}\}_{t\in\bN}$ be   generated by \eqref{eq:res_GNN}. Then with probability one, we have for almost every $x_{\mathrm{rs}}(0)\in\bR^{d\times n}$ that
    \begin{equation}\label{eq:diagonal_W}
        \lim_{t\to+\infty} \mu(x_{\mathrm{rs}}{(t)})^{1/2t} \geq \frac{\max\limits_{\lambda\in\mathrm{spec}(P)\backslash\{1\}} \max\limits_{1\leq i\leq d} \exp\left( \bE\left[\log \left|1+\alpha\lambda w_i^{(t)}\right|\right]\right)}{\max\limits_{\lambda\in\mathrm{spec}(P)} \max\limits_{1\leq i\leq d} \exp\left( \bE\left[\log \left|1+\alpha\lambda w_i^{(t)}\right|\right]\right)}.
    \end{equation}
    Additionally, if $P$ is diagonalizable in $\bC$, then the equality in \eqref{eq:diagonal_W} holds.
\end{theorem}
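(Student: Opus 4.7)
The plan is to apply Theorem~\ref{thm:res_GNN} directly, which reduces the problem to evaluating $R(\alpha\lambda,\bP_W)$ for each $\lambda\in\mathrm{spec}(P)$ under \Cref{asp:diagonal_W}. Once we have the explicit formula
\[
R(\alpha\lambda,\bP_W)=\max_{1\le i\le d}\exp\!\left(\bE\!\left[\log\bigl|1+\alpha\lambda w_i^{(t)}\bigr|\right]\right),
\]
substitution into the lower bound \eqref{eq:resGNN_rate_lb} yields the inequality in \eqref{eq:diagonal_W}, and the case when $P$ is diagonalizable in $\bC$ yields equality in the same way.

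First I would exploit the common change of basis $Q$ to write, for any $\beta\in\bC$,
\[
\prod_{s=0}^{t-1}\bigl(\mathrm{Id}+\beta W^{(s)}\bigr)=Q^{-1}\,\mathrm{diag}\!\left(\prod_{s=0}^{t-1}(1+\beta w_i^{(s)})\right)_{\!i=1}^{d}\,Q.
\]
Taking the operator norm and using submultiplicativity gives the two-sided bound
\[
\|Q^{-1}\|_2^{-1}\|Q\|_2^{-1}\max_i\Bigl|\textstyle\prod_{s}(1+\beta w_i^{(s)})\Bigr|\le\Bigl\|\textstyle\prod_s(\mathrm{Id}+\beta W^{(s)})\Bigr\|_2\le\|Q^{-1}\|_2\|Q\|_2\max_i\Bigl|\textstyle\prod_s(1+\beta w_i^{(s)})\Bigr|.
\]
Since the condition number factor is a constant independent of $t$, raising to the $1/t$ power and taking $t\to\infty$ gives $R(\beta,\bP_W)=\lim_{t\to\infty}\max_i|\prod_{s=0}^{t-1}(1+\beta w_i^{(s)})|^{1/t}$ whenever this limit exists.

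Next I would apply the strong law of large numbers to each coordinate: by \Cref{asp:diagonal_W} the variables $\log|1+\alpha\lambda w_i^{(s)}|$ are i.i.d.\ and in $L^1(\bP_i)$, so
\[
\frac{1}{t}\sum_{s=0}^{t-1}\log\bigl|1+\alpha\lambda w_i^{(s)}\bigr|\;\xrightarrow{a.s.}\;\bE\!\left[\log\bigl|1+\alpha\lambda w_i^{(t)}\bigr|\right].
\]
Exponentiating gives the per-coordinate almost sure limit, and since the maximum is taken over the finite index set $\{1,\dots,d\}$, it commutes with the limit, yielding the claimed formula for $R(\alpha\lambda,\bP_W)$. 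There is no real obstacle here: the only point requiring a little care is that the expectations could a priori equal $-\infty$ (when $1+\alpha\lambda w_i^{(t)}$ concentrates near $0$), but the $L^1$ hypothesis in \Cref{asp:diagonal_W} rules this out, so the SLLN applies verbatim.

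Finally, plugging this explicit form of $R(\alpha\lambda,\bP_W)$ into both the numerator and denominator of \eqref{eq:resGNN_rate_lb} produces exactly the right-hand side of \eqref{eq:diagonal_W}. If $P$ is diagonalizable over $\bC$, then Theorem~\ref{thm:res_GNN} already promises equality rather than inequality, so the same substitution gives the equality statement. The main conceptual step is the condition-number trick in the second paragraph, and the main technical step is invoking the SLLN with the correct integrability hypothesis; neither is substantial, so the theorem follows as a direct corollary of Theorem~\ref{thm:res_GNN} combined with the coordinate-wise decomposition afforded by \Cref{asp:diagonal_W}.
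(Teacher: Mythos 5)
Your proof is correct and takes essentially the same route as the paper's: both reduce to computing $R(\alpha\lambda,\bP_W)$ via Theorem~\ref{thm:res_GNN}, exploit the simultaneous diagonalization to turn the matrix product into a diagonal of scalar products, and finish with the strong law of large numbers (with the max over $i$ commuting with the a.s.\ limit because the index set is finite). The only cosmetic difference is in how you justify that the operator norm of $\prod_s(\mathrm{Id}+\beta W^{(s)})$ has the same exponential growth rate as $\max_i|\prod_s(1+\beta w_i^{(s)})|$: you do it by a clean, self-contained condition-number sandwich ($\|Q^{-1}DQ\|_2$ is within the fixed factor $\|Q\|_2\|Q^{-1}\|_2$ of $\|D\|_2=\max_i|D_{ii}|$), whereas the paper invokes Proposition~\ref{prop:leading Lya on invariant subspaces}; your version is arguably the more elementary and transparent of the two for this particular step.
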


Furthermore, when $P$ has a real spectrum and the distribution $\bP_1,\dots,\bP_d$ are all symmetric, then $\mu(x_{\mathrm{rs}}{(t)})^{1/2t}$ converges to $1$ almost surely as stated in the next theorem.

\begin{theorem}\label{thm2:diagonal_W}
    Suppose that assumptions made in \Cref{thm:diagonal_W} are satisfied. Suppose in addition that $P$ has a real spectrum and that $\bP_i$ is symmetric, i.e., $\bP_i(B) = \bP_i(-B)$ for any Borel subset $B\subseteq \bR$, for $i=1,2,\dots, d$. Let $\{x_{\mathrm{rs}}{(t)}\}_{t\in\bN}$ be generated by \eqref{eq:res_GNN}.
    If $|w_i^{(t)}|\leq r_i$ almost surely, $i=1,\dots,d$, and $\alpha \max_{1\leq i\leq d}r_i<1$, it holds with probability one that, for almost every $x_{\mathrm{rs}}(0)\in\bR^{d\times n}$,
    \begin{equation*}
        \lim_{t\to+\infty} \mu(x_{\mathrm{rs}}{(t)})^{1/2t} = 1.
    \end{equation*}
\end{theorem}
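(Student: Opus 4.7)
The plan is to leverage \Cref{thm:diagonal_W}, which already provides a lower bound
\[
\lim_{t\to+\infty} \mu(x_{\mathrm{rs}}(t))^{1/2t} \;\geq\; \frac{A}{B}, \qquad \text{where}\qquad A = \max_{\lambda\in\mathrm{spec}(P)\setminus\{1\}} M(\lambda), \quad B = \max_{\lambda\in\mathrm{spec}(P)} M(\lambda),
\]
with $M(\lambda) := \max_{1\leq i\leq d}\exp\!\bigl(\bE[\log|1+\alpha\lambda w_i^{(t)}|]\bigr)$. Since $\mu(x)\in[0,1]$ trivially yields the upper bound $\lim_{t\to\infty}\mu(x_{\mathrm{rs}}(t))^{1/2t}\leq 1$, the theorem reduces to showing $A/B = 1$, i.e.\ the maximum defining $B$ is attained at some $\lambda \neq 1$. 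Equivalently, we must show $M(\lambda^\star)\geq M(1)$ for some $\lambda^\star\in\mathrm{spec}(P)\setminus\{1\}$.

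First I would exploit the symmetry of $\bP_i$. For real $\lambda$ and any symmetric random variable $w_i^{(t)}$, the identity $\bE[\log|1-\alpha\lambda w_i^{(t)}|]=\bE[\log|1+\alpha\lambda w_i^{(t)}|]$ holds by replacing $w_i^{(t)}$ with $-w_i^{(t)}$ in distribution. Adding these two expressions and using $|1+a||1-a| = |1-a^2|$ gives
\[
\bE\bigl[\log|1+\alpha\lambda w_i^{(t)}|\bigr]=\tfrac{1}{2}\bE\bigl[\log\bigl(1-\alpha^2\lambda^2 (w_i^{(t)})^2\bigr)\bigr],
\]
where under the boundedness hypothesis $\alpha\max_i r_i<1$ and $|\lambda|\leq 1$ the argument of the logarithm is strictly positive almost surely. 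This reduces the problem to a comparison involving $\lambda^2$ alone.

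Next I would pick any eigenvalue $\lambda^\star\in\mathrm{spec}(P)\setminus\{1\}$, which exists provided $n\geq 2$ (the $n=1$ case being vacuous). Because $P$ has real spectrum and by the Perron–Frobenius conclusions collected in \Cref{rmk:spectrum_P}, $|\lambda^\star|<1$, so $(\lambda^\star)^2<1$. Consequently, for every realization of $w_i^{(t)}$,
\[
1-\alpha^2(\lambda^\star)^2 (w_i^{(t)})^2 \;\geq\; 1-\alpha^2 (w_i^{(t)})^2 \;>\;0.
\]
Taking logarithms preserves the inequality pointwise, and taking expectations then yields $\bE[\log|1+\alpha\lambda^\star w_i^{(t)}|]\geq \bE[\log|1+\alpha w_i^{(t)}|]$ for every $i$. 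Maximizing over $i$ gives $M(\lambda^\star)\geq M(1)$, hence $A\geq M(1)$ and therefore $A=B$, which forces the lower bound in \Cref{thm:diagonal_W} to equal $1$. Combining with $\mu\leq 1$ delivers the claim.

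I do not anticipate a serious obstacle: all the ingredients — the explicit lower bound from \Cref{thm:diagonal_W}, the symmetrization identity, and the Perron–Frobenius bound $|\lambda^\star|<1$ — combine cleanly. The one point that requires mild care is ensuring that the argument of the logarithm stays bounded away from zero so that all expectations are well-defined with integrable integrands; this is exactly what the hypothesis $\alpha\max_i r_i<1$ provides, guaranteeing $1-\alpha^2\lambda^2(w_i^{(t)})^2\geq 1-(\alpha\max_i r_i)^2>0$ almost surely for every $\lambda\in\mathrm{spec}(P)$.
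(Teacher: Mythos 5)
Your proof is correct and follows essentially the same route as the paper's: you use the symmetry of $\bP_i$ together with the bound $\alpha\max_i r_i<1$ to rewrite $\bE[\log|1+\alpha\lambda w_i^{(t)}|]$ as $\tfrac12\bE[\log(1-\alpha^2\lambda^2(w_i^{(t)})^2)]$, observe this is nondecreasing in $\lambda^2$ when $|\lambda|\leq 1$, deduce that the maximum over $\lambda\in\mathrm{spec}(P)$ is already attained on $\mathrm{spec}(P)\setminus\{1\}$, and conclude from \Cref{thm:diagonal_W} together with $\mu\leq 1$.
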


\section{Proofs}
\label{sec:proof}
We collect all proofs in this section.

\subsection{Preliminary on linear random dynamical system} 
\label{sec:MET}
This subsection reviews some definitions and fundamental results for linear random dynamical systems; see \cite{Arnold_RDS} and references therein for more details. 

\begin{definition}[Metric dynamical system] A metric dynamical system on a probability space $(\Omega,\mathcal{F},\bP)$ is a family of maps $\{\theta(t):\Omega\rightarrow\Omega\}_{t\in\bN}$ satisfying that
\begin{itemize}
	\item[(i)] The mapping $\bN\times\Omega\rightarrow\Omega,\ (t,\omega)\mapsto\theta(t)\omega$ is measurable.
			
	\item[(ii)] It holds that
    \begin{equation}\label{eq:theta_tpluss}
        \theta(0)=\mathrm{Id}_{\Omega}\quad\text{and}\quad\theta(t+s)=\theta(t)\circ\theta(s),\ \forall~s,t\in\bN.
    \end{equation}
			
	\item[(iii)] $\theta(t)$ is $\bP$-preserving for any $t\in\bN$, i.e.,
\begin{equation*}
    \bP(\theta(t)^{-1}B)=\bP(B),\quad \forall~B\in\mathcal{F},\ t\in\bN.
\end{equation*}
\end{itemize}
\end{definition}

In the above definition, the index $t\in\bN$ serves as the notion of one-sided discrete time. Though this paper focuses only on time represented by non-negative integers, we remark that in general the metric dynamical system is defined with a time set $\mathbb{T}$ that can be a semigroup with $0\in\mathbb{T}$, including two-sided discrete time $\mathbb{T}=\mathbb{Z}$, one-sided continuous time $\mathbb{T}=\bR_{\geq 0}$, and two-sided continuous time $\mathbb{T}=\bR$.

In the case of $\mathbb{T}=\bN$, one can set $\theta=\theta(1)$ and \eqref{eq:theta_tpluss} implies that $\theta(t)$ is the $t$-fold composition of $\theta$, namely, $\theta(t)=\theta^t = \theta\circ\cdots\circ\theta$.

\begin{definition}[Linear random dynamical system]\label{def:linear_RDS}
Let $\{\theta^t:\Omega\rightarrow\Omega\}_{t\in\bN}$ be a metric dynamical system on $(\Omega,\mathcal{F},\bP)$ and let $A:\Omega\to \bR^{m\times m}$ be measurable. A linear random dynamical system on $\bR^m$ 
over $\{\theta^t\}_{t\in\bN}$, induced by $A$, is the measurable map 
\begin{equation*}
\begin{split}
	\Phi:\bN\times\Omega\times \bR^m&\to \quad \bR^m,\\
	(t,\omega,u)\quad  & \mapsto \Phi(t,\omega) u,
\end{split}
\end{equation*}
where 
\begin{equation}\label{eq:Phi_t_omega}
	\Phi(t,\omega)=A(\theta^{t-1}\omega)\cdots A(\theta\omega)A(\omega).
\end{equation}
\end{definition}

We have slightly abused the notation by writing $\Phi(t,\omega,u)=\Phi(t,\omega) u$. 
The asymptotic behavior of a linear random dynamical system can be characterized precisely by the celebrated multiplicative ergodic theorem in the following.

\begin{theorem}[Multiplicative ergodic theorem, {\cite[Theorem 3.4.1]{Arnold_RDS}}]
\label{thm: MET}
Let $\{\theta^t:\Omega\rightarrow\Omega\}_{t\in\bN}$ be a metric dynamical system on $(\Omega,\mathcal{F},\bP)$ and let $A:\Omega\to \bR^{m\times m}$ be measurable. Consider a linear random dynamical system $\Phi(t,w)$ as in \eqref{eq:Phi_t_omega}.
Suppose that 
\begin{equation*}
	\max\left\{\log\norm{A(\cdot)}_2,0\right\}\in L^1(\Omega,\mathcal{F},\bP).
\end{equation*}
There exists $\widetilde{\Omega}\in \mathcal{F}$ with $\theta\widetilde{\Omega}\subseteq\widetilde{\Omega}$ and $\bP(\widetilde{\Omega})=1$, such that the followings hold for any $\omega\in\widetilde{\Omega}$:
\begin{itemize}
	\item[(i)] It holds that the limit
	\begin{equation*}
    \Lambda(\omega)=\lim_{t\rightarrow\infty}\left(\Phi(t,\omega)^{\top}\Phi(t,\omega)\right)^{1/2t},
	\end{equation*}
	exists and is a positive semidefinite matrix.
	\item[(ii)] Suppose  $\Lambda(\omega)$ has $q(\omega)$ distinct eigenvalues, which are ordered as $e^{\nu_1(\omega)}>\cdots> e^{\nu_{q(\omega)}(\omega)}\geq 0$. Denote $E_i(\omega)$ the corresponding eigenspace, with dimension $d_i(\omega)>0$, for  $i=1,\dots,q(\omega)$. Then the functions $q(\cdot)$, $\nu_i(\cdot)$, and $d_i(\cdot)$, $i=1,\dots,p(\cdot)$, are all measurable and $\theta$-invariant on $\widetilde{\Omega}$:
    \begin{equation*}
        q(\theta\omega) = q(\omega),\ \nu_i(\theta\omega) = \nu_i(\omega),\ \text{and}\ d_i(\theta\omega) = d_i(\omega).
    \end{equation*}
	\item[(iii)] Set $U_i(\omega)=\bigoplus_{j\geq i}E_j(\omega),\ i=1,\dots,q(\omega)$,  
	and $U_{q(\omega)+1}(\omega)=\{0\}$. Then it holds that
	\begin{equation}\label{conv:Lya-exp}
		\lim_{t\rightarrow\infty}\frac{1}{t}\log\norm{\Phi(t,\omega)u}_2=\nu_i(\omega),\quad\forall~u\in U_i(\omega)\backslash U_{i+1}(\omega),
	\end{equation}  
	for $i=1,\dots,q(\omega)$. The maps $E_i(\cdot)$ and $U_i(\cdot)$ from $\widetilde{\Omega}$ to the Grassmannian manifold are measurable. 
	\item[(iv)] It holds that
	\begin{equation*}
		A(\omega)U_i(\omega)\subseteq U_i(\theta\omega),\quad i=1,\dots,q(\omega).
	\end{equation*}
    \item[(v)] When $(\Omega,\mathcal{F},\bP,\theta)$ is ergodic, i.e., every $B\in\mathcal{F}$ with $\theta^{-1}B=B$ satisfies $\bP(B)=0$ or $\bP(B)=1$, the functions  
    $q(\cdot)$, $\nu_i(\cdot)$, and $d_i(\cdot)$, $i=1,\dots,p(\cdot)$, are constants on $\widetilde{\Omega}$.
\end{itemize}
\end{theorem}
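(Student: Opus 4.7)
The plan is to derive the claimed limit by combining the quantitative lower bound from \Cref{thm:diagonal_W} with the elementary upper bound $\mu(x)\le 1$, after showing that the symmetry assumption forces the ratio on the right-hand side of~\eqref{eq:diagonal_W} to equal $1$. Since the maxima on the top and bottom of that ratio differ only in whether $\lambda=1$ is included, the real task is to show that the outer maximum over $\mathrm{spec}(P)$ is attained at some $\lambda\neq 1$.

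The key observation is that symmetry of each $\bP_i$ lets us pair $w_i^{(t)}$ with $-w_i^{(t)}$, so that for every real $\beta$ with $|\beta|\le \max_i r_i\alpha < 1$,
\begin{equation*}
\bE\bigl[\log|1+\beta w_i^{(t)}|\bigr]
= \tfrac12 \bE\bigl[\log|1+\beta w_i^{(t)}| + \log|1-\beta w_i^{(t)}|\bigr]
= \tfrac12 \bE\bigl[\log\bigl(1-\beta^{2}(w_i^{(t)})^{2}\bigr)\bigr].
\end{equation*}
For $\lambda\in\mathrm{spec}(P)\subseteq [-1,1]$ and $\beta=\alpha\lambda$, the hypothesis $\alpha\max_i r_i<1$ ensures $\beta^2(w_i^{(t)})^2\in[0,1)$ a.s., hence the integrand is nonpositive and pointwise nonincreasing in $\lambda^2$. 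Therefore $g_i(\lambda):=\bE[\log|1+\alpha\lambda w_i^{(t)}|]$ is a nonincreasing function of $|\lambda|$ on $[-1,1]$ for each $i$.

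By Perron--Frobenius (\Cref{rmk:spectrum_P}) the eigenvalue $1$ is simple and every other $\lambda\in\mathrm{spec}(P)$ has $|\lambda|<1$; in particular such a $\lambda^\star$ exists. Monotonicity of $g_i$ in $|\lambda|$ then yields $g_i(\lambda^\star)\ge g_i(1)$ for all $i$, so taking the maximum over $i$ and over $\lambda\in\mathrm{spec}(P)\setminus\{1\}$ reproduces the same value obtained by maximizing over all of $\mathrm{spec}(P)$:
\begin{equation*}
\max_{\lambda\in\mathrm{spec}(P)\setminus\{1\}}\max_{1\le i\le d} e^{g_i(\lambda)}
= \max_{\lambda\in\mathrm{spec}(P)}\max_{1\le i\le d} e^{g_i(\lambda)}.
\end{equation*}
Hence the ratio in \eqref{eq:diagonal_W} equals $1$, and \Cref{thm:diagonal_W} gives $\liminf_{t\to\infty}\mu(x_{\mathrm{rs}}(t))^{1/2t}\ge 1$ almost surely, for a.e.\ initialization. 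Combined with $\mu(x)\le 1$, which yields $\mu(x_{\mathrm{rs}}(t))^{1/2t}\le 1$ for every $t$, we conclude that the limit is exactly $1$.

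I do not expect a serious obstacle, since the measure-theoretic infrastructure (the probability-one event in $\omega$ and the full-measure set of initializations) is inherited verbatim from \Cref{thm:diagonal_W}, which is invoked as a black box. The only subtle point is verifying that the boundedness assumption $\alpha r_i<1$ is used precisely to make the scalar $1-\alpha^2\lambda^2(w_i^{(t)})^2$ strictly positive on the supports of all $\bP_i$ and all $\lambda\in\mathrm{spec}(P)$, so that the pairing identity above is valid pointwise rather than merely in expectation; this requires only a short, careful accounting and no new tools.
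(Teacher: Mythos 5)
Your proposal does not address the statement at hand. The statement is the multiplicative ergodic theorem (\Cref{thm: MET}): existence of the limit $\Lambda(\omega)=\lim_t(\Phi(t,\omega)^{\top}\Phi(t,\omega))^{1/2t}$, the Lyapunov exponents $\nu_i(\omega)$ and the Oseledets filtration $U_i(\omega)$ with the convergence \eqref{conv:Lya-exp}, together with measurability, $\theta$-invariance, the inclusion $A(\omega)U_i(\omega)\subseteq U_i(\theta\omega)$, and constancy under ergodicity. What you have written is instead an argument for \Cref{thm2:diagonal_W} (the limit $\mu(x_{\mathrm{rs}}(t))^{1/2t}=1$ for symmetric, simultaneously diagonalizable weights): the symmetrization identity $\bE\log|1+\beta w_i^{(t)}|=\tfrac12\bE\log(1-\beta^2(w_i^{(t)})^2)$, the monotonicity in $|\lambda|$, and the comparison of the two maxima in \eqref{eq:diagonal_W} are all steps of that downstream application, not of the ergodic theorem itself. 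None of the five assertions (i)--(v) of \Cref{thm: MET} is touched by your argument.

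Beyond being off-target, the route cannot be repaired into a proof of \Cref{thm: MET}: you invoke \Cref{thm:diagonal_W} as a black box, but that theorem (via \Cref{thm:res_GNN} and \Cref{prop:R_PW}) is itself a consequence of the multiplicative ergodic theorem, so the argument would be circular. In this paper \Cref{thm: MET} is not proved at all; it is quoted from Arnold \cite[Theorem 3.4.1]{Arnold_RDS} (going back to Oseledets \cite{oseledets}), and any self-contained proof would require genuinely different tools — e.g.\ Kingman's subadditive ergodic theorem applied to $\log\|\Lambda^k\Phi(t,\omega)\|$ for the exterior powers, plus a careful construction of the filtration — none of which appear in your proposal. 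If your intended target was \Cref{thm2:diagonal_W}, your argument is essentially the paper's own proof of that result (same symmetrization and the bound $\mu\le 1$); but as a proof of the stated theorem it is a complete miss.
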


In \Cref{thm: MET}, $\nu_1(\omega)>\cdots>\nu_{q(\omega)}(\omega)$ are known as Lyapunov exponents and it is possible that $\lambda_{q(\omega)}(\omega)=-\infty$. Moreover, the filtration 
\begin{equation*}
    \{0\}=U_{q(\omega)+1}(\omega)\subseteq U_{q(\omega)}(\omega)\subseteq \cdots\subseteq U_1(\omega) = \bR^m
\end{equation*}
is known as the Oseledets filtration and \Cref{thm: MET} is also named as Oseledets theorem. In particular, V.I. Oseledets proves the first multiplicative ergodic theorem in \cite{oseledets} and since then, there has been a rich literature in this field, see e.g., \cite{}{raghunathan, Ruelle-1979, Walters-93}.

\begin{remark}
    Although \Cref{thm: MET} is stated for only systems on $\bR^m$, we remark that the same result also holds for complex space $\bC^m$ as mentioned in \cite[Remark 3.4.10]{Arnold_RDS}.
\end{remark}

We prove the following general proposition, which will be used frequently in the proof of results in \Cref{sec:main_results}.
\begin{proposition}\label{prop:leading Lya on invariant subspaces}
    Adopting the notation in \Cref{thm: MET} in the case of $\bC^m$. Assume that $\bC^m = Q_1 \oplus \cdots \oplus Q_k$, where each $Q_i$ is almost surely a $\Phi(t,\omega)$-invariant $\bC$-linear subspace. That is, with probability one, for any $t \in \bN$ and any $u_i \in Q_i$, we have that $\Phi(t,\omega) u_i \in Q_i$. We define $\nu_1(Q_i,\omega)$ as the largest Lyapunov exponent of the restriction of $\Phi(t,\omega)$ on $Q_i$. Then, for any fixed $\omega \in \widetilde \Omega$, there exists a proper $\bC$-linear subspace of $\bC^{m}$, $U(\omega)$, such that
        \begin{align*}
                \lim_{t \to +\infty} \frac{1}{t} \log \|\Phi(t,\omega) u\|_2 = \max_{1\leq i\leq k} \nu_1(Q_i,\omega), \quad \forall~u \in \bC^{m} \backslash U(\omega).
        \end{align*}
    \begin{proof}
        Without loss of generality, we assume that $\nu_1(Q_1,\omega) = \max_{1\leq i\leq k} \nu_1(Q_i,\omega)$. We then let $U(\omega) \subsetneqq Q_1$ be the corresponding $U_2(\omega)$ in \eqref{conv:Lya-exp}, part (iii) of \Cref{thm: MET}, when we consider the restriction of $\Phi(t,\omega)$ on $Q_1$. $U(\omega)$ is also a $\bC$-linear subspace of $\bC^m$. 
        
        For any $u \in \bC^m$, we can write $u = \sum_{i=1} ^k u_i$ with $u_i \in Q_i$ for each $i$.  We are going to prove that if $u_1 \notin U(\omega)$, then
            \begin{align*}
                \lim_{t \to +\infty} \frac{1}{t} \log \left( \sum_{i=1} ^k \|\Phi(t,\omega) u_i\|_2 \right) = \nu_1(Q_1,\omega).
            \end{align*}
        First, by the definition of $U(\omega)$ and \eqref{conv:Lya-exp}, we see that 
            \begin{align}\label{eq:lower bound inv space}
                \lim_{t \to +\infty} \frac{1}{t} \log \left( \|\Phi(t,\omega) u_1\|_2 \right) = \nu_1(Q_1,\omega).
            \end{align}
        On the other hand, by our assumption, we see that for any $i \in \{1,\dots,k\}$, 
            \begin{align*}
                \lim_{t \to +\infty} \frac{1}{t} \log \left( \|\Phi(t,\omega) u_i\|_2 \right) \leq \nu_1(Q_1,\omega) .
            \end{align*}
        Hence, fix any $\epsilon >0$, there is a $T(\epsilon)$, such that when $t >T(\epsilon)$, we have that 
        \begin{align*}
            \frac{1}{t}\log\| \Phi(t,\omega) u_i\|_2 \leq \nu_1(Q_1,\omega)+ \epsilon,
        \end{align*}
    for any $i \in \{1,\dots,k\}$.
    So, when $t >T(\epsilon)$, 
        \begin{align*}
            \sum_{i=1} ^k \|\Phi(t,\omega) u_i\|_2  \leq k \cdot \text{exp}\left(t (\nu_1(Q_1,\omega) + \epsilon) \right),
        \end{align*}
    and then
        \begin{align}\label{eq:upper bound inv space}
            \limsup_{t \to +\infty} \frac{1}{t} \log \left( \sum_{i=1} ^k \|\Phi(t,\omega) u_i\|_2 \right) \leq \nu_1(Q_1,\omega) + \epsilon.
        \end{align}
    Because $\epsilon>0$ is arbitrary, together with \eqref{eq:lower bound inv space}, we can conclude that
        \begin{align*}
            \lim_{t \to +\infty} \frac{1}{t} \log \left( \sum_{i=1} ^k \|\Phi(t,\omega) u_i\|_2 \right) = \nu_1(Q_1,\omega) .
        \end{align*}

    Finally, we see that there is a constant $C>0$ only depending on the decomposition $\bC^m = Q_1 \oplus \cdots \oplus Q_k$, such that for any $v \in \bC^m$ uniquely written as $v = \sum_{i=1} ^k v_i$ with $v_i \in Q_i$ for each $i$, we have that
        \begin{align*}
            C^{-1} \|v\|_2 \leq \sum_{i=1} ^k \|v_i\|_2 \leq C \|v\|_2.
        \end{align*}
    Then, if $u_1 \notin U(\omega)$, we conclude that
        \begin{align*}
            \begin{split}
                \lim_{t \to +\infty} \frac{ \log \left( \sum_{i=1} ^k \|\Phi(t,\omega) u_i\|_2 \right) - \log C}{t} &\leq \lim_{t \to +\infty} \frac{\log \|\Phi(t,\omega) u\|_2}{t}  
                \\  &\leq \lim_{t \to +\infty} \frac{ \log \left( \sum_{i=1} ^k \|\Phi(t,\omega) u_i\|_2 \right) + \log C}{t},
            \end{split}
        \end{align*}
    and in both sides, the limits are $\nu_1(Q_1,\omega) = \max_{1\leq i\leq k} \nu_1(Q_i,\omega)$.
    \end{proof}
\end{proposition}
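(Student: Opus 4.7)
The natural approach is to apply the multiplicative ergodic theorem (Theorem \ref{thm: MET}) separately to the restriction of $\Phi(t,\omega)$ to each invariant subspace $Q_i$, and then glue the resulting bounds together using the equivalence of norms induced by the direct-sum decomposition $\bC^m = Q_1 \oplus \cdots \oplus Q_k$. The measurability and integrability hypotheses needed to invoke Theorem \ref{thm: MET} on each $Q_i$ transfer automatically, since the operator norm of a restriction is bounded by the operator norm of the original map.

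After relabeling, I would assume $\nu_1(Q_1,\omega) = \max_{1\leq i\leq k} \nu_1(Q_i,\omega)$. Running Theorem \ref{thm: MET} inside $Q_1$ produces an Oseledets filtration; let $V(\omega) \subsetneq Q_1$ denote the Oseledets subspace of codimension at least $1$ consisting of those vectors in $Q_1$ whose exponential growth rate under $\Phi(t,\omega)$ is strictly smaller than $\nu_1(Q_1,\omega)$. Set $U(\omega) := V(\omega) \oplus Q_2 \oplus \cdots \oplus Q_k$; this is a proper $\bC$-linear subspace of $\bC^m$ because $V(\omega) \subsetneq Q_1$. For any $u \in \bC^m \setminus U(\omega)$, decompose $u = u_1 + \cdots + u_k$ with $u_i \in Q_i$; the defining condition forces $u_1 \notin V(\omega)$.

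Next I would establish matching upper and lower bounds. Pick a constant $C>0$ from norm equivalence on the direct sum so that $C^{-1}\|v_i\|_2 \leq \|v\|_2 \leq C \sum_j \|v_j\|_2$ for $v=\sum_j v_j$. For the lower bound, by invariance, $\Phi(t,\omega) u_1 \in Q_1$, so $\|\Phi(t,\omega) u\|_2 \geq C^{-1} \|\Phi(t,\omega) u_1\|_2$, and since $u_1 \notin V(\omega)$, Theorem \ref{thm: MET} gives $\lim_t \tfrac{1}{t}\log \|\Phi(t,\omega) u_1\|_2 = \nu_1(Q_1,\omega)$. For the upper bound, use that $\nu_1(Q_i,\omega)$ is the largest Lyapunov exponent on $Q_i$, hence bounds the growth of \emph{every} vector in $Q_i$ (vectors in lower Oseledets subspaces grow even slower). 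So for every $\epsilon>0$ there exists $T(\epsilon,\omega)$ with $\|\Phi(t,\omega) u_i\|_2 \leq e^{t(\nu_1(Q_i,\omega)+\epsilon)} \leq e^{t(\nu_1(Q_1,\omega)+\epsilon)}$ for all $i$ and $t > T(\epsilon,\omega)$. Summing and using norm equivalence yields $\|\Phi(t,\omega) u\|_2 \leq Ck \cdot e^{t(\nu_1(Q_1,\omega)+\epsilon)}$, so $\limsup_t \tfrac{1}{t}\log\|\Phi(t,\omega)u\|_2 \leq \nu_1(Q_1,\omega)+\epsilon$; letting $\epsilon \downarrow 0$ concludes.

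The main subtlety I anticipate is the correct identification of the ``bad'' subspace $U(\omega)$: one must use not just the highest-exponent Oseledets subspace of $\bC^m$ (which might not respect the decomposition into $Q_i$'s at all), but rather the Oseledets subspace \emph{internal} to $Q_1$, and then enlarge by the other $Q_i$'s. Verifying that this enlargement remains a proper subspace of $\bC^m$ is what ultimately makes the genericity statement nontrivial, and it hinges on the direct-sum structure together with $V(\omega)\subsetneq Q_1$.
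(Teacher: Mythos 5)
Your proposal is correct and follows essentially the same route as the paper's proof: invoke the multiplicative ergodic theorem on the restriction of $\Phi(t,\omega)$ to each invariant block $Q_i$, isolate the top Oseledets subspace inside the dominant block $Q_1$, and combine the lower and upper bounds via the norm equivalence induced by the direct-sum decomposition. Your identification $U(\omega) := V(\omega) \oplus Q_2 \oplus \cdots \oplus Q_k$ is in fact slightly more precise than the paper's bookkeeping, which names only the subspace $U_2(\omega) \subsetneq Q_1$ even though the argument actually establishes the conclusion for the complement of the larger (still proper) subspace $U_2(\omega) \oplus Q_2 \oplus \cdots \oplus Q_k$, exactly as you write it.
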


Next, because of \Cref{asp:W} and the assumptions in \Cref{thm: MET}, we can  work with $(\Omega,\calF,\bP)$ that is the product probability space of $(\bR^{d\times d},\calB(\bR^{d\times d}),\bP_W)_{l\in\bN}$, where $\calB(\bR^{d\times d})$ is the Borel $\sigma$-algebra of $\bR^{d\times d}$. Specifically, $\Omega = (\bR^{d\times d})^\bN$ and a sample $\omega\in\Omega$ is a sequence of matrices in $\bR^{d\times d}$, namely,
\begin{equation}\label{eq:omege}
    \omega=(W^{(0)},W^{(1)},\dots)\in\Omega.
\end{equation}
We consider the metric dynamical system $(\Omega,\calF,\bP)$ induced by the shift operator $\theta:\Omega\to\Omega$ that maps a sample $\omega$ as in \eqref{eq:omege} to
\begin{equation*}
    \theta\omega = (W^{(1)},W^{(2)},\dots),
\end{equation*}
which is clearly $\bP$-preserving. Additionally, $(\Omega,\calF,\bP,\theta)$ is ergodic by the Kolmogorov's zero-one law. 

With the preparations above, \Cref{prop:R_PW} can be proved straightforwardly.

\begin{proof}[Proof of \Cref{prop:R_PW}]
    (i) \eqref{eq:R_PW} is a direct corollary of \Cref{thm: MET} and the ergodicity of $(\Omega,\calF,\bP,\theta)$. In particular, $\log R(\bP_W)$, where $R(\bP_W)$ is the limit in \eqref{eq:R_PW}, is the leading Lyapunov exponent of the linear random dynamical system induced by $A(\omega) = W$ with $W$ being the first matrix in the sequence $\omega$.

    \noindent (ii) To prove \eqref{eq:R_beta_PW}, one needs to verify the condition \eqref{eq:logW_L1} for $\mathrm{Id} + \beta W^{(t)}$ provided \Cref{asp:W}. We use $\mathds{1}(\cdot)$ as the indicator function. Then, we see that
    \begin{align*}
        & \bE\left[\max\left\{\log \|\mathrm{Id}+\beta W^{(t)}\|_2,0\right\}\right] \leq \bE\left[\log\left(1+|\beta|\|W^{(t)}\|_2\right)\right] \\
        & \qquad \leq \bE\left[\log(1+|\beta|)\mathds{1}\left(\|W^{(t)}\|_2\leq 1\right)\right] + \bE\left[\log\left((1+|\beta|)\|W^{(t)}\|_2\right)\mathds{1}\left(\|W^{(t)}\|_2\geq 1\right)\right] \\
        &\qquad \leq 2\log(1+|\beta|) + \bE\left[\log\|W^{(t)}\|_2\cdot \mathds{1}\left(\|W^{(t)}\|_2\geq 1\right)\right] \\
        &\qquad = 2\log(1+|\beta|) + \bE\left[\max\left\{\log\|W^{(t)}\|_2, 0\right\}\right] < \infty.
    \end{align*}
    Therefore, one can apply \Cref{thm: MET} to the linear random dynamical system induced by $A(\omega) = I+\beta W$, with $W$ being the first matrix in the sequence $\omega$, and \eqref{eq:R_beta_PW} follows directly from \Cref{thm: MET} and the ergodicity of $(\Omega,\calF,\bP,\theta)$. In particular, the largest Lyapunov exponent of this linear random dynamical system is $\log R(\beta,\bP_W)$, where $R(\beta,\bP_W)$ is the limit in \eqref{eq:R_beta_PW}.
\end{proof}

\subsection{Preliminary on tensor product space}
This subsection sets up more preparations for presenting the proofs of \Cref{thm:nonres_GNN} and \Cref{thm:res_GNN}.

    For a $y \in \bC^d$ and a $\varphi \in \bC^n$, we use the notation 
        \begin{align*}
            y \otimes \varphi \coloneqq y \varphi ^\top
        \end{align*}
    to denote the $d \times n$ dimensional matrix $y \varphi ^\top$, which can also be identified with $\bC^{d \times n}$. On the other hand, any element in $\bC^{d \times n}$ can be written as a finite linear combination of some $y_i \otimes \varphi_i$'s for $y_i \in \bC^d$ and  $\varphi_i \in \bC^n$. That is, for any $x \in \bC^{d \times n}$, one can assume that $x$ has a form of 
        \begin{align*}
            x = \sum_{i} y_i \otimes \varphi_i.
        \end{align*}
    We remark that, in general, the above linear combinations of $y_i \otimes \varphi_i$ may not be unique. To get a unique decomposition, we need to fix a basis of either $\bC^d$ or $\bC^n$.

    Suppose that $P$ has $p$ Jordan blocks with eigenvlues $\lambda_1,\dots,\lambda_p$ and sizes $n_1,\dots,n_p$.
    By Remark~\ref{rmk:spectrum_P}, without loss of generalization we can assume that
    \begin{equation*}
        1=\lambda_1 > |\lambda_2| \geq \cdots\geq |\lambda_p|,\quad\text{and}\quad n_1=1.
    \end{equation*}
    Denote $V_1,\dots,V_p\subseteq\bC^n$ as the corresponding generalized left eigenspaces for $\lambda_i$ of $P$. For each $V_i$, $i \in \{1,\dots,p\}$, there exists a basis of $V_i$, $\varphi _{i,1},\dots,\varphi_{i,n_i}$, satisfying that
    \begin{equation}\label{eq:Jordan}
         P \varphi_{i,j} = \lambda_i \varphi_{i,j} + \varphi_{i,j+1},\ j=1,\dots,n_i-1,\quad\text{and}\quad P \varphi_{i,n_i} = \lambda_i \varphi_{i,n_i}.
    \end{equation}
    In particular, according to \Cref{rmk:spectrum_P}, we see that $V_1=\text{span}_\bC\{\varphi_{1,1}\}$, with $\varphi_{1,1} = \mathbf{1}_n \coloneqq (1,\dots,1)^\top \in\bC^n$.
    After fixing the basis $\{\varphi_{i,j}\}$ of $\bC^n$, we have the decomposition
    \begin{equation}\label{eq:decomp_Cdn}
        \bC^{d\times n} = \bC^{d} \otimes \bC^{n} =  (\bC^d\otimes V_1) \oplus  \cdots \oplus (\bC^d\otimes V_p).
    \end{equation}
    That is, for any $x \in \bC^{d \times n}$, there is a unique series of $\{y_{i,j}\}$, such that 
        \begin{align}\label{eq:unique decomp x}
            x = \sum_{i=1} ^p \sum_{j=1} ^{n_i} y_{i,j} \otimes \varphi_{i,j}.
        \end{align}
    For any $x \in \bC^{d \times n}$ with $x$ uniquely written in the form of \eqref{eq:unique decomp x}, we can define a norm $\|x\|_D$ by
        \begin{align}\label{eq:decomp norm}
            \|x\|_D  \coloneqq \sum_{i=1} ^p \sum_{j=1} ^{n_i} \|y_{i,j}\|_2,
        \end{align}
    where the norm $\|y_{i,j}\|_2$ is the usual $L^2$-norm for a vector in $\bC^d$. Recall that the Frobenius norm of $x$ is defined by, if we write $x = (x_1 , \dots ,x_n)$ for $x_i$'s in $\bC^d$, then
        \begin{align*}
            \|x\|_F \coloneqq \left( \sum_{i=1} ^n \|x_i\|_2 ^2 \right) ^{\frac{1}{2}},
        \end{align*}
    where the norm $\|x_i\|_2$ is also the usual $L^2$-norm for a vector in $\bC^d$. Because the correspondence between $x$ and the series $\{y_{i,j}\}$ forms a linear isomorphism on $\bC^{d \times n}$, we know that the norm $\|x \|_D$ is equivalent to the norm $\|x \|_F$. More precisely, we have the following proposition.
\begin{proposition}\label{prop:equivalent norm}
    There is a constant $C_P>0$ depending on the matrix $P$, such that for any $x \in \bC^{d \times n}$, we have that
        \begin{align*}
            \frac{1}{C_{P}} \|x\|_F \leq \|x\|_D \leq C_P \|x\|_F .
        \end{align*}
\end{proposition}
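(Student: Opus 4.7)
The plan is to reduce the claimed equivalence to a well-known equivalence of norms on $\bC^{d \times n}$ by factoring out the Jordan change-of-basis associated with $P$. Relabel the basis $\{\varphi_{i,j}\}$ of $\bC^n$ from \eqref{eq:Jordan} in some fixed order as $\varphi_1,\dots,\varphi_n$, and assemble them as the columns of an invertible matrix $\Phi \in \bC^{n \times n}$. Given the unique decomposition \eqref{eq:unique decomp x}, stack the corresponding coefficients $y_k \in \bC^d$ as the columns of a matrix $Y \in \bC^{d \times n}$. A direct entrywise computation identifies $x = Y\Phi^{\top}$, so $Y = x(\Phi^{\top})^{-1}$, and passing between $x$ and $Y$ amounts to right multiplication by a fixed invertible matrix that depends only on $P$.

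The first key step is to compare $\|\cdot\|_D$ with the Frobenius norm of the coefficient matrix $Y$. By construction, $\|x\|_D = \sum_{k=1}^n \|y_k\|_2$, while $\|Y\|_F^2 = \sum_{k=1}^n \|y_k\|_2^2$. The elementary inequalities $\sum_k a_k^2 \leq (\sum_k a_k)^2 \leq n \sum_k a_k^2$ for $a_k \geq 0$ (the second being Cauchy--Schwarz) yield
\[
\|Y\|_F \leq \|x\|_D \leq \sqrt{n}\,\|Y\|_F.
\]

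The second key step is to bridge $\|Y\|_F$ with $\|x\|_F$ using the submultiplicativity of the Frobenius norm against the spectral norm: $\|x\|_F = \|Y\Phi^{\top}\|_F \leq \|\Phi^{\top}\|_2 \|Y\|_F$ and, symmetrically, $\|Y\|_F = \|x(\Phi^{\top})^{-1}\|_F \leq \|(\Phi^{\top})^{-1}\|_2 \|x\|_F$. Chaining these with the previous estimate produces the two-sided bound with the explicit constant $C_P = \sqrt{n}\,\max\{\|\Phi^{\top}\|_2,\,\|(\Phi^{\top})^{-1}\|_2\}$, which depends only on the Jordan structure of $P$ and not on $x$.

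There is no genuine obstacle: the proposition is the standard fact that any two norms on a finite-dimensional vector space are equivalent, made quantitative via the specific basis $\{\varphi_{i,j}\}$. The only care needed is to record the constant explicitly in terms of $P$ (and independent of $x$) so that it can be absorbed uniformly in the subsequent Lyapunov-exponent comparisons used in the proofs of \Cref{thm:nonres_GNN} and \Cref{thm:res_GNN}.
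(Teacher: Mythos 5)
The paper omits the proof, appealing implicitly to the equivalence of all norms on a finite-dimensional vector space. Your argument is a correct and explicit instantiation of exactly that: you identify the coefficient map $x\mapsto Y=x(\Phi^\top)^{-1}$ as a fixed invertible change of basis, compare $\|x\|_D=\sum_k\|y_k\|_2$ with $\|Y\|_F$ via the elementary $\ell^1$--$\ell^2$ inequalities, and transfer to $\|x\|_F$ by Frobenius/spectral submultiplicativity, yielding the constant $C_P=\sqrt{n}\,\max\{\|\Phi^\top\|_2,\|(\Phi^\top)^{-1}\|_2\}$.
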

We omit the proof of this proposition. But for our purpose to estimate the vertex similarity measure as defined in \Cref{def:vertex sim}, we need the following definition and proposition.
\begin{definition}\label{def:lower order part of x}
    For any $x \in \bC^{d \times n}$ uniquely written in the form of \eqref{eq:unique decomp x}, we define
        \begin{align*}
            \widetilde x \coloneqq x- y_{1,1} \otimes \varphi_{1,1} = \sum_{i=2} ^p \sum_{j=1} ^{n_i} y_{i,j} \otimes \varphi_{i,j}.
        \end{align*}
\end{definition}
In \Cref{def:vertex sim}, we defined the quantity $\sum_{i=1}^n \|x_i - \Bar{x}\|_2 ^2$ with $\Bar{x} = \frac{1}{n}\sum_{i=1}^n x_i = \frac{1}{n} x \varphi_{1,1}\in \bC^d$. This quantity equals to $\|x - \Bar{x}\otimes\varphi_{1,1}\|_F ^2$. We remark that, in general, $y_{1,1}$ for $x$ may not necessarily equal to $\Bar{x}$, and hence $\|x - \Bar{x}\otimes\varphi_{1,1}\|_F$ may not be the same as $\|\widetilde x\|_F$. On the other hand, these two quantities are equivalent in the following sense:
\begin{proposition}\label{prop:equivalent lower order norm}
    There is a constant $C_P>0$ depending on the matrix $P$, such that for any $x \in \bC^{d \times n}$,
        \begin{align}\label{eq:equivalent lower order norm}
            \frac{1}{C_P} \|\widetilde x\|_F \leq \|x- \Bar{x} \otimes \varphi_{1,1}\|_F \leq C_P \|\widetilde x\|_F.
        \end{align}
\end{proposition}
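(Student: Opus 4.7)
The plan is to work in the basis-adapted norm $\|\cdot\|_D$ from Proposition~\ref{prop:equivalent norm} and then transfer the estimate to the Frobenius norm via that same proposition. The key observation is that $\widetilde{x}$ and $x - \Bar{x}\otimes \varphi_{1,1}$ agree on every block of the direct sum \eqref{eq:decomp_Cdn} except the top block $\bC^d \otimes V_1$, and the discrepancy on that top block is controlled by the remaining ``higher-mode'' part $\widetilde{x}$ itself.

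Concretely, I would first expand $\Bar{x} = \tfrac{1}{n} x \varphi_{1,1}$ using the unique decomposition \eqref{eq:unique decomp x}. Since $\varphi_{1,1} = \mathbf{1}_n$, this produces $\Bar{x} = y_{1,1} + \tfrac{1}{n}\sum_{(i,j) \neq (1,1)} c_{i,j}\, y_{i,j}$, where the scalars $c_{i,j} := \varphi_{i,j}^\top \mathbf{1}_n$ depend only on the basis $\{\varphi_{i,j}\}$, and hence only on $P$. Substituting this expression back gives
\[
x - \Bar{x} \otimes \varphi_{1,1} = (y_{1,1} - \Bar{x}) \otimes \varphi_{1,1} + \sum_{(i,j) \neq (1,1)} y_{i,j} \otimes \varphi_{i,j},
\]
which is itself in the form of \eqref{eq:unique decomp x}. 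Reading off the coordinates from this decomposition yields $\|x - \Bar{x}\otimes \varphi_{1,1}\|_D = \|y_{1,1} - \Bar{x}\|_2 + \|\widetilde{x}\|_D$ directly from the definition \eqref{eq:decomp norm}.

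To close the argument, I would bound $\|y_{1,1} - \Bar{x}\|_2$ by a constant multiple of $\|\widetilde{x}\|_D$ using the triangle inequality applied to the formula above for $\Bar{x} - y_{1,1}$, obtaining a constant $C > 0$ that depends only on the $c_{i,j}$'s. This gives the two-sided inequality $\|\widetilde{x}\|_D \leq \|x - \Bar{x}\otimes \varphi_{1,1}\|_D \leq (1+C)\|\widetilde{x}\|_D$, and then applying Proposition~\ref{prop:equivalent norm} on both sides converts it into the Frobenius-norm estimate \eqref{eq:equivalent lower order norm} with a possibly larger constant $C_P$ still depending only on $P$.

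I do not anticipate any genuine obstacle here: the only bookkeeping step is verifying that the constants $c_{i,j}$ and $C$ are functions of $P$ alone (independent of $x$), which is immediate because the basis $\{\varphi_{i,j}\}$ is determined by the Jordan decomposition of $P$.
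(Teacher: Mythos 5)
Your argument is correct and proves the claim. The paper's proof takes a more abstract route: it defines the linear map $T(x) = x - \Bar{x}\otimes\varphi_{1,1}$, observes $\mathrm{Ker}(T) = \bC^d\otimes V_1$, and then invokes the equivalence of all norms on the finite-dimensional quotient $\bC^{d\times n}/\mathrm{Ker}(T)$ — on which both $\|T(x)\|_F$ and $\|\widetilde{x}\|_F$ descend to norms — to conclude without exhibiting a constant. Your version is a concrete coordinate computation in the Jordan basis: you read off $\|x - \Bar{x}\otimes\varphi_{1,1}\|_D = \|y_{1,1} - \Bar{x}\|_2 + \|\widetilde{x}\|_D$, bound the first term by $C\|\widetilde{x}\|_D$ via the overlaps $c_{i,j} = \varphi_{i,j}^\top\mathbf{1}_n$, and transfer to the Frobenius norm via Proposition~\ref{prop:equivalent norm}. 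What your route buys is an explicit (if not sharp) constant and no appeal to quotient-norm machinery, at the cost of a bit more bookkeeping; the paper's route is shorter but leaves the constant implicit. One small remark confirming your caution: the $c_{i,j}$ can indeed be nonzero when $P$ is not symmetric, since the right Jordan vectors for $\lambda_i\neq 1$ need only be annihilated by the left Perron eigenvector, not by $\mathbf{1}_n$, so it is correct that you did not try to argue they vanish.
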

\begin{proof}
    We define a linear map $T$ on $\bC^{d \times n}$ by $T(x) \coloneqq x- \Bar{x} \otimes \varphi_{1,1}$. Then we see that the kernel of $T$ is $\text{Ker}(T)=\bC^d \otimes V_1$. Hence, $T$ induces a linear isomorphism from $\bC^{d \times n} / \text{Ker}(T)$ to $\text{Im}(T)$. Because we have the unique decomposition \eqref{eq:decomp_Cdn} and \eqref{eq:unique decomp x}, the norm $\|\widetilde x\|_F$ can induce a norm on the quotient space $\bC^{d \times n} / \text{Ker}(T)$. Hence, $\|\widetilde x\|_F$ is equivalent to $\|T(x)\|_F$ up to a constant depending on $T$. This finishes the proof of \eqref{eq:equivalent lower order norm}.
\end{proof}

\subsection{Proof of \Cref{thm:nonres_GNN}}\label{section:proof of main GNN} 

This subsection proves \Cref{thm:nonres_GNN}.

\subsubsection{Random GNN dynamical system on tensor product space}
    We consider a linear random dynamical system $\Phi(t,\omega)$ on $\bC^{d\times n}$ over $(\Omega,\calF,\bP,\theta)$, induced by
    \begin{align}\label{e:action of A on tensor}
        \begin{split}
            A(\omega):\qquad \bC^{d\times n} = \bC^d \otimes \bC^n \qquad & \to\qquad\ \bC^{d\times n} = \bC^d \otimes \bC^n ,\\
            x = \sum_{i=1} ^p \sum_{j=1} ^{n_i} y_{i,j} \otimes \varphi_{i,j} &\mapsto  \sum_{i=1} ^p \sum_{j=1} ^{n_i} (W y_{i,j}) \otimes (P \varphi_{i,j}), 
        \end{split}
    \end{align}
    where $W$ is the first matrix in the sequence $\omega$. This linear map is well-defined, as if we regard $x$ as a $d \times n$ dimensional matrix, then $A(\omega) x = W x P^\top$. Then, we see that the dynamics \eqref{eq:GNN} with $\sigma$ being the identity map is precisely driven by $\Phi(t,w)$. More precisely, if we write that, by abusing our notations,
        \begin{align}\label{eq:x(0) decomp}
            x_{\mathrm{nrs}}{(0)} = \sum_{i=1} ^p \sum_{j=1} ^{n_i} y_{i,j} (0) \otimes \varphi_{i,j},
        \end{align}
    then for $t \in \bN$,
    \begin{align}\label{eq:dyn in decomp GNN 1}
        \begin{split}
            x_{\mathrm{nrs}}{(t)} &= \Phi(t,\omega) x{(0)} = A(\theta^{t-1}\omega)\cdots A(\theta\omega)A(\omega) x{(0)} 
            \\  &= \sum_{i=1} ^p \sum_{j=1} ^{n_i} \left(W^{(t-1)}\cdots W^{(1)}W^{(0)} y_{i,j}(0) \right) \otimes P^t \varphi_{i,j}.
        \end{split}
    \end{align}
    For notation purpose, we let $W^{(-1)}$ be the identity matrix.

    By the definition of the action of $A(\omega)$ in \eqref{e:action of A on tensor} and \eqref{eq:Jordan}, one can easily see that $\bC^d\otimes V_i$ is an invariant subspace of $A(\omega)$ and $\Phi(t,\omega)$ for any $i\in \{1,\dots,p\}$ and any $t\in\bN,\omega\in\Omega$. Furthermore, by induction, one can show that for each $i \in \{1,\dots,p\},j \in \{1,\dots,n_i\}$, 
        \begin{equation}\label{eq:Jordan t times}
            P^t \varphi_{i,j}= \sum_{s=0}^{n_i - j} \lambda_i^{t-s} \binom{t}{s} \varphi_{i,j+s}.
        \end{equation}
    Hence, we see that \eqref{eq:dyn in decomp GNN 1} becomes
        \begin{align}\label{eq:dyn in decomp GNN 2}
            x_{\mathrm{nrs}}{(t)} = \sum_{i=1} ^p \sum_{j=1} ^{n_i} \left( \sum_{s=1} ^j  \lambda_i^{t-(j-s)}\binom{t}{j-s} W^{(t-1)}\cdots W^{(1)} W^{(0)} y_{i,s}(0) \right) \otimes \varphi_{i,j} .
        \end{align}
    To simplify our notations, we define
        \begin{align}\label{eq:dyn in decomp GNN 3}
            y_{i,j}(t) \coloneqq \sum_{s=1} ^j  \lambda_i^{t-(j-s)}\binom{t}{j-s} W^{(t-1)}\cdots W^{(1)} W^{(0)} y_{i,s}(0) .
        \end{align}
    For notation purpose, we let $\binom{0}{i} =0$ if $i >0$ and let $\binom{j}{0} = 1$ if $j \geq 0$.
    
The forms of \eqref{eq:dyn in decomp GNN 2} and \eqref{eq:dyn in decomp GNN 3} suggests us to consider the following linear random dynamical system on $\bC^d$: for any $y \in \bC^d$, we define $\Phi_W(t,\omega)y$ by
    \begin{equation*}
        \Phi_W(t,\omega) y \coloneqq W^{(t-1)}\cdots W^{(1)} W^{(0)} y.
    \end{equation*}
    \Cref{thm: MET} applies for $\Phi_W(t,\omega)$ since \eqref{eq:logW_L1} is assumed. Denote $\nu_1 ^W > \cdots > \nu_{q^W} ^W \geq -\infty$ as the ordered Lyapunov exponents of $\Phi_W(t,\omega)$ on $\bC^d$ as in \Cref{thm: MET}, which are almost surely constants on $\Omega$ according to the part (v) of \Cref{thm: MET} since $(\Omega,\calF,\bP,\theta)$ is ergodic. We then have that for any $\omega \in \widetilde{\Omega}$,
    \begin{align}\label{eq:Phi_W Lya exponents}
        \lim_{t\to+\infty}\frac{1}{t}\log\| \Phi_W(t,\omega) y\|_2 = \nu_i ^W, \quad \forall~y \in U_i ^W (\omega)\backslash U_{i+1} ^W (\omega), \quad i \in \{1,\dots,q^W\}
    \end{align}
    In particular, we see that $U_1 ^W = \bC^d$.
    
Next, similar to \Cref{def:lower order part of x}, we can define
    \begin{align}\label{eq:dyn in decomp GNN low order}
        \widetilde x_{\mathrm{nrs}}(t) \coloneqq \widetilde {x_{\mathrm{nrs}}(t)} = \sum_{i =2} ^p \sum_{j=1} ^{n_i} y_{i,j}(t) \otimes \varphi_{i,j}.
    \end{align}
We remark that under this definition, we have that $\widetilde x_{\mathrm{nrs}}(t) = \Phi(t,\omega) (\widetilde{x_{\mathrm{nrs}}(0)})$. We have the following lemma for $\| \widetilde x_{\mathrm{nrs}}(t) \|_F$:
\begin{lemma}\label{lemma:lower part Lya GNN}
    Fix an $\omega \in \widetilde{\Omega}$. For any $x(0)$ in the form of \eqref{eq:x(0) decomp}, if $y_{2,1}(0) \in U_1 ^W (\omega)\backslash U_{2} ^W (\omega) = \bC^d \backslash U_{2} ^W (\omega)$, then for the corresponding $x(t)$ in \eqref{eq:dyn in decomp GNN 2}, we have that
        \begin{align}\label{eq:lower part Lya GNN}
            \lim_{t \to +\infty} \frac{1}{t} \log \|\widetilde x_{\mathrm{nrs}}(t)\|_F = \log |\lambda_2| + \nu_1 ^W .
        \end{align}
\end{lemma}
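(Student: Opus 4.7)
The plan is to exploit the explicit block decomposition of $\widetilde x_{\mathrm{nrs}}(t)$ provided in \eqref{eq:dyn in decomp GNN low order}--\eqref{eq:dyn in decomp GNN 3}, reduce each block $y_{i,j}(t)$ to a deterministic scalar factor times a product of random matrices driven by $\Phi_W(t,\omega)$, and then identify the dominant block. By \Cref{prop:equivalent norm}, the norm $\|\widetilde x_{\mathrm{nrs}}(t)\|_F$ is equivalent to $\|\widetilde x_{\mathrm{nrs}}(t)\|_D = \sum_{i=2}^p \sum_{j=1}^{n_i}\|y_{i,j}(t)\|_2$, so multiplicative constants disappear in the limit $\frac{1}{t}\log(\cdot)$ and it suffices to compute the exponential growth rate of $\|\widetilde x_{\mathrm{nrs}}(t)\|_D$.

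For the lower bound I would isolate the $(i,j)=(2,1)$ block. The sum defining $y_{2,1}(t)$ in \eqref{eq:dyn in decomp GNN 3} collapses to the single term $y_{2,1}(t)=\lambda_2^{t}\,\Phi_W(t,\omega)\,y_{2,1}(0)$. By hypothesis $y_{2,1}(0)\in\bC^d\setminus U_2^W(\omega)$, so \eqref{eq:Phi_W Lya exponents} yields $\frac{1}{t}\log\|\Phi_W(t,\omega)y_{2,1}(0)\|_2\to \nu_1^W$. Since $\|\widetilde x_{\mathrm{nrs}}(t)\|_D\geq\|y_{2,1}(t)\|_2 = |\lambda_2|^t\|\Phi_W(t,\omega)y_{2,1}(0)\|_2$, taking logarithms and dividing by $t$ gives the lower bound $\liminf_{t\to\infty}\frac{1}{t}\log\|\widetilde x_{\mathrm{nrs}}(t)\|_D \geq \log|\lambda_2|+\nu_1^W$.

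For the matching upper bound I would estimate each block $(i,j)$ with $i\geq 2$ by the triangle inequality applied to \eqref{eq:dyn in decomp GNN 3}, giving
\[
\|y_{i,j}(t)\|_2\leq \sum_{s=1}^{j}|\lambda_i|^{t-(j-s)}\binom{t}{j-s}\,\|\Phi_W(t,\omega)y_{i,s}(0)\|_2.
\]
Two facts are then used: $\binom{t}{j-s}$ grows only polynomially in $t$ and thus contributes nothing to $\frac{1}{t}\log(\cdot)$, and by the Multiplicative Ergodic Theorem applied to $\Phi_W$ (more precisely by \eqref{eq:Phi_W Lya exponents} with $U_1^W=\bC^d$), $\limsup_{t\to\infty}\frac{1}{t}\log\|\Phi_W(t,\omega)y_{i,s}(0)\|_2\leq \nu_1^W$ for every nonzero $y_{i,s}(0)$ (the zero case being trivial). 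Together with $|\lambda_i|\leq|\lambda_2|$ for $i\geq 2$, this gives $\limsup_{t\to\infty}\frac{1}{t}\log\|y_{i,j}(t)\|_2\leq \log|\lambda_2|+\nu_1^W$ for every block, and since there are only finitely many blocks the elementary estimate $\log\sum_{k=1}^M a_k\leq \log M+\log\max_k a_k$ transfers the bound to $\|\widetilde x_{\mathrm{nrs}}(t)\|_D$.

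The only delicate point is the upper bound: one must make sure the polynomial prefactors $\binom{t}{j-s}$ coming from the Jordan block formula \eqref{eq:Jordan t times} are negligible at the exponential scale, and that no subdominant block with $|\lambda_i|<|\lambda_2|$ can be amplified by $\Phi_W$ past $\log|\lambda_2|+\nu_1^W$. Both issues dissolve because the Lyapunov spectrum of $\Phi_W$ is uniform across $\bC^d$, capped by $\nu_1^W$, and independent of $t$ up to $o(t)$ corrections—this is precisely the content of the MET step already invoked. Combining the two bounds yields \eqref{eq:lower part Lya GNN}.
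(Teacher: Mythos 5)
Your proposal is correct and follows essentially the same route as the paper: pass to the equivalent norm $\|\cdot\|_D$, obtain the lower bound from the $(2,1)$ block $y_{2,1}(t)=\lambda_2^t\,\Phi_W(t,\omega)y_{2,1}(0)$ via \eqref{eq:Phi_W Lya exponents}, and obtain the matching upper bound by bounding each block using $|\lambda_i|\le|\lambda_2|$, the polynomial growth of $\binom{t}{j-s}$, and the Lyapunov bound $\limsup_t\frac{1}{t}\log\|\Phi_W(t,\omega)y\|_2\le\nu_1^W$. No gaps.
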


\begin{proof}
    We are going to prove that $\lim_{t \to +\infty} \frac{1}{t} \log \| \widetilde x_{\mathrm{nrs}}(t)\|_D = \log |\lambda_2| + \nu_1 ^W$ first. According to \eqref{eq:decomp norm}, \eqref{eq:dyn in decomp GNN 3}, and \eqref{eq:dyn in decomp GNN low order}, we see that 
        \begin{align*}
                \|\widetilde x_{\mathrm{nrs}}(t)\|_D &= \sum_{i=2} ^p \sum_{j=1} ^{n_i} \left\| y_{i,j}(t) \right\|_2
                = |\lambda_2| ^t \|\Phi_W(t,\omega) y_{2,1}(0)\|_2 +  \sum_{j=2} ^{n_2} \|y_{2,j}(t)\|_2+\sum_{i=3} ^p \sum_{j=1} ^{n_i} \|y_{i,j}(t)\|_2.
        \end{align*}
    Because we assumed that $y_{2,1}(0) \in  \bC^d \backslash U_{2} ^W (\omega)$ and by \eqref{eq:Phi_W Lya exponents}, we first see that 
        \begin{align}\label{eq:lower bound GNN}
            \lim_{t \to +\infty} \frac{1}{t} \log \left(  |\lambda_2| ^t \|\Phi_W(t,\omega) y_{2,1}(0)\|_2 \right) = \log |\lambda_2| + \nu_1 ^W.
        \end{align}

    On the other hand, \eqref{eq:Phi_W Lya exponents} also implies that $\lim_{t\to+\infty}\frac{1}{t}\log\| \Phi_W(t,\omega) y\|_2 \leq \nu_1 ^W$ for any $y \in \bC^d$. Fix any $\epsilon >0$, there is a $T(\epsilon)$, such that when $t >T(\epsilon)$, we have that 
        \begin{align*}
            \frac{1}{t}\log\| \Phi_W(t,\omega) y_{i,j}(0)\|_2 \leq \nu_1 ^W + \epsilon,
        \end{align*}
    for any $i \in \{2,\dots,p\}$ and $j \in \{1,\dots,n_i\}$.
    By \eqref{eq:dyn in decomp GNN 3}, we see that for any $i \in \{2,\dots,p\}$ and $j \in \{1,\dots,n_i\}$,
        \begin{align*}
            \|y_{i,j}(t)\|_2 \leq \sum_{s=1} ^j  |\lambda_i|^{t-(j-s)}\binom{t}{j-s} \left\| \Phi_W(t,\omega) y_{i,s}(0) \right\|_2 \leq n|\lambda_2|^t t^n \text{exp}\left(t (\nu_1 ^W + \epsilon) \right).
        \end{align*}
    So, when $t >T(\epsilon)$, 
        \begin{align*}
            \|\widetilde x_{\mathrm{nrs}}(t)\|_D \leq n^2|\lambda_2|^t t^n \text{exp}\left(t (\nu_1 ^W + \epsilon) \right),
        \end{align*}
    and then
        \begin{align}\label{eq:upper bound GNN}
            \limsup_{t \to +\infty} \frac{1}{t} \log \|\widetilde x_{\mathrm{nrs}}(t)\|_D \leq \log |\lambda_2| + \nu_1 ^W + \epsilon.
        \end{align}
    Because $\epsilon>0$ is arbitrary, together with \eqref{eq:lower bound GNN}, we can conclude that
        \begin{align*}
            \lim_{t \to +\infty} \frac{1}{t} \log \| \widetilde x_{\mathrm{nrs}}(t)\|_D = \log |\lambda_2| + \nu_1 ^W.
        \end{align*}
    Then, by \Cref{prop:equivalent norm}, we see that
        \begin{align*}
            \lim_{t \to +\infty} \frac{\log \| \widetilde x_{\mathrm{nrs}}(t)\|_D - \log C_P}{t} \leq \lim_{t \to +\infty} \frac{1}{t} \log \| \widetilde x_{\mathrm{nrs}}(t)\|_F \leq \lim_{t \to +\infty} \frac{\log \| \widetilde x_{\mathrm{nrs}}(t)\|_D + \log C_P}{t},
        \end{align*}
    and in both sides, the limits are $\log |\lambda_2| + \nu_1 ^W$.
\end{proof}
Similar to the proof for \Cref{lemma:lower part Lya GNN}, one can directly get the following lemma for $\|x_{\mathrm{nrs}}(t)\|_F$ and $\mu(x_{\mathrm{nrs}}(t))$.
\begin{lemma}\label{lemma:leading part Lya GNN}
    Fix an $\omega \in \widetilde{\Omega}$. For any $x_{\mathrm{nrs}}(0)$ in the form of \eqref{eq:x(0) decomp}, if $y_{1,1}(0) \in U_1 ^W (\omega)\backslash U_{2} ^W (\omega) = \bC^d \backslash U_{2} ^W (\omega)$, then for the corresponding $x_{\mathrm{nrs}}(t)$ in \eqref{eq:dyn in decomp GNN 2}, we have that
        \begin{align}\label{eq:leading part Lya GNN}
            \lim_{t \to +\infty} \frac{1}{t} \log \| x_{\mathrm{nrs}}(t)\|_F = \log |\lambda_1| + \nu_1 ^W =  \nu_1 ^W.
        \end{align}
    If we further assume that $y_{2,1}(0) \in  \bC^d \backslash U_{2} ^W (\omega)$ and $\nu_1^W>-\infty$, then
        \begin{align}\label{eq:vertex similarity GNN}
            \lim_{t \to +\infty} \frac{1}{2t} \log \mu(x_{\mathrm{nrs}}(t)) = \log |\lambda_2|.
        \end{align}
\end{lemma}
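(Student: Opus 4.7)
The plan is to prove the two assertions of Lemma~\ref{lemma:leading part Lya GNN} in order, reusing and extending the machinery developed for Lemma~\ref{lemma:lower part Lya GNN}.

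For \eqref{eq:leading part Lya GNN}, I would first isolate the leading Jordan component $y_{1,1}(t) \otimes \varphi_{1,1}$, recalling that $\lambda_1 = 1$ and $n_1 = 1$. Because $\lambda_1 = 1$, formula \eqref{eq:dyn in decomp GNN 3} collapses to $y_{1,1}(t) = \Phi_W(t,\omega)\, y_{1,1}(0)$, so the hypothesis $y_{1,1}(0) \in \bC^d \backslash U_2^W(\omega)$ together with \eqref{eq:Phi_W Lya exponents} gives $\frac{1}{t}\log\|y_{1,1}(t)\|_2 \to \nu_1^W$. This supplies the lower bound for $\|x_{\mathrm{nrs}}(t)\|_D$. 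For the matching upper bound, I would bound each remaining $y_{i,j}(t)$ with $i \geq 2$ by a polynomial in $t$ times $\exp\!\bigl(t(\log|\lambda_i|+\nu_1^W+\epsilon)\bigr)$, using $|\lambda_i|\leq 1$ to absorb the prefactor $|\lambda_i|^{t-(j-s)}$ and the uniform upper bound $\nu_1^W$ on every Lyapunov exponent of $\Phi_W$. Summing a finite number of such terms and passing from $\|\cdot\|_D$ to $\|\cdot\|_F$ via Proposition~\ref{prop:equivalent norm} should yield \eqref{eq:leading part Lya GNN}, as the equivalence constants are swallowed by the $1/t$ normalization.

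For \eqref{eq:vertex similarity GNN}, I would first observe that $\varphi_{1,1} = \mathbf{1}_n$ makes $\bar x_{\mathrm{nrs}}(t)\otimes\varphi_{1,1}$ equal to the matrix with repeated column $\bar x_{\mathrm{nrs}}(t)$, so Definition~\ref{def:vertex sim} rewrites as
\begin{equation*}
\mu(x_{\mathrm{nrs}}(t)) \;=\; \frac{\|x_{\mathrm{nrs}}(t) - \bar x_{\mathrm{nrs}}(t)\otimes\varphi_{1,1}\|_F^{2}}{\|x_{\mathrm{nrs}}(t)\|_F^{2}}.
\end{equation*}
Proposition~\ref{prop:equivalent lower order norm} then lets me replace the numerator's norm by $\|\widetilde x_{\mathrm{nrs}}(t)\|_F$ up to a multiplicative constant depending only on $P$. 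Taking $\frac{1}{2t}\log$ and passing to the limit, the equivalence constants contribute an $O(1/t)$ term that vanishes, and I am left with
\begin{equation*}
\lim_{t\to\infty} \frac{1}{2t}\log\mu(x_{\mathrm{nrs}}(t)) \;=\; \lim_{t\to\infty}\frac{1}{t}\log\|\widetilde x_{\mathrm{nrs}}(t)\|_F \;-\; \lim_{t\to\infty}\frac{1}{t}\log\|x_{\mathrm{nrs}}(t)\|_F.
\end{equation*}
Lemma~\ref{lemma:lower part Lya GNN} (applicable by the hypothesis $y_{2,1}(0)\notin U_2^W(\omega)$) gives the first limit as $\log|\lambda_2|+\nu_1^W$, and the assertion \eqref{eq:leading part Lya GNN} just proved gives the second as $\nu_1^W$; the difference is $\log|\lambda_2|$. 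The assumption $\nu_1^W > -\infty$ is precisely what guarantees that this subtraction is well-defined.

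I do not anticipate serious obstacles: the argument is essentially bookkeeping that recombines ingredients already prepared in Lemma~\ref{lemma:lower part Lya GNN} and Propositions~\ref{prop:equivalent norm}--\ref{prop:equivalent lower order norm}. The one point requiring care is ensuring that the lower bound in the first assertion truly attains $\nu_1^W$ without cancellation against the lower-order part; this is automatic because the decomposition \eqref{eq:decomp_Cdn} is a direct sum, so the $\bC^d\otimes V_1$ coordinate is visible in $\|\cdot\|_D$ (hence in $\|\cdot\|_F$) independently of the other Jordan blocks.
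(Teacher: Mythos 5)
Your proposal is correct and follows essentially the same route as the paper: it reruns the lower/upper-bound argument of Lemma~\ref{lemma:lower part Lya GNN} with the leading block $y_{1,1}(t)=\Phi_W(t,\omega)y_{1,1}(0)$ supplying the lower bound, and then obtains \eqref{eq:vertex similarity GNN} by combining Proposition~\ref{prop:equivalent lower order norm}, Lemma~\ref{lemma:lower part Lya GNN}, and the just-proved \eqref{eq:leading part Lya GNN} in a squeeze argument. No gaps.
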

\begin{proof}
    The proof for \eqref{eq:leading part Lya GNN} is the same as the proof for \Cref{lemma:lower part Lya GNN}. For \eqref{eq:vertex similarity GNN}, we notice that 
        \begin{align*}
            \mu(x_{\mathrm{nrs}}(t)) = \frac{\|x_{\mathrm{nrs}}(t)- \overline{x_{\mathrm{nrs}}(t)} \otimes \varphi_{1,1}\|_F ^2}{\|x_{\mathrm{nrs}}(t)\|_F ^2},
        \end{align*}
    as discussed after \Cref{def:lower order part of x}. By \Cref{prop:equivalent lower order norm}, we see that
        \begin{align*}
            \lim_{t \to +\infty} \frac{\log \| \widetilde x_{\mathrm{nrs}}(t)\|_F - \log C_P}{t} & \leq \lim_{t \to +\infty} \frac{1}{t} \log \| x_{\mathrm{nrs}}(t)- \overline{x_{\mathrm{nrs}}(t)} \otimes \varphi_{1,1}\|_F \\
            & \leq \lim_{t \to +\infty} \frac{\log \| \widetilde x_{\mathrm{nrs}}(t)\|_F + \log C_P}{t},
        \end{align*}
    and in both sides, the limits are $\log |\lambda_2| + \nu_1 ^W$ by \Cref{lemma:lower part Lya GNN}. Hence, we can conclude \eqref{eq:vertex similarity GNN}.
\end{proof}

\subsubsection{Proof of \Cref{thm:nonres_GNN}}
We notice that \Cref{lemma:leading part Lya GNN} with $\nu_1^W = \log R(\bP_W)$ already implies \Cref{thm:nonres_GNN} for almost every $x_{\mathrm{nrs}}(0) \in \bC^{d \times n}$, because if we use the form \eqref{eq:unique decomp x}, for any fixed $\omega \in \widetilde \Omega$, the set
    \begin{align}
        \left\{x \in \bC^{d \times n} \ | \ y_{1,1} \in U_{2} ^W (\omega) \text{ or } y_{2,1} \in U_{2} ^W (\omega)  \right\}
    \end{align}
is the union of two proper $\bC$-linear subspaces of $\bC^{d \times n}$, and hence has zero Lebesgue measure in $\bC^{d \times n}$.
We can also get the conclusion for almost every $x(0) \in \bR^{d \times n}$ once we have the following fact:
\begin{lemma}\label{lem:proper_subspace_Rm}
    Assume that $U \subsetneqq \bC^{d \times n}$ is a proper $\bC$-linear subspace of $\bC^{d \times n}$, then $U \cap \bR^{d \times n}$ is a proper $\bR$-linear subspace of $\bR^{d \times n}$.
\end{lemma}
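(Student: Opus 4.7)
The plan is to prove the contrapositive: if $U \cap \bR^{d\times n} = \bR^{d\times n}$, that is $\bR^{d\times n} \subseteq U$, then $U = \bC^{d\times n}$ and hence $U$ is not proper.

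The key observation is the $\bR$-linear decomposition $\bC^{d\times n} = \bR^{d\times n} \oplus i\,\bR^{d\times n}$: every $z \in \bC^{d\times n}$ admits a unique splitting $z = x + iy$ with $x,y \in \bR^{d\times n}$. Assuming $\bR^{d\times n} \subseteq U$, the $\bC$-linearity of $U$ (in particular, closure under multiplication by $i$) gives $i\,\bR^{d\times n} \subseteq U$ as well, so $z = x + iy$ lies in $U$ for every $z \in \bC^{d\times n}$. This contradicts the assumption that $U$ is proper.

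Equivalently, in the forward direction: pick any $z \in \bC^{d\times n} \setminus U$, write $z = x + iy$ with $x,y \in \bR^{d\times n}$; if both $x, y$ belonged to $U$, then by $\bC$-linearity $z = x + iy \in U$, a contradiction. Hence at least one of $x$ or $y$ is an element of $\bR^{d\times n}$ that does not lie in $U$, certifying that $U \cap \bR^{d\times n}$ is a proper $\bR$-subspace of $\bR^{d\times n}$.

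There is no real obstacle here; the statement is a direct consequence of the fact that a $\bC$-subspace of $\bC^m$ is automatically closed under multiplication by $i$, together with the $\bR$-direct-sum decomposition of $\bC^m$ into its real and imaginary parts (used with $m = dn$). The entire argument fits in a few lines and does not require the multiplicative ergodic machinery developed earlier in the paper.
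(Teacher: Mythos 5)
Your proof is correct and uses essentially the same argument as the paper: both proceed by contraposition, showing that a $\bC$-linear subspace containing all of $\bR^{d\times n}$ must equal $\bC^{d\times n}$ (the paper spans the standard real basis over $\bC$, you multiply by $i$ and use the real--imaginary decomposition, which is the same fact).
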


\begin{proof}
    Let $\{e_i\}_{i=1} ^{d \times n}$ be the standard coordinates vectors of $\bR^{d \times n}$, that is, $1$ on the $i$-th entry and $0$ on the other entries. If $\bR^{d \times n} = U \cap \bR^{d \times n}$, then we have that $\{e_i\}_{i=1} ^{d \times n} \subseteq U$. As a consequence,
        \begin{align}
            \bC^{d \times n} = \text{span}_\bC \{ e_i\}_{i=1} ^{d \times n} \subseteq U,
        \end{align}
    which is a contradiction.
\end{proof}

\subsection{Proof of \Cref{thm:res_GNN}}\label{section:proof of main res GNN}
This subsection proves \Cref{thm:res_GNN}.

\subsubsection{Random residual GNN dynamical system on tensor product space}
 The linear random dynamical system $\Phi(t,\omega)$ associated to \eqref{eq:res_GNN} is defined on $\bC^{d\times n}$ over $(\Omega,\calF,\bP,\theta)$, induced by
    \begin{align}\label{e:action of A on tensor res GNN}
        \begin{split}
            A(\omega):\qquad \bC^{d\times n} = \bC^d \otimes \bC^n \qquad & \to\qquad\ \bC^{d\times n} = \bC^d \otimes \bC^n ,\\
            x =  \sum_{i=1} ^p \sum_{j=1} ^{n_i} y_{i,j} \otimes \varphi_{i,j} &\mapsto  x + \alpha \sum_{i=1} ^p \sum_{j=1} ^{n_i} (W y_{i,j}) \otimes (P \varphi_{i,j}), 
        \end{split}
    \end{align}
    where we still used the unique decomposition \eqref{eq:unique decomp x} for any $x \in \bC^{d\times n}$, and $W$ is the first matrix in the sequence $\omega$. This linear map is well-defined, as if we regard $x$ as a $d \times n$ dimensional matrix, then $A(\omega) x = x + \alpha W x P^\top$. Then, we see that the dynamics \eqref{eq:res_GNN} with $\sigma$ being the identity map is precisely driven by $\Phi(t,w)$. For any $x_{\mathrm{rs}}(0) \in \bC^{d \times n}$, $ x_{\mathrm{rs}}{(t)} = \Phi(t,\omega) x_{\mathrm{rs}}{(0)} = A(\theta^{t-1}\omega)\cdots A(\theta\omega)A(\omega) x_{\mathrm{rs}}{(0)}$ in the dynamics \eqref{eq:res_GNN}. For notation purpose, we write
        \begin{align}\label{eq:dyn in decomp res GNN}
             x_{\mathrm{rs}}(t) = \sum_{i=1} ^p \sum_{j=1} ^{n_i} y_{i,j}(t) \otimes \varphi_{i,j}, \quad x_{\mathrm{rs}}(0) = \sum_{i=1} ^p \sum_{j=1} ^{n_i} y_{i,j}(0) \otimes \varphi_{i,j}.
        \end{align}

     By the definition of the action of $A(\omega)$ in \eqref{e:action of A on tensor res GNN} and \eqref{eq:Jordan}, one can easily see that $\bC^d\otimes V_i$ is an invariant subspace of $A(\omega)$ and $\Phi(t,\omega)$ for any $i\in \{1,\dots,p\}$ and any $t\in\bN,\omega\in\Omega$. Hence, we can define for each $i\in \{1,\dots,p\}$, we let $\nu_1(\bC^d\otimes V_i)$ be the largest Lyapunov exponent of $\Phi(t,\omega)$ restricted on $\bC^d \otimes V_i$, which is a constant for $\omega \in \widetilde \Omega$ by ergodicity and part (v) of \Cref{thm: MET}.
     
     Next, similar to \eqref{eq:dyn in decomp GNN low order}, we still define
    \begin{align}\label{eq:dyn in decomp res GNN low order}
        \widetilde x_{\mathrm{rs}}(t) \coloneqq \widetilde {x_{\mathrm{rs}}(t)} = \sum_{i =2} ^p \sum_{j=1} ^{n_i} y_{i,j}(t) \otimes \varphi_{i,j}.
    \end{align}
We remark that under this definition, we have that $\widetilde x_{\mathrm{rs}}(t) = \Phi(t,\omega) (\widetilde{x_{\mathrm{rs}}(0)})$.

     \begin{lemma}\label{lemma:leading part Lya res GNN}
         Fix an $\omega \in \widetilde{\Omega}$. There is a set $U(\omega)$ consisting of the union of two proper $\bC$-linear subspace of $\bC^{d \times n}$, such that for any $x_{\mathrm{rs}}(0) \in \bC^{d \times n} \backslash U(\omega)$, we have that
            \begin{align}\label{eq:leading part Lya res GNN}
                \lim_{t \to +\infty} \frac{1}{t} \log \|x_{\mathrm{rs}}(t)\|_F = \max_{1\leq i\leq p} \nu_1(\bC^d\otimes V_i), \quad \lim_{t \to +\infty} \frac{1}{t} \log \|\widetilde x_{\mathrm{rs}}(t)\|_F = \max_{2\leq i\leq p} \nu_1(\bC^d\otimes V_i) .
            \end{align}
        As a corollary, if $\max_{1\leq i\leq p} \nu_1(\bC^d\otimes V_i)>-\infty$,
            \begin{align}\label{eq:vertex similarity res GNN}
                \lim_{t \to +\infty} \frac{1}{2t} \log \mu(x_{\mathrm{rs}}(t)) = \min \left\{\max_{2\leq i\leq p} \nu_1(\bC^d\otimes V_i) - \nu_1(\bC^d\otimes V_1) , 0 \right\}.
            \end{align}
     \end{lemma}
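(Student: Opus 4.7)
The two limits in \eqref{eq:leading part Lya res GNN} are essentially two applications of \Cref{prop:leading Lya on invariant subspaces}, and the vertex-similarity statement then follows by combining them with \Cref{prop:equivalent lower order norm}. My first step is to observe that by \eqref{e:action of A on tensor res GNN} and \eqref{eq:Jordan}, each summand $\bC^d \otimes V_i$ is $\Phi(t,\omega)$-invariant and the decomposition $\bC^{d\times n} = \bigoplus_{i=1}^p (\bC^d \otimes V_i)$ is exactly of the form needed to apply \Cref{prop:leading Lya on invariant subspaces}. This immediately yields a proper $\bC$-linear subspace $U_1(\omega)\subsetneq \bC^{d\times n}$ outside which $\frac{1}{t}\log\|\Phi(t,\omega)x_{\mathrm{rs}}(0)\|_2 \to \max_{1\leq i\leq p}\nu_1(\bC^d\otimes V_i)$, which (using equivalence of $\|\cdot\|_2$ and $\|\cdot\|_F$ on matrices) gives the first claim in \eqref{eq:leading part Lya res GNN}.

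For the second claim, note that $\widetilde x_{\mathrm{rs}}(t) = \Phi(t,\omega)\widetilde{x_{\mathrm{rs}}(0)}$, and that $\widetilde{x_{\mathrm{rs}}(0)}$ lives in the invariant subspace $Q \coloneqq \bigoplus_{i=2}^p \bC^d\otimes V_i$. Applying \Cref{prop:leading Lya on invariant subspaces} to the restriction $\Phi(t,\omega)|_Q$ and its invariant decomposition $Q = \bigoplus_{i=2}^p \bC^d\otimes V_i$ yields a proper $\bC$-linear subspace $U_2'(\omega)\subsetneq Q$ outside which the limit equals $\max_{2\leq i\leq p}\nu_1(\bC^d\otimes V_i)$. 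Pulling back along the linear projection $\pi : \bC^{d\times n}\to Q$, $\pi(x)=\widetilde x$, I set $U_2(\omega)\coloneqq \pi^{-1}(U_2'(\omega))$; this is a proper $\bC$-linear subspace of $\bC^{d\times n}$ because $U_2'(\omega)$ is proper in $Q$ and $\pi$ is surjective. Taking $U(\omega)\coloneqq U_1(\omega)\cup U_2(\omega)$ then gives both limits in \eqref{eq:leading part Lya res GNN} simultaneously for every $x_{\mathrm{rs}}(0)\notin U(\omega)$.

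For \eqref{eq:vertex similarity res GNN}, write
\begin{equation*}
\frac{1}{2t}\log\mu(x_{\mathrm{rs}}(t)) = \frac{1}{t}\log\|x_{\mathrm{rs}}(t)-\overline{x_{\mathrm{rs}}(t)}\otimes\varphi_{1,1}\|_F - \frac{1}{t}\log\|x_{\mathrm{rs}}(t)\|_F,
\end{equation*}
and use \Cref{prop:equivalent lower order norm} to replace the first term by $\frac{1}{t}\log\|\widetilde x_{\mathrm{rs}}(t)\|_F$ up to an $O(1/t)$ error. Under the assumption $\max_{1\leq i\leq p}\nu_1(\bC^d\otimes V_i)>-\infty$, both limits from the first part are well-defined (the second possibly being $-\infty$), and subtracting gives $\max_{2\leq i\leq p}\nu_1(\bC^d\otimes V_i) - \max_{1\leq i\leq p}\nu_1(\bC^d\otimes V_i)$. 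Since $\max_{1\leq i\leq p}\nu_1 = \max\{\nu_1(\bC^d\otimes V_1),\max_{2\leq i\leq p}\nu_1\}$, this difference simplifies to $\min\{\max_{2\leq i\leq p}\nu_1(\bC^d\otimes V_i)-\nu_1(\bC^d\otimes V_1),0\}$, as claimed.

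I do not anticipate a serious obstacle: the invariance of the tensor factors makes \Cref{prop:leading Lya on invariant subspaces} directly applicable, and \Cref{prop:equivalent lower order norm} was set up precisely to convert between $\|x-\bar x\otimes\varphi_{1,1}\|_F$ and $\|\widetilde x\|_F$. The only small point to be careful about is ensuring $U_2(\omega)$ remains a proper subspace of $\bC^{d\times n}$, which is why I take the preimage under the surjective projection $\pi$ rather than working only inside $Q$; and that the $-\infty$ case in the second Lyapunov exponent is compatible with the stated $\min$ formula, which it is since $\min\{-\infty,0\}=-\infty$ matches the divergence of $\frac{1}{2t}\log\mu(x_{\mathrm{rs}}(t))$ to $-\infty$ in that regime.
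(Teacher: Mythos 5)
Your proposal is correct and follows essentially the same route as the paper's (very terse) proof: apply \Cref{prop:leading Lya on invariant subspaces} to the decomposition $\bC^{d\times n}=\bigoplus_{i=1}^p \bC^d\otimes V_i$ (and its restriction to $\bigoplus_{i\ge 2}\bC^d\otimes V_i$) to get the two Lyapunov limits, then use \Cref{prop:equivalent lower order norm} to translate to the vertex similarity ratio. Your careful handling of the preimage $U_2(\omega)=\pi^{-1}(U_2'(\omega))$ to keep the exceptional set a union of proper subspaces of the full ambient space, and the remark that $\min\{-\infty,0\}$ remains consistent, are details the paper leaves implicit but are exactly what a complete write-up of its argument would contain.
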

     \begin{proof}
         \eqref{eq:leading part Lya res GNN} is a direct application of \Cref{prop:leading Lya on invariant subspaces}, and the proof of \eqref{eq:vertex similarity res GNN} is the same as the proof of  \eqref{eq:vertex similarity GNN} in \Cref{lemma:leading part Lya GNN}.
     \end{proof}

     \subsubsection{Proof of \Cref{thm:res_GNN}}
     Now, we are going to use \Cref{lemma:leading part Lya res GNN} and \eqref{e:action of A on tensor res GNN} to estimate $\nu_1(\bC^d\otimes V_i)$'s for each $i \in \{1,\dots,p\}$. Again, assume that $x(t)$ has the form \eqref{eq:dyn in decomp res GNN}.
     
     First, \eqref{eq:Jordan} gives that $P \varphi_{i,n_i} = \lambda_i \varphi_{i,n_i}$. Hence, $A(\omega) (y_{i,n_i} \otimes \varphi_{i,n_i}) = ((\mathrm{Id}+\alpha \lambda_i W)y_{i,n_i}) \otimes \varphi_{i,n_i}$. Hence, apply \Cref{thm: MET} and \Cref{prop:R_PW} (ii) to the linear random dynamic system $A_i(\omega)y = (\mathrm{Id}+\alpha \lambda_i W)y$, we see that there is a proper $\bC$-linear subspace $U_i(\omega)$ of $\bC^{d}$, such that if $y_{i,n_i}(0) \in \bC^{d} \backslash U_i(\omega)$, then
     \begin{equation}\label{eq:Lya_exponent_lb}
        \begin{split}
            &\quad \lim_{t\to+\infty}\frac{1}{t} \log\|y_{i,n_i}(t)  \otimes \varphi_{i,n_i}\|_F = \lim_{t\to+\infty}\frac{1}{t} \log\|y_{i,n_i}(t)  \otimes \varphi_{i,n_i}\|_D 
            \\  &= \lim_{t\to+\infty}\frac{1}{t} \log\left\|\left(\mathrm{Id}+\alpha\lambda_i W^{(t-1)}\right)\cdots \left(\mathrm{Id}+\alpha\lambda_i W^{(1)}\right)\left(\mathrm{Id}+\alpha\lambda_i W^{(0)}\right)y_{i,n_i}(0)\right\|_2 \\  &= \log R(\alpha\lambda_i,\bP_W),
        \end{split}
    \end{equation}
    where the first equality is by \Cref{prop:equivalent norm} and the last equality is by \Cref{prop:R_PW} (ii).
    Because for any $i\in \{1,\dots,p\}$ and any $t\in\bN,\omega\in\Omega$, $\bC^d\otimes V_i$ is an invariant subspace of $A(\omega)$ and $\Phi(t,\omega)$, we see that $\nu_1(\bC^d\otimes V_i) \geq \log R(\alpha\lambda_i,\bP_W)$, and if $n_i =1$, we have that $\nu_1(\bC^d\otimes V_i) =\log R(\alpha\lambda_i,\bP_W)$.

    Hence, by \eqref{eq:vertex similarity res GNN}, because $\lambda_1 = 1$, we see that 
        \begin{align*}
            \begin{split}
                \lim_{t \to +\infty} \frac{1}{2t} \log \mu(x_{\mathrm{rs}}(t)) &= \min \left\{\max_{2\leq i\leq p} \nu_1(\bC^d\otimes V_i) - \nu_1(\bC^d\otimes V_1) , 0 \right\}
                \\  &= \min \left\{\max_{2\leq i\leq p} \nu_1(\bC^d\otimes V_i) - \log R(\alpha,\bP_W) , 0 \right\}
                \\  &\geq \min \left\{\max_{2\leq i\leq p} \log R(\alpha\lambda_i,\bP_W) - \log R(\alpha,\bP_W) , 0 \right\},
            \end{split}
        \end{align*}
    where the last inequality is an equality if $n_i =1$ for all $i \in \{1,\dots,p\}$, that is, if $P$ is diagonalizable in $\bC$.
    This finishes the proof of \Cref{thm:res_GNN}.

\subsection{Proofs for \Cref{sec:application}}
\label{sec:pf_application}

This subsection collects proofs for results in \Cref{sec:application}.

\subsubsection{Proofs of \Cref{thm:deterministic_W} and \Cref{thm2:deterministic_W}}

\begin{proof}[Proof of \Cref{thm:deterministic_W}]
    Again, because of \eqref{e:action of A on tensor res GNN}, we see that $\bC^d\otimes V_i$'s are invariant subspaces of the action $A: x \mapsto x + \alpha W x P^{\top}$ for any $i\in \{1,\dots,p\}$. According to \Cref{lemma:leading part Lya res GNN}, we only need to estimate $\nu_1(\bC^d\otimes V_i)$ for each $i \in \{1,\dots,p\}$. We fix an $i \in \{1,\dots,p\}$.

    Hence, as in \eqref{eq:dyn in decomp res GNN}, we assume that 
        \begin{align*}
            x_{\mathrm{rs}}(t) =  \sum_{j=1} ^{n_i} y_{i,j}(t) \otimes \varphi_{i,j}, \quad x_{\mathrm{rs}}(0) = \sum_{j=1} ^{n_i} y_{i,j}(0) \otimes \varphi_{i,j}.
        \end{align*}
    We can explicitly compute $y_{i,j}(t)$'s in the following: by induction, one can first show that
        \begin{align*}
            x_{\mathrm{rs}}(t) = \sum_{k=0} ^t \binom{t}{k} \alpha^k W^k x_{\mathrm{rs}}(0) (P^k)^{\top}= \sum_{k=0} ^t \binom{t}{k} \alpha^k \sum_{j=1} ^{n_i} (W^k y_{i,j}(0)) \otimes (P^k \varphi_{i,j}).
        \end{align*}
    By \eqref{eq:Jordan t times}, we see that the right hand side of the above equation equals to
        \begin{align*}
                &\sum_{k=0} ^t \binom{t}{k} \alpha^k \sum_{j=1}^{n_i} \left [(W^k y_{i,j}(0)) \otimes \sum_{s=j} ^{n_i} \lambda_i ^{k-(s-j)} \binom{k}{s-j} \varphi_{i,s} \right]
                \\ &\qquad = \sum_{k=0} ^t \binom{t}{k} \alpha^k \sum_{s=1} ^{n_i} \left[\sum_{j=1} ^s \lambda_i ^{k-(s-j)} \binom{k}{s-j}  W^k y_{i,j}(0) \right]\otimes \varphi_{i,s}
                \\ &\qquad = \sum_{s=1} ^{n_i} \left[ \sum_{j=1} ^s\sum_{k=0} ^t \binom{t}{k} \alpha^k \lambda_i ^{k-(s-j)} \binom{k}{s-j}  W^k y_{i,j}(0) \right]\otimes \varphi_{i,s}
                \\ &\qquad = \sum_{s=1} ^{n_i} \left[ \sum_{j=1} ^s\sum_{k=s-j} ^t \binom{t}{k} \alpha^k \lambda_i ^{k-(s-j)} \binom{k}{s-j}  W^k y_{i,j}(0) \right]\otimes \varphi_{i,s},
        \end{align*}
    where we used the notation that $\binom{a}{b} = 0$ if $b <a$. We notice that
        \begin{align*}
            \binom{t}{k} \binom{k}{s-j} = \frac{t!}{(t-k)! k!} \frac{k!}{(k-(s-j))! (s-j)!} = \binom{t}{s-j} \binom{t-(s-j)}{k-(s-j)}.
        \end{align*}
    So,
        \begin{align*}
            \begin{split}
                y_{i,s}(t) & = \sum_{j=1} ^s\sum_{k=s-j} ^t \binom{t}{k} \alpha^k \lambda_i ^{k-(s-j)} \binom{k}{s-j}  W^k y_{i,j}(0) 
                \\  &= \sum_{j=1} ^s (\alpha
            W)^{s-j} \binom{t}{s-j} \sum_{k=s-j} ^t \binom{t-(s-j)}{k-(s-j)} (\alpha \lambda_i W) ^{k-(s-j)} y_{i,j}(0) 
            \\  &=\sum_{j=1} ^s (\alpha
            W)^{s-j} \binom{t}{s-j} (\mathrm{Id} + \alpha \lambda_i W)^{t-(s-j)} y_{i,j}(0).
            \end{split}
        \end{align*}
    Hence, by \eqref{eq:decomp norm}, we see that
        \begin{align*}
            \begin{split}
                \|x_{\mathrm{rs}}(t)\|_D &\leq \sum_{s=1} ^{n_i} \|y_{i,s}(t)\|_2 \leq n_i  t^{n_i} \max_{1\leq s\leq n_i} \sum_{j=1} ^s \|\alpha W\|_2^{s-j} \| (\mathrm{Id} + \alpha \lambda_i W)^{t-(s-j)} \|_2 \|y_{i,j}(0)\|_2 \\   &\leq C(n_i,\alpha,\lambda,W,\{\|y_{i,j}(0)\|_2\}) \cdot t^{n_i} \| (\mathrm{Id} + \alpha \lambda_i W)^{t-n_i} \|_2,
            \end{split}
        \end{align*}
    where we used $C(n_i,\alpha,\lambda,W,\{\|y_{i,j}(0)\|_2\})>0$ to denote a constant depending on $n_i,\alpha,\lambda, \|W\|_2$, and $\{\|y_{i,j}(0)\|_2\}$'s, which is independent of $t$. Hence,
        \begin{align*}
           \begin{split}
                \nu_1(\bC^d\otimes V_i) &= \limsup_{t \to +\infty} \frac{1}{t} \log \|x(t)\|_D \leq \limsup_{t \to +\infty} \frac{1}{t} \log \| (\mathrm{Id} + \alpha \lambda_i W)^{t-n_i} \|_2
                \\  &= \log\rho(\mathrm{Id}+\alpha\lambda_i W),
           \end{split}
        \end{align*}
    where the last equality is because of Gelfand's formula. Together with the proof for \Cref{thm:res_GNN}, we see that for each $i\in \{1,\dots,p\}$, $\nu_1(\bC^d\otimes V_i) = \rho(\mathrm{Id}+\alpha\lambda_i W)$. This fact together with \Cref{lemma:leading part Lya res GNN} finishes the proof.
\end{proof}

\begin{proof}[Proof of \Cref{thm2:deterministic_W}]
    (i) Take $\lambda_0\in \mathrm{spec}(P)\backslash \{1\}\subseteq(-1,1)$ achieving $\max\limits_{\lambda\in \mathrm{spec}(P)\backslash \{1\}}|\lambda|$. It follows from $1+\lambda_0(2\alpha\mathrm{Re}\mu_0+1)>0$ that
    \begin{align*}
        (1+\alpha\lambda_0\mathrm{Re}(\mu_0))^2 - (\lambda_0+\alpha\lambda_0\mathrm{Re}(\mu_0))^2 = (1-\lambda_0) (1+\lambda_0(2\alpha\mathrm{Re}\mu_0+1)) > 0,
    \end{align*}
    which implies that
    \begin{align*}
        \rho(I+\alpha\lambda_0 W) & \geq |1+\alpha_0\lambda_0\mu_0| = \left((1+\alpha\lambda_0\mathrm{Re}(\mu_0))^2 + (\alpha\lambda_0\mathrm{Im}(\mu_0))^2\right)^{1/2} \\
        & > \left((\lambda_0+\alpha\lambda_0\mathrm{Re}(\mu_0))^2 + (\alpha\lambda_0\mathrm{Im}(\mu_0))^2\right)^{1/2} = \lambda_0 \cdot |1+\alpha \mu_0| = \lambda_0 \cdot \rho(I+\alpha W),
    \end{align*}
    where the inequality is guaranteed by $\alpha\mathrm{Re}(\mu_0) +1\geq 0$. Therefore, if 
        \begin{align*}
            \max_{\lambda\in \mathrm{spec}(P)\backslash\{1\}} \rho(I+\alpha\lambda W) \geq \rho(I+\alpha W),
        \end{align*}
    then,
    \begin{align*}
        \lim_{t\to+\infty} \mu(x_{\mathrm{rs}}{(t)})^{1/2t}& \geq \frac{\max_{\lambda\in \mathrm{spec}(P)\backslash\{1\}} \rho(I+\alpha\lambda W)}{\max_{\lambda\in \mathrm{spec}(P)} \rho(I+\alpha\lambda W)} =1 \\
        & > \max_{\lambda\in \mathrm{spec}(P)\backslash \{1\}}|\lambda| = \lim_{t\to+\infty} \mu(x_{\mathrm{nrs}}{(t)})^{1/2t}.
    \end{align*}
    Otherwise,
    \begin{equation*}
        \begin{split}
            \lim_{t\to+\infty} \mu(x_{\mathrm{rs}}{(t)})^{1/2t}&\geq \frac{\max_{\lambda\in \mathrm{spec}(P)\backslash\{1\}} \rho(I+\alpha\lambda W)}{\max_{\lambda\in \mathrm{spec}(P)} \rho(I+\alpha\lambda W)} \geq \frac{\rho(I+\alpha\lambda_0 W)}{ \rho(I+\alpha W)} 
            \\  &> \lambda_0 = \max_{\lambda\in \mathrm{spec}(P)\backslash \{1\}}|\lambda| = \lim_{t\to+\infty} \mu(x_{\mathrm{nrs}}{(t)})^{1/2t}.
        \end{split}
    \end{equation*}

\noindent  (ii) Take a $\lambda\in \mathrm{spec}(P)\backslash \{1\}\subseteq(-1,1)$ with $2\mathrm{Re}\mu_0+\alpha(\lambda+1)|\mu_0|^2\leq 0$, we see that
    \begin{align*}
        & \rho(I+\alpha\lambda W)^2 - \rho(I+\alpha W)^2 \\
        &\qquad \geq \left((1+\alpha\lambda\mathrm{Re}(\mu_0))^2 + (\alpha\lambda\mathrm{Im}(\mu_0))^2\right) - \left((1+\alpha \mathrm{Re}(\mu_0))^2 + (\alpha \mathrm{Im}(\mu_0))^2\right) \\
        &\qquad = 2\alpha(\lambda-1) \mathrm{Re}\mu_0 + \alpha^2(\lambda^2-1)|\mu_0|^2 = \alpha(\lambda-1)\left(2\mathrm{Re}\mu_0+\alpha(\lambda+1)|\mu_0|^2\right) \geq 0,
    \end{align*}
    which implies that
    \begin{equation*}
        \rho(I+\alpha\lambda W) \geq \rho(I+\alpha W).
    \end{equation*}
    We can thus conclude that
    \begin{equation*}
        \lim_{t\to+\infty} \mu(x_{\mathrm{rs}}{(t)})^{1/2t} \geq \frac{\max_{\lambda\in \mathrm{spec}(P)\backslash\{1\}} \rho(I+\alpha\lambda W)}{\max_{\lambda\in \mathrm{spec}(P)} \rho(I+\alpha\lambda W)} = 1 > \max_{\lambda\in \mathrm{spec}(P)\backslash \{1\}}|\lambda|,
    \end{equation*}
    which completes the proof together with \Cref{thm:nonres_GNN}.
\end{proof}

\subsubsection{Proofs of \Cref{thm:gaussian_W} and \Cref{thm2:gaussian_W}}\label{section: proof of iid Gaussian res GNN}

We first present a proposition that implies \Cref{thm:gaussian_W} immediately.

\begin{proposition}
\label{prop: iid Gaussian W}
    Assume that \Cref{asp:W} holds and in addition that $W^{(t)}$'s entries are drawn i.i.d. from $\calN(0,\tau^2)$. Then, for $\beta \in \bR$, the $R(\beta,\bP_W)$ defined in \Cref{prop:R_PW} (ii) satisfies that
        \begin{align*}
            \log R(\beta,\bP_W) = \frac{1}{2} \bE \log \left((1  + \beta\tau \xi) ^2 + \beta^2\tau^2 \chi_{d-1} ^2 \right),
        \end{align*}
    where $\xi\sim\calN(0,1)$ is a $1$-dimensional standard real Gaussian distribution, and $\chi^2_{d-1}$ is a chi-squared distribution with $d-1$ degrees of freedom that is independent of $\xi$.
\end{proposition}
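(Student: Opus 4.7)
The plan is to identify $\log R(\beta,\bP_W)$ with the almost-sure growth rate of $\|\Phi(t,\omega)e_1\|_2$, where the rotational invariance of the Gaussian ensemble reduces the product of random matrices to a random walk with i.i.d.\ scalar increments, and then to pass from a single trajectory to the operator norm via a Frobenius-norm sandwich.

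The first step is to observe that for any fixed unit vector $u\in\R^d$, the Gaussianity of $W$ gives $Wu\sim\mathcal N(0,\tau^2 I_d)$. Decomposing $Wu$ into its components parallel and perpendicular to $u$ yields two independent pieces with $\langle Wu,u\rangle\sim\mathcal N(0,\tau^2)$ and $\|(Wu)^\perp\|_2^2\sim\tau^2\chi^2_{d-1}$, so
\[
\|(\mathrm{Id}+\beta W)u\|_2^2 \;\stackrel{d}{=}\; (1+\beta\tau\xi)^2+\beta^2\tau^2\chi^2_{d-1},
\]
with $\xi\sim\mathcal N(0,1)$ independent of $\chi^2_{d-1}$, and crucially this distribution does not depend on $u$. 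Setting $y(t):=\Phi(t,\omega)e_1$ and $u_s:=y(s)/\|y(s)\|_2$, it follows that conditional on $\mathcal F_s=\sigma(W^{(0)},\dots,W^{(s-1)})$ the squared ratios $\rho_s:=\|y(s+1)\|_2^2/\|y(s)\|_2^2$ have the fixed distribution above. Since this law does not depend on $\mathcal F_s$, the $\{\rho_s\}_{s\ge 0}$ are i.i.d. After verifying $\log\rho_s\in L^1$—the upper tail via $\log(1+X^2+Y^2)\le X^2+Y^2$ and the logarithmic singularity at the origin via a polar-coordinates bound against the joint density—the strong law of large numbers gives
\[
\tfrac{1}{t}\log\|y(t)\|_2 \;=\; \tfrac{1}{2t}\sum_{s=0}^{t-1}\log\rho_s \;\xrightarrow{\mathrm{a.s.}}\; \tfrac{1}{2}\,\bE\log\bigl[(1+\beta\tau\xi)^2+\beta^2\tau^2\chi^2_{d-1}\bigr].
\]

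Finally, $\log R(\beta,\bP_W)=\lim_{t\to\infty}t^{-1}\log\|\Phi(t,\omega)\|_2$ is sandwiched by $t^{-1}\log\|\Phi(t,\omega)e_1\|_2 \le t^{-1}\log\|\Phi(t,\omega)\|_2 \le t^{-1}\log\bigl(\sqrt d\,\max_{i}\|\Phi(t,\omega)e_i\|_2\bigr)$, using $\|\cdot\|_2\le\|\cdot\|_F$. The lower bound combined with the SLLN computation yields $\log R(\beta,\bP_W)\ge\tfrac{1}{2}\bE\log[(1+\beta\tau\xi)^2+\beta^2\tau^2\chi^2_{d-1}]$. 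For the upper bound, applying the same SLLN argument with $e_i$ in place of $e_1$ produces the identical a.s.\ limit for every $i$; since a maximum of finitely many sequences with a common a.s.\ limit inherits that limit, the sandwich closes with the matching inequality. I expect the main technical subtlety to lie in verifying the conditional-law statement for $\rho_s$ (the increment's distribution depends on $u_s$ only through $\|u_s\|_2=1$, which is what produces the i.i.d.\ structure), together with the $L^1$-integrability of $\log\rho_s$ near the measure-zero zero locus of the underlying quadratic form.
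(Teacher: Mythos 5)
Your proof is correct, and it takes a genuinely different route than the paper. The paper's proof is short because it invokes an off-the-shelf theorem of Cohen (1984) (Proposition~2.1 there, restated as \Cref{thm: explicit Lya exponent} in the paper): once one checks that $\|(\mathrm{Id}+\beta W)x\|_2$ has an $x$-independent law on $\mathbb{S}^{d-1}$, that theorem immediately gives $\log R(\beta,\bP_W)=\bE\log\|(\mathrm{Id}+\beta W)e_1\|_2$, and only the rotational-invariance check and the explicit distributional identity remain. You instead reprove the relevant special case of Cohen's result from scratch: rotational invariance makes the conditional law of $\rho_s=\|y(s+1)\|_2^2/\|y(s)\|_2^2$ given $\mathcal{F}_s$ constant, hence the $\rho_s$ are i.i.d., and the SLLN yields the a.s. growth rate of a single trajectory; the Frobenius sandwich $\|\Phi\|_2\le\|\Phi\|_F\le\sqrt d\max_i\|\Phi e_i\|_2$ then upgrades this to the operator-norm rate that defines $R(\beta,\bP_W)$. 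The common ingredient is the rotational-invariance observation (which the paper verifies by showing $Q^\top WQ$ is again Ginibre, while you verify it more directly by decomposing $Wu$ into its $u$-parallel and $u$-orthogonal pieces). What your route buys is self-containedness and transparency about where the i.i.d. structure comes from; what the paper's route buys is brevity and generality (Cohen's proposition applies beyond the Gaussian case). One small remark on your integrability step: the lower-tail bound is simpler than a polar-coordinates computation, since $\rho\ge(1+\beta\tau\xi)^2$ gives $\log^-\rho\le 2\log^-|1+\beta\tau\xi|$, and the latter is integrable because $1+\beta\tau\xi$ has a bounded density near $0$. You should also note (briefly) that $y(s)\ne 0$ a.s.\ for all $s$, so that $u_s$ is well-defined; this follows inductively since $W^{(s)}$ has a density.
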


\begin{proof}[Proof of \Cref{thm:gaussian_W}]
    \Cref{thm:gaussian_W} is a direct corollary of \Cref{thm:res_GNN} and \Cref{prop: iid Gaussian W}.
\end{proof}

We then present the proof of \Cref{prop: iid Gaussian W}, with the main tool being Proposition 2.1 in \cite{Cohen84}. For the convenience of the readers, we adopt the statement of this proposition in the settings of our \Cref{prop:R_PW} (ii).

\begin{theorem}[{\cite[Proposition 2.1]{Cohen84}}]\label{thm: explicit Lya exponent}
    Suppose \Cref{asp:W} holds. If $\beta\in\bR$ and $\| (\mathrm{Id} + \beta W^{(t)})x\|_2$ has a distribution which does not depend on $x \in \mathbb{S}^{d-1} \subseteq \bR^{d}$, where $\mathbb{S}^{d-1}$ is the unit sphere in $\bR^{d}$, then
        \begin{align*}
            \log R(\beta, \bP_W) = \bE \log \|(\mathrm{Id} +\beta W^{(t)})e_1\|_2,
        \end{align*}
    where $e_1 = (1,0,\dots, 0) ^{\top} \in \bR^d$.
\end{theorem}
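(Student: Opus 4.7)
The plan is to write $\log\|M^{(t)} e_1\|_2$, where $M^{(t)} \coloneqq (\mathrm{Id}+\beta W^{(t-1)})\cdots(\mathrm{Id}+\beta W^{(0)})$, as a partial sum of i.i.d.\ random variables via a telescoping argument enabled by the rotational-invariance hypothesis, apply the strong law of large numbers, and then match the resulting limit with $\log R(\beta,\bP_W)$ through \Cref{thm: MET} combined with a Fubini exchange.

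Concretely, set $u_0 = e_1$ and $u_s \coloneqq M^{(s)} e_1/\|M^{(s)} e_1\|_2 \in \mathbb{S}^{d-1}$, and telescope to obtain
$$\log\|M^{(t)} e_1\|_2 = \sum_{s=0}^{t-1} Z_s, \qquad Z_s \coloneqq \log\|(\mathrm{Id}+\beta W^{(s)}) u_s\|_2 .$$
Since $u_s$ is measurable with respect to $\mathcal{F}_s \coloneqq \sigma(W^{(0)},\ldots,W^{(s-1)})$ and $W^{(s)}$ is independent of $\mathcal{F}_s$, the hypothesis forces the conditional law of $Z_s$ given $\mathcal{F}_s$ to coincide with the (deterministic) law of $\log\|(\mathrm{Id}+\beta W^{(s)}) e_1\|_2$; iterating, $(Z_s)_{s\geq 0}$ is i.i.d.\ with the marginal of $Z_0$. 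The crude bound $Z_0 \leq \log(1+|\beta|\|W^{(0)}\|_2)$ combined with \Cref{asp:W} yields $\bE Z_0^+ < \infty$, so Kolmogorov's SLLN gives $\tfrac{1}{t}\log\|M^{(t)} e_1\|_2 \to c \coloneqq \bE Z_0 \in [-\infty,+\infty)$ almost surely. Exactly the same derivation goes through with $e_1$ replaced by any fixed $x \in \mathbb{S}^{d-1}$, so $\tfrac{1}{t}\log\|M^{(t)} x\|_2 \to c$ almost surely for every such $x$.

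To identify $c$ with $\log R(\beta,\bP_W)$, apply \Cref{thm: MET} to the linear random dynamical system induced by $A(\omega) = \mathrm{Id}+\beta W^{(0)}$: $\bP$-almost surely, there is a proper subspace $U_2(\omega) \subsetneq \bR^d$ on whose complement $\tfrac{1}{t}\log\|M^{(t)} v\|_2 \to \log R(\beta,\bP_W)$. Let $\nu$ denote the uniform probability on $\mathbb{S}^{d-1}$ and consider $E \coloneqq \{(\omega,v) : \lim_t t^{-1}\log\|M^{(t)}(\omega)v\|_2 = c\}$. Its $v$-slices $E_x$ satisfy $\bP(E_x)=1$ by the previous step, so Fubini gives $(\bP\otimes\nu)(E) = 1$, hence $\nu(E_\omega)=1$ for $\bP$-a.e.\ $\omega$. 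Since also $\nu(\mathbb{S}^{d-1}\setminus U_2(\omega)) = 1$, there exists $v \in \mathbb{S}^{d-1}$ at which both limits hold simultaneously, forcing $c = \log R(\beta,\bP_W)$.

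The most delicate step is this final Fubini/MET matching: one has to juggle two full-$\nu$-measure subsets of $\mathbb{S}^{d-1}$ whose underlying $\bP$-full-measure $\omega$-sets are different, and to handle the degenerate case $c = -\infty$ (in which the argument still shows $\log R(\beta,\bP_W) = -\infty$ from the deterministic bound $\|M^{(t)} v\|_2 \leq \|M^{(t)}\|_2 \|v\|_2$ and the previous paragraph). The telescoping, i.i.d.\ reduction, and integrability check are otherwise routine once the conditional-law computation above is organized correctly.
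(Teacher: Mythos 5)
The paper does not prove this statement at all---it is imported verbatim as \cite[Proposition 2.1]{Cohen84}---so there is nothing internal to compare against; what you have done is reconstruct the proof of the cited result. Your reconstruction is correct and is in fact the standard Furstenberg/Cohen--Newman argument: the telescoping into increments $Z_s$, the freezing-lemma observation that the conditional law of $Z_s$ given $\mathcal{F}_s$ is the fixed law of $\log\|(\mathrm{Id}+\beta W)e_1\|_2$ (hence the $Z_s$ are i.i.d.), the one-sided integrability check matching \eqref{eq:logW_L1}, the SLLN, and the Fubini exchange against the Oseledets filtration to identify the constant with the top Lyapunov exponent $\log R(\beta,\bP_W)$. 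The Fubini step is handled correctly, including the point that $U_2(\omega)$ is a proper subspace and hence $\nu$-null. The only loose end is the possibility that the common law of $\|(\mathrm{Id}+\beta W)x\|_2$ has an atom at $0$: then $u_{s+1}$ is undefined after the first time the norm vanishes and the i.i.d.\ structure of $(Z_s)$ breaks down. This is easily dispatched---if the atom has positive mass, the product $M^{(t)}e_1$ is eventually zero almost surely, both $c$ and $\nu_1$ equal $-\infty$, and the identity holds trivially; otherwise one restricts to the full-measure event on which all $u_s$ are defined---but a sentence to that effect belongs in the writeup alongside your remark on the $c=-\infty$ case.
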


\begin{proof}[Proof of \Cref{prop: iid Gaussian W}]
    By \Cref{thm: explicit Lya exponent}, we need to verify that $\| \mathrm{Id} + \beta W^{(t)}x\|_2$ has a distribution which does not depend on $x \in \mathbb{S}^{d-1}$. We notice that for any $x \in \mathbb{S}^{d-1}$, there is an orthonormal matrix $Q\in\bR^{d \times d}$ such that $x = Qe_1$, where $e_1 = (1,0,\dots, 0) ^{\top} \in \mathbb{S}^{d-1}$. We are going to show that the entries of $ W^{(t)}Q$, which we denote as $\{Z_{ij}^{(t)}\}$'s, are also i.i.d. Gaussian distributions $\calN(0,\tau^2)$. By definition, we see that $Z_{ij}^{(t)} = \sum_{k=1} ^d W^{(t)}_{ik}Q_{kj}$,
    which is a linear combination of Gaussian distributions, and hence $Z_{ij}^{(t)}$ is also a Gaussian distribution. Also, for $(i_1,j_1) , (i_2,j_2) \in \{1,\dots,d\} \times \{1,\dots,d\}$, we have that
        \begin{align*}
            \bE \left( Z_{i_1 j_1}^{(t)}  Z_{i_2 j_2}^{(t)}\right) = \bE \left( \sum_{k,l=1}^d W^{(t)}_{i_1 k}Q_{k j_1} W^{(t)}_{i_2 l} Q_{l j_2}\right) = \sum_{k,l=1}^d Q_{k j_1} Q_{l j_2} \bE\left(W^{(t)}_{i_1 k} W^{(t)}_{i_2 l}\right).
        \end{align*}
    If $i_1 \neq i_2$, then $\bE(W^{(t)}_{i_1 k} W^{(t)}_{i_2 l}) = 0$ for any $k,l \in \{1,\dots,d\}$ because $W^{(t)}_{i_1 k}$ is independent of $W^{(t)}_{i_2 l}$. If $i_1 = i_2$, then $\bE(W^{(t)}_{i_1 k} W^{(t)}_{i_2 l}) = \tau^2 \delta_{kl}$, and then $\bE \left( Z^{(t)}_{i_1 j_1}  Z^{(t)}_{i_1 j_2}\right) = \tau^2 \sum_{k,l} Q_{k j_1} Q_{l j_2} \delta_{kl} = \tau^2 \sum_k Q_{k j_1} Q_{k j_2}$. Because $Q$ is an orthonormal matrix, its columns consist of an orthonormal basis of $\bR^d$. Hence, $\bE \left( Z^{(t)}_{i_1 j_1}  Z^{(t)}_{i_1 j_2}\right) = \tau^2 \sum_k Q_{k j_1} Q_{k j_2} = \tau^2 \delta_{j_1 j_2}$. We can then conclude that the entries of $W^{(t)}Q$ are also i.i.d. standard Gaussian distributions $\calN(0,\tau^2)$. Repeating the same arguments yields that the entries of $Q^\top W^{(t)}Q$ are still i.i.d. standard Gaussian distributions $\calN(0,\tau^2)$, which implies that the distribution of $\|(\mathrm{Id} + \beta W^{(t)})x\|_2 = \|(\mathrm{Id} + \beta Q^\top W^{(t)}Q)e_1\|_2$ is the same as the distribution of $\|(\mathrm{Id} + \beta W^{(t)})e_1\|_2$ and is hence independent of $x\in\mathbb{S}^{d-1}$.

    Therefore, we can apply \Cref{thm: explicit Lya exponent} and conclude that 
    \begin{equation*}
        \log R(\beta, \bP_W) = \bE \log \|(\mathrm{Id} +\beta W^{(t)})e_1\|_2 = \frac{1}{2} \bE \log \left((1  + \beta\tau \xi) ^2 + \beta^2\tau^2 \chi_{d-1} ^2 \right),
    \end{equation*}
    where $\xi$ is a $1$-dimensional standard Gaussian distribution $\calN(0,1)$, and $\chi^2_{d-1}$ is a chi-squared distribution with $d-1$ degrees of freedom that is independent of $\xi$.
\end{proof}

Next, we present the proof of \Cref{thm2:gaussian_W}, for which we need the following lemma.

\begin{lemma}\label{lem:monotone_gaussian}
    Let $\xi\sim\calN(0,1)$ and fix $b\geq 0$. The function
    \begin{equation*}
        f(a):= \bE \log\left((a+\xi)^2 + b\right)
    \end{equation*}
    is strictly increasing on $(0,+\infty)$.
\end{lemma}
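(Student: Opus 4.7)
The plan is to establish strict monotonicity by a Gaussian reflection argument that exploits the structure of $r \mapsto \log(r^2 + b)$, which is even in $r$ and strictly increasing on $[0,\infty)$. Fix $0 < a_1 < a_2$ and set $m = (a_1 + a_2)/2 > 0$. Writing $\phi$ for the standard Gaussian density, one has
\begin{equation*}
f(a_2) - f(a_1) = \int_{\bR} \log(y^2 + b)\bigl[\phi(y - a_2) - \phi(y - a_1)\bigr]\,dy.
\end{equation*}
The key observation is that $\phi(\cdot - a_1)$ and $\phi(\cdot - a_2)$ are reflections of each other about $y = m$, since $2m - a_1 = a_2$. Performing the substitution $y \mapsto 2m - y$ on the part of the integral over $(-\infty, m)$ and recombining with the part over $(m,\infty)$ yields
\begin{equation*}
f(a_2) - f(a_1) = \int_m^{\infty} \bigl[\log(y^2 + b) - \log((2m-y)^2 + b)\bigr]\bigl[\phi(y - a_2) - \phi(y - a_1)\bigr]\,dy.
\end{equation*}

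I will then verify that both bracketed factors are strictly positive almost everywhere on $(m,\infty)$. The Gaussian bracket is positive because $y > m = (a_1+a_2)/2$ places $y$ strictly closer to $a_2$ than to $a_1$. The logarithmic bracket is positive because the hypothesis $a_1, a_2 > 0$ gives $m > 0$, which forces $|y| > |2m - y|$ for all $y > m$ (a one-line check splitting into $y \in (m, 2m)$ and $y \geq 2m$), and $r \mapsto \log(r^2 + b)$ is strictly increasing in $|r|$. The integrand is therefore strictly positive almost everywhere on $(m,\infty)$, yielding $f(a_2) > f(a_1)$.

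The main (minor) obstacle is justifying integrability when $b = 0$, in which case $\log((2m - y)^2)$ has a logarithmic blow-up at $y = 2m$; this singularity is integrable and the Gaussian difference provides rapid decay at infinity, so the reflection manipulation and the pointwise positivity conclusion remain valid. Conceptually, the argument reflects the fact that the folded normal $|a + \xi|$ is strictly stochastically increasing in $a > 0$, so its expectation under any strictly increasing function on $[0,\infty)$ (here $r \mapsto \log(r^2 + b)$) is strictly increasing.
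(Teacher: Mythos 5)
Your proof is correct and its reflection computation is sound: the change of variables $y\mapsto 2m-y$ on $(-\infty,m)$ produces exactly the folded integral you state, the Gaussian bracket $\phi(y-a_2)-\phi(y-a_1)$ is strictly positive for $y>m=(a_1+a_2)/2$ since $y$ is then closer to $a_2$, and the logarithmic bracket is strictly positive because $m>0$ forces $|y|>|2m-y|$ for every $y>m$, so the folded integrand is strictly positive on $(m,\infty)$. Your integrability remarks for $b=0$ are also correct, since the logarithmic singularity at $y=2m$ is integrable against the Gaussian weight.

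Your route differs from the paper's. The paper differentiates under the integral sign, rewrites $f'(a)$ via the substitution $\eta\mapsto\eta-a$ as $\frac{\sqrt{2}}{\sqrt{\pi}}\int_{\bR}\frac{\eta}{\eta^2+b}e^{-(\eta-a)^2/2}\,d\eta$, and then splits at $\eta=0$ to fold the integral onto $(0,\infty)$, obtaining $f'(a)=\frac{\sqrt{2}}{\sqrt{\pi}}\int_0^\infty\frac{\eta}{\eta^2+b}\left(e^{-(\eta-a)^2/2}-e^{-(\eta+a)^2/2}\right)d\eta>0$; the reflection there is about the origin in the shifted variable. You instead work directly with the finite difference $f(a_2)-f(a_1)$ and reflect about the midpoint $m$. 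Both are reflection arguments, but yours has a genuine technical advantage: it avoids justifying differentiation under the integral sign, which for $b=0$ is delicate since $\partial_a\log(a+\eta)^2=2/(a+\eta)$ is not absolutely integrable against the Gaussian near $\eta=-a$, so the paper's $f'(a)$ formula really holds only as a principal value there (a point the paper does not address; in its actual application $b=\chi^2_{d-1}>0$ a.s.). Your approach requires only that $f$ itself is well defined, which is the weaker and more robust hypothesis, and the closing observation that $|a+\xi|$ is stochastically increasing in $a>0$ is a clean conceptual summary of why the result holds for any strictly increasing integrand, not just $r\mapsto\log(r^2+b)$.
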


\begin{proof}
    It can be computed that
    \begin{align*}
        f(a) = \frac{1}{\sqrt{2\pi}} \int_\bR \log\left((a+\eta)^2 + b\right) e^{-\frac{\eta^2}{2}}\mathrm{d}\eta,
    \end{align*}
    and that
    \begin{align*}
        f'(a) & = \frac{1}{\sqrt{2\pi}} \int_\bR \frac{2(a+\eta)}{(a+\eta)^2 + b} e^{-\frac{\eta^2}{2}}\mathrm{d}\eta = \frac{\sqrt{2}}{\sqrt{\pi}} \int_\bR \frac{\eta}{\eta^2 + b} e^{-\frac{(\eta-a)^2}{2}}\mathrm{d}\eta \\
        &  = \frac{\sqrt{2}}{\sqrt{\pi}}\left(\int_0^{+\infty} \frac{\eta}{\eta^2 + b} e^{-\frac{(\eta-a)^2}{2}}\mathrm{d}\eta + \int_0^{+\infty} \frac{-\eta}{\eta^2 + b} e^{-\frac{(-\eta-a)^2}{2}}\mathrm{d}\eta\right) \\
        & = \frac{\sqrt{2}}{\sqrt{\pi}} \int_0^{+\infty} \frac{\eta}{\eta^2 + b} \left(e^{-\frac{(\eta-a)^2}{2}} - e^{-\frac{(\eta+a)^2}{2}} \right)\mathrm{d}\eta > 0,
    \end{align*}
    which completes the proof.
\end{proof}

\begin{proof}[Proof of \Cref{thm2:gaussian_W}]
    By using \Cref{lem:monotone_gaussian}, one can compute for any $\lambda\in\mathrm{spec}(P)\subseteq(-1,1)$ that
    \begin{align*}
        & \bE \log \left((1+\alpha\lambda\tau\xi)^2 + \alpha^2\lambda^2\tau^2\xi_{d-1}^2\right) = \bE \log \left((1+\alpha|\lambda|\tau\xi)^2 + \alpha^2\lambda^2\tau^2\chi_{d-1}^2\right) \\
        & \qquad = \bE_{\chi_{d-1}}\bE_\xi \log \left((1+\alpha|\lambda|\tau\xi)^2 + \alpha^2\lambda^2\tau^2\chi_{d-1}^2\right) \\
        &\qquad = \log(\alpha^2\lambda^2\tau^2) + \bE_{\chi_{d-1}}\bE_\xi \log \left(\left(\frac{1}{\alpha|\lambda|\tau}+\xi\right)^2 + \chi_{d-1}^2\right) \\
        &\qquad > \log(\alpha^2\lambda^2\tau^2) + \bE_{\chi_{d-1}}\bE_\xi \log \left(\left(\frac{1}{\alpha\tau}+\xi\right)^2 + \chi_{d-1}^2\right) \\
        & \qquad = 2\log|\lambda| + \bE_{\chi_{d-1}}\bE_\xi \log \left((1+\alpha\tau\xi)^2 + \alpha^2\tau^2\chi_{d-1}^2\right).
    \end{align*}
    Therefore, we have
    \begin{equation*}
        \exp\left(\frac{1}{2}\bE \log \left((1+\alpha\lambda\tau\xi)^2 + \alpha^2\lambda^2\tau^2\xi_{d-1}^2\right)\right) > |\lambda| \exp\left(\frac{1}{2}\bE \log \left((1+\alpha\tau\xi)^2 + \alpha^2\tau^2\xi_{d-1}^2\right)\right),
    \end{equation*}
    which combined with \eqref{eq:gaussian_W} leads to that
    \begin{align*}
         \lim_{t\to+\infty} \mu(x_{\mathrm{rs}}(t))^{1/2t} & \geq \frac{ \max\limits_{\lambda\in \mathrm{spec}(P)\backslash\{1\}} \exp\left(\frac{1}{2} \bE \log \left((1  + \alpha\lambda\tau \xi) ^2 + \alpha^2\lambda^2\tau^2 \chi_{d-1} ^2 \right)\right)}{ \max\limits_{\lambda\in \mathrm{spec}(P)} \exp\left(\frac{1}{2} \bE \log \left((1  + \alpha\lambda\tau \xi) ^2 + \alpha^2\lambda^2\tau^2 \chi_{d-1} ^2 \right)\right)} \\
         & > \max_{\lambda\mathrm{spec}(P)\backslash\{1\}} |\lambda| = \lim_{t\to+\infty} \mu(x_{\mathrm{nrs}}(t))^{1/2t}.
    \end{align*}
    \end{proof}

\subsubsection{Proof of \Cref{thm:bounded_W}}
    For any $\lambda\in\mathrm{spec}(P)$, one can estimate that
    \begin{align*}
        R(\alpha\lambda,\bP_W) & = \lim_{t\to+\infty} \norm{\left(I+\alpha\lambda W^{(t-1)}\right) \cdots \left( I+\alpha\lambda W^{(1)}\right) \left( I + \alpha\lambda W^{(0)}\right)}^{1/t} \\
        & \leq 1+\alpha|\lambda|r_W \leq 1+\alpha r_W.
    \end{align*}
    Moreover, for $\lambda\in\mathrm{spec}(P)\backslash\{1\}$ and $\alpha\leq 1/r_W$, it follows from
    \begin{equation*}
        \norm{(I+\alpha \lambda W^{(t)}) y} \geq \norm{y} - \alpha |\lambda|\norm{W^{(t)} y} \geq (1-\alpha |\lambda| r_W) \norm{y},
    \end{equation*}
    that 
    \begin{equation*}
        \norm{\left(I+\alpha\lambda W^{(t-1)}\right) \cdots \left( I+\alpha\lambda W^{(1)}\right) \left( I + \alpha\lambda W^{(0)}\right) y}\geq (1-\alpha |\lambda| r_W)^t \norm{y}.
    \end{equation*}
    One can thus conclude that
    \begin{equation*}
        R(\alpha\lambda,\bP_W) = \lim_{t\to+\infty} \norm{\left(I+\alpha\lambda W^{(t-1)}\right) \cdots \left( I+\alpha\lambda W^{(1)}\right) \left( I + \alpha\lambda W^{(0)}\right)}^{1/t} \geq 1-\alpha |\lambda| r_W,
    \end{equation*}
    and hence that
    \begin{equation*}
        \lim_{t\to+\infty} \mu(x_{\mathrm{rs}}{(t)})^{1/2t} \geq \frac{\max_{\lambda\in \mathrm{spec}(P)\backslash\{1\}} R(\alpha\lambda,\bP_W)}{\max_{\lambda\in \mathrm{spec}(P)} R(\alpha\lambda,\bP_W)} \geq \frac{1-\alpha r_W\min_{\lambda\in\mathrm{spec}(P)\backslash\{1\}}|\lambda|}{1+\alpha r_W}.
    \end{equation*}
    If we further assume that 
    \begin{align*}
        \alpha r_W \left(\min_{\lambda\in \mathrm{spec}(P)\backslash \{1\}}|\lambda|+\max_{\lambda\in \mathrm{spec}(P)\backslash \{1\}}|\lambda|\right) <1 - \max_{\lambda\in \mathrm{spec}(P)\backslash \{1\}}|\lambda|,
    \end{align*}
    then we notice that
        \begin{align*}
            \frac{1-\alpha r_W\min_{\lambda\in\mathrm{spec}(P)\backslash\{1\}}|\lambda|}{1+\alpha r_W}  >  \max_{\lambda\in\mathrm{spec}(P)\backslash\{1\}}|\lambda|.
        \end{align*}
    This together with \Cref{thm:nonres_GNN} finishes the proof.
 
\subsubsection{Proofs of \Cref{thm:diagonal_W} and \Cref{thm2:diagonal_W}}

\begin{proof}[Proof of \Cref{thm:diagonal_W}]
    It can be computed directly that
    \begin{align*}
        \log R(\alpha\lambda, \bP_W) & = \lim_{t\to+\infty} \frac{1}{t}\log \norm{\left(I+\alpha\lambda W^{(t-1)}\right) \cdots \left( I+\alpha\lambda W^{(1)}\right) \left( I + \alpha\lambda W^{(0)}\right)} \\
        & =\max_{1\leq i\leq d} \lim_{t\to+\infty}\frac{1}{t} \sum_{j=0}^{t-1}\log \left|1+\alpha\lambda w_i^{(j)}\right| = \max_{1\leq i\leq d} \bE\left[\log \left|1+\alpha\lambda w_i^{(t)}\right|\right],
    \end{align*}
    where we used \Cref{prop:leading Lya on invariant subspaces} and the last step is from the strong law of large numbers. The \Cref{thm:diagonal_W} follows directly from \Cref{thm:res_GNN}.
\end{proof}

\begin{proof}[Proof of \Cref{thm2:diagonal_W}]
    For any $\lambda\in\mathrm{spec}(P)\backslash\{1\}\subseteq(-1,1)$ and any $i\in\{1,\dots,d\}$, because $\alpha \max_{1\leq i\leq d}r_i<1$ and $\bP_i$ is symmetric, we have that
    \begin{align*}
        &\bE\left[\log \left|1+\alpha\lambda w_i^{(t)}\right|\right]  = \bE\left[\log \left(1+\alpha\lambda w_i^{(t)}\right)\right] \\
        &\qquad = \frac{1}{2} \bE\left[\log \left(1+\alpha\lambda w_i^{(t)}\right)\right] +\frac{1}{2} \bE\left[\log \left(1-\alpha\lambda w_i^{(t)}\right)\right] = \frac{1}{2} \bE\left[\log \left(1-\alpha^2\lambda^2 (w_i^{(t)})^2\right)\right] \\
        &\qquad  \geq \frac{1}{2} \bE\left[\log \left(1-\alpha^2 (w_i^{(t)})^2\right)\right] = \frac{1}{2} \bE\left[\log \left(1+\alpha w_i^{(t)}\right)\right] +\frac{1}{2} \bE\left[\log \left(1-\alpha w_i^{(t)}\right)\right] \\
        & \qquad = \bE\left[\log \left(1+\alpha w_i^{(t)}\right)\right] = \bE\left[\log \left|1+\alpha w_i^{(t)}\right|\right].
    \end{align*}
    This implies that
    \begin{equation*}
        \lim_{t\to+\infty} \mu(x_{\mathrm{rs}}{(t)})^{1/2t} \geq \frac{\max\limits_{\lambda\in\mathrm{spec}(P)\backslash\{1\}} \max\limits_{1\leq i\leq d} \exp\left( \bE\left[\log \left|1+\alpha\lambda w_i^{(t)}\right|\right]\right)}{\max\limits_{\lambda\in\mathrm{spec}(P)} \max\limits_{1\leq i\leq d} \exp\left( \bE\left[\log \left|1+\alpha\lambda w_i^{(t)}\right|\right]\right)} = 1.
    \end{equation*}
    The proof is thus completed.
\end{proof}

\section{Numerical Experiments}
\label{sec:numerics}

In this section, we validate our theoretical results and discuss some practical insights via numerical experiments.

\subsection{Experimental setup}

Our experiments utilize three citation network datasets: Cora, CiteSeer, and PubMed~\cite{yang2016revisiting}. In these datasets, vertices represent publications, and edges denote citation relationships. Features are encoded as binary bag-of-words representations based on the corpus vocabulary, where a feature value of $1$ signifies the presence of a word in the paper and $0$ indicates its absence. Each vertex is labeled with a class corresponding to the category of the publication. For all three datasets, the graphs are treated as undirected, with symmetric adjacency matrices $A$. We use the dataset versions provided in the \textsc{PyTorch Geometric} (\textsc{PyG}) library~\cite{fey2019fast}.

For each dataset, we implement GCNs and residual GCNs with $P = D^{-1}A$, considering the linear activation $\sigma = \mathrm{Id}$ and two nonlinear activation functions: ReLU and LeakyRuLU. The negative slope in LeakyRuLU is set to be $0.8$. In all models, each message-passing layer, i.e., \eqref{eq:GNN} or \eqref{eq:res_GNN}, is configured with $32$ hidden dimensions, preceded by an embedding MLP layer for vertex feature encoding and followed by an output MLP layer for classification. We set $\alpha = 0.1$ in all residual connections.

The properties of the datasets are summarized in \Cref{tab:citation-networks}.

\begin{table}[htb!]
  \centering
  \label{tab:summary}
  \begin{tabular}{ccccccc}
    \hline
    Dataset & \#Vertices & \#Edges & \#Features & \#Classes & \#CC & $\lambda_{2,\mathrm{LCC}}(P)$ \\
    \hline
    Cora & 2708(2485) & 5278(5069) & 1433 & 7 & 78 & 0.99638\\
    Citeseer & 3327(2120) & 4552(3679) & 3703 & 6 & 438 & 0.99874 \\
    PubMed & 19717 & 44324 & 500 & 3 & 1 & 0.99052 \\
    \hline
  \end{tabular}
  \caption{Properties of citation network datasets: Cora, CiteSeer, and PubMed~\cite{yang2016revisiting}. We use the versions available in \textsc{PyG}. Abbreviations: CC = connected components, LCC = largest connected components. The numbers in parentheses indicate the numbers of vertices and edges in the LCC of each graph. $\lambda_{2,\mathrm{LCC}}(P) = \max_{\lambda\in\mathrm{spec}(P|_{\mathrm{LCC}})\backslash\{1\}} |\lambda|$ is the second-largest magnitude of eigenvalues of $P$ on LCC.}
  \label{tab:citation-networks}
\end{table}

\subsection{Vertex similarity of initialized GCNs and residual GCNs}
We begin by comparing the vertex similarity at initialization with the standard method in the \textsc{PyG} library. Specifically, the weights are initialized using the Xavier uniform distribution~\cite{glorot2010understanding}.

In \Cref{fig:init_node_sim}, we illustrate the evolution of both the normalized vertex similarity measure, $\mu(x(t))$, as defined in \Cref{def:vertex sim}, and the unnormalized vertex similarity measure, $\tilde{\mu}(x(t))$, defined as
\begin{equation}\label{eq:unormalized_vertex_sim}
    \Tilde{\mu}(x) = \sum_{i=1}^n \|x_i - \Bar{x}\|_2^2,
\end{equation}
on the largest connected component of each graph as the input $x(t)$ propagates through the model with $1000$ message-passing layers. Each experiment is repeated 15 times. The solid lines in \Cref{fig:init_node_sim} represent $\exp(\bE[\log \mu(x(t))])$ and $\exp(\bE[\log \tilde{\mu}(x(t))])$, where the expectation is taken over the 15 independent trials. The shaded regions indicate the standard deviation across these trials.

\begin{figure}[t]
     \centering
     \begin{subfigure}[b]{0.49\textwidth}
         \centering
         \includegraphics[width=0.7\textwidth]{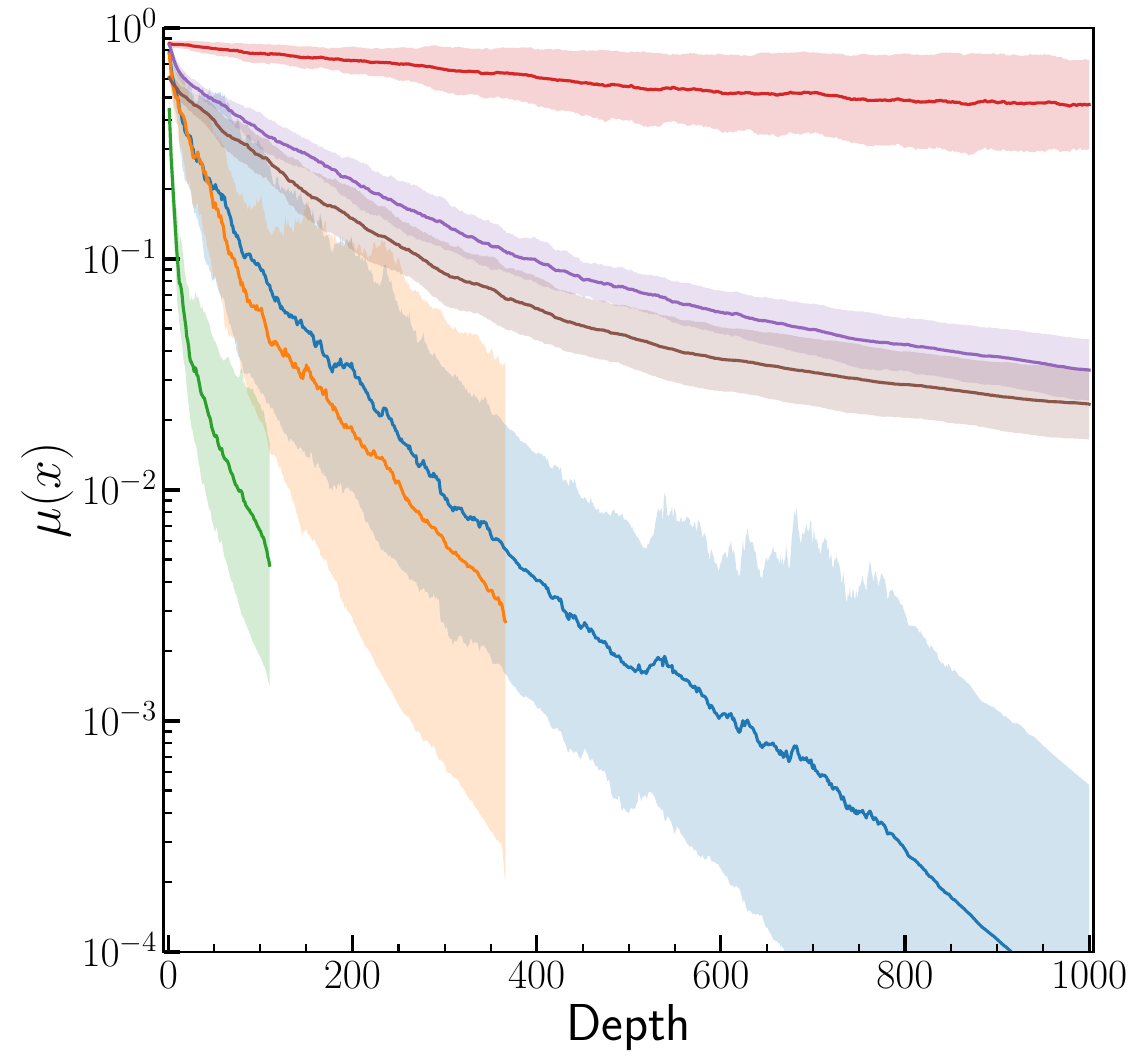}
         \caption{normalized vertex similarity on Cora}
     \end{subfigure}
     \hfill
     \begin{subfigure}[b]{0.49\textwidth}
         \centering
         \includegraphics[width=0.7\textwidth]{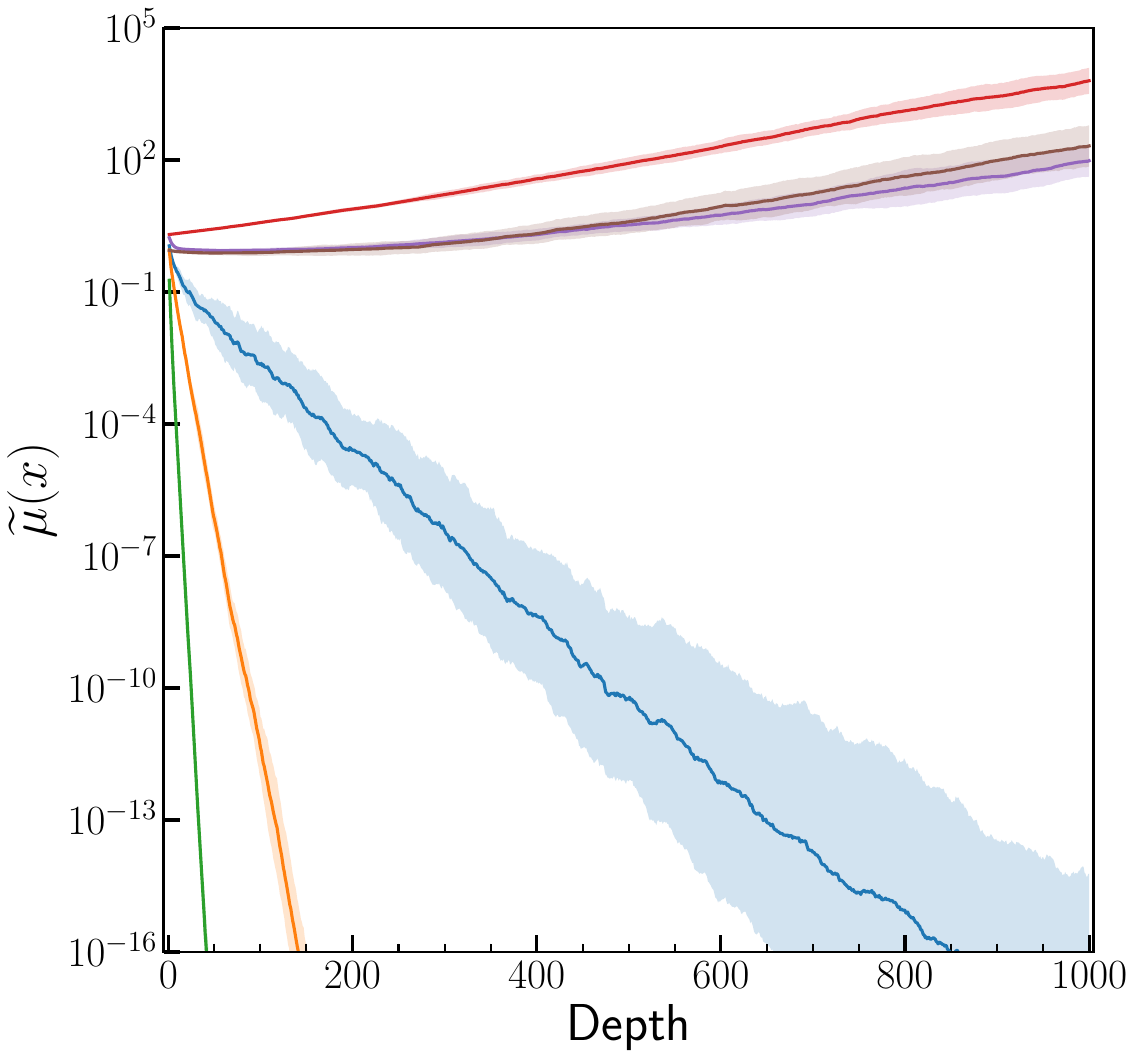}
         \caption{unnormalized vertex similarity on Cora}
     \end{subfigure}
     \begin{subfigure}[b]{0.49\textwidth}
         \centering
         \includegraphics[width=0.7\textwidth]{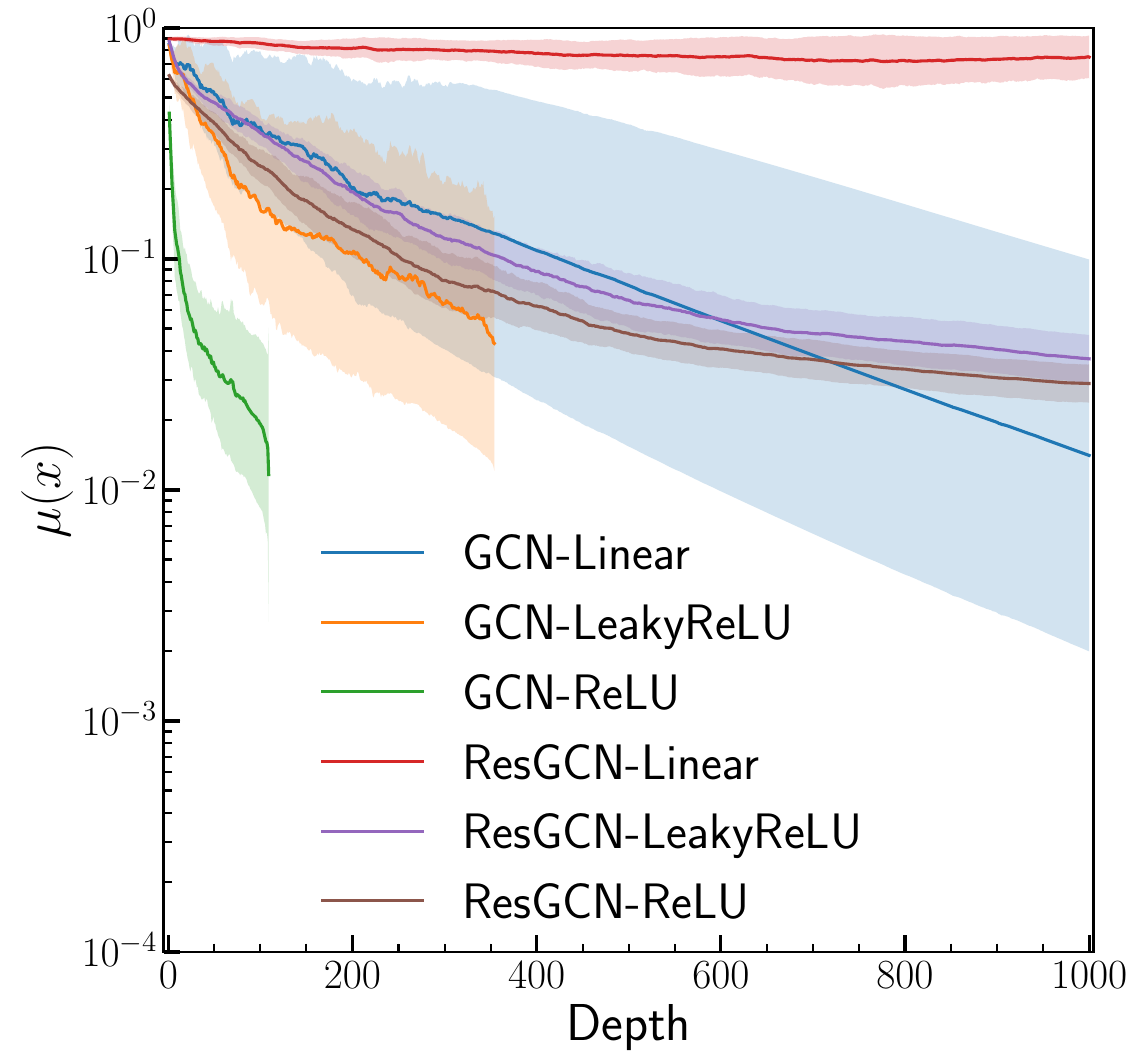}
         \caption{normalized vertex similarity on CiteSeer}
     \end{subfigure}
     \hfill
     \begin{subfigure}[b]{0.49\textwidth}
         \centering
         \includegraphics[width=0.7\textwidth]{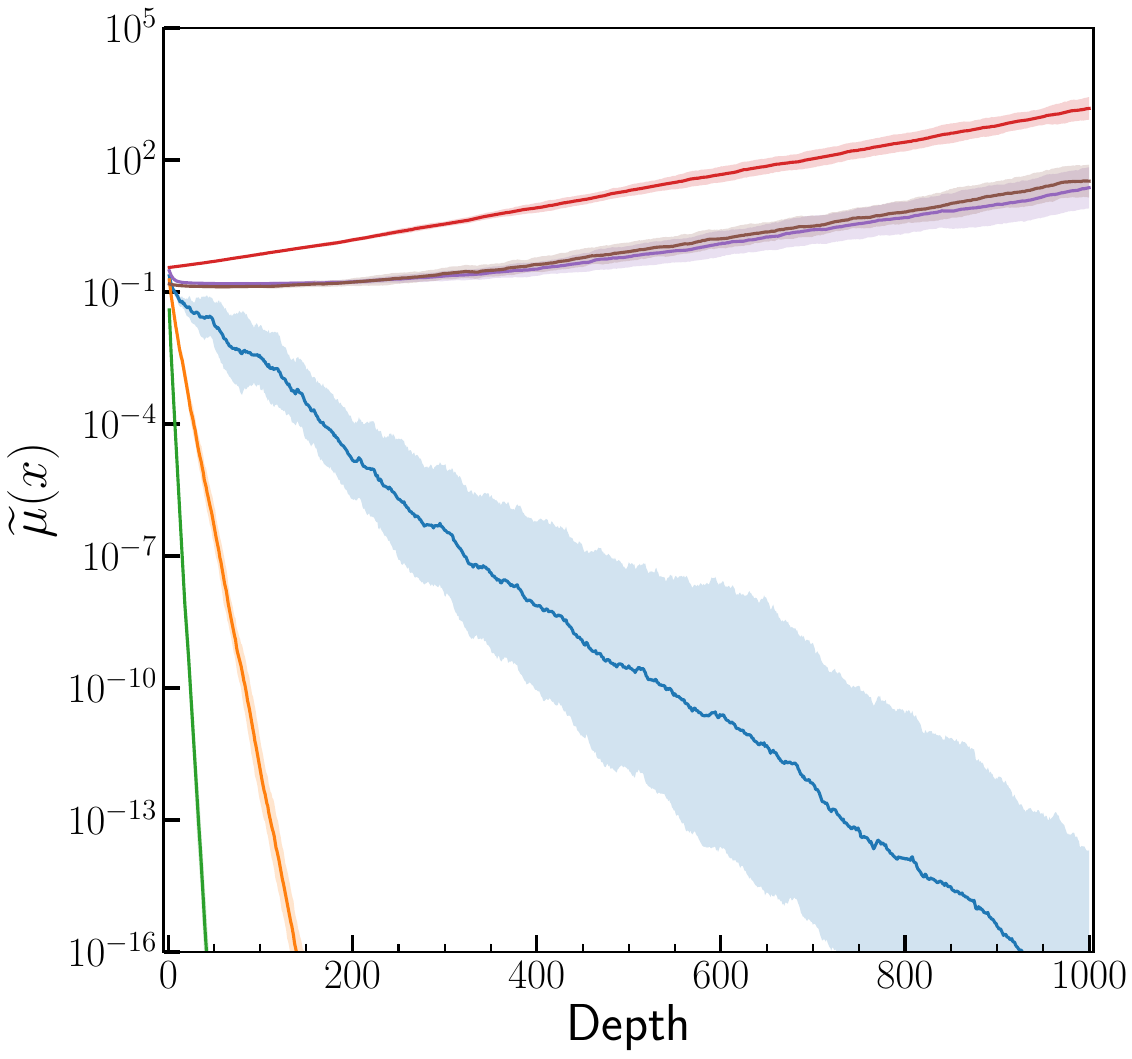}
         \caption{unnormalized vertex similarity on CiteSeer}
     \end{subfigure}
     \begin{subfigure}[b]{0.49\textwidth}
         \centering
         \includegraphics[width=0.7\textwidth]{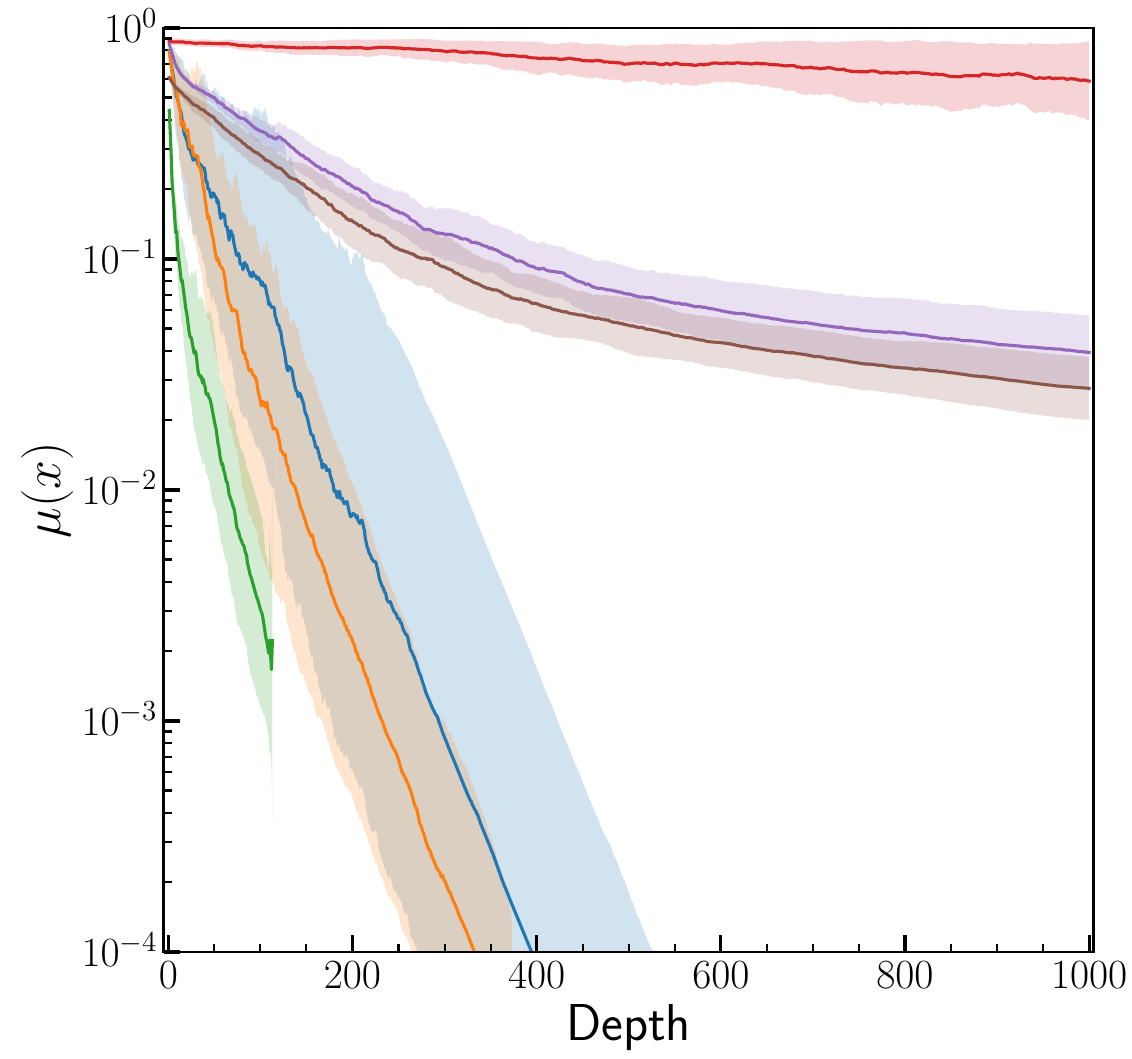}
         \caption{normalized vertex similarity on PubMed}
     \end{subfigure}
     \hfill
     \begin{subfigure}[b]{0.49\textwidth}
         \centering
         \includegraphics[width=0.7\textwidth]{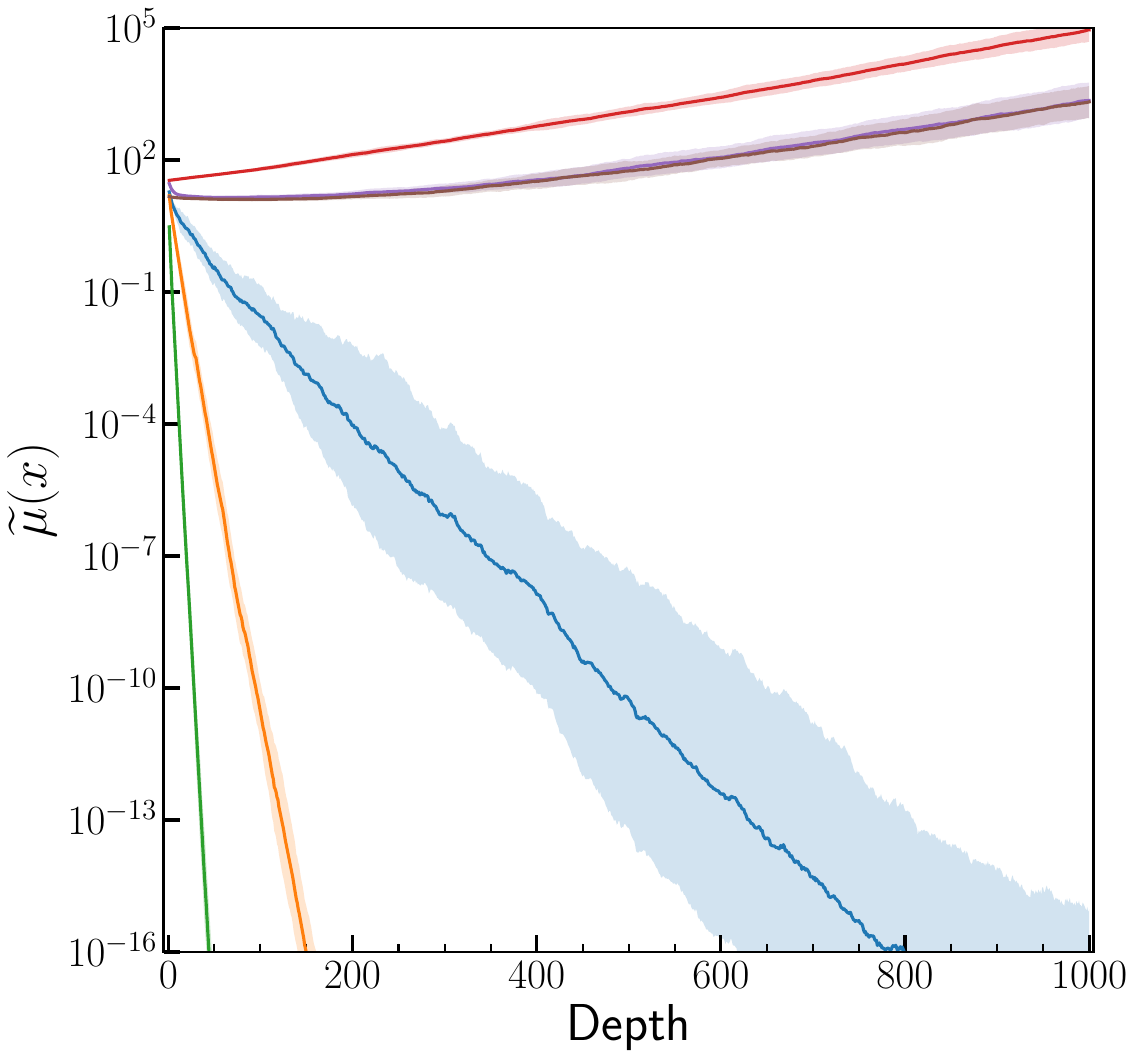}
         \caption{unnormalized vertex similarity on PubMed}
     \end{subfigure}
    \caption{Vertex similarity measure of initialized GCNs and residual GCNs on the largest connected component}
    \label{fig:init_node_sim}
\end{figure}

\begin{figure}[t]
     \centering
     \begin{subfigure}[b]{0.49\textwidth}
         \centering
         \includegraphics[width=0.7\textwidth]{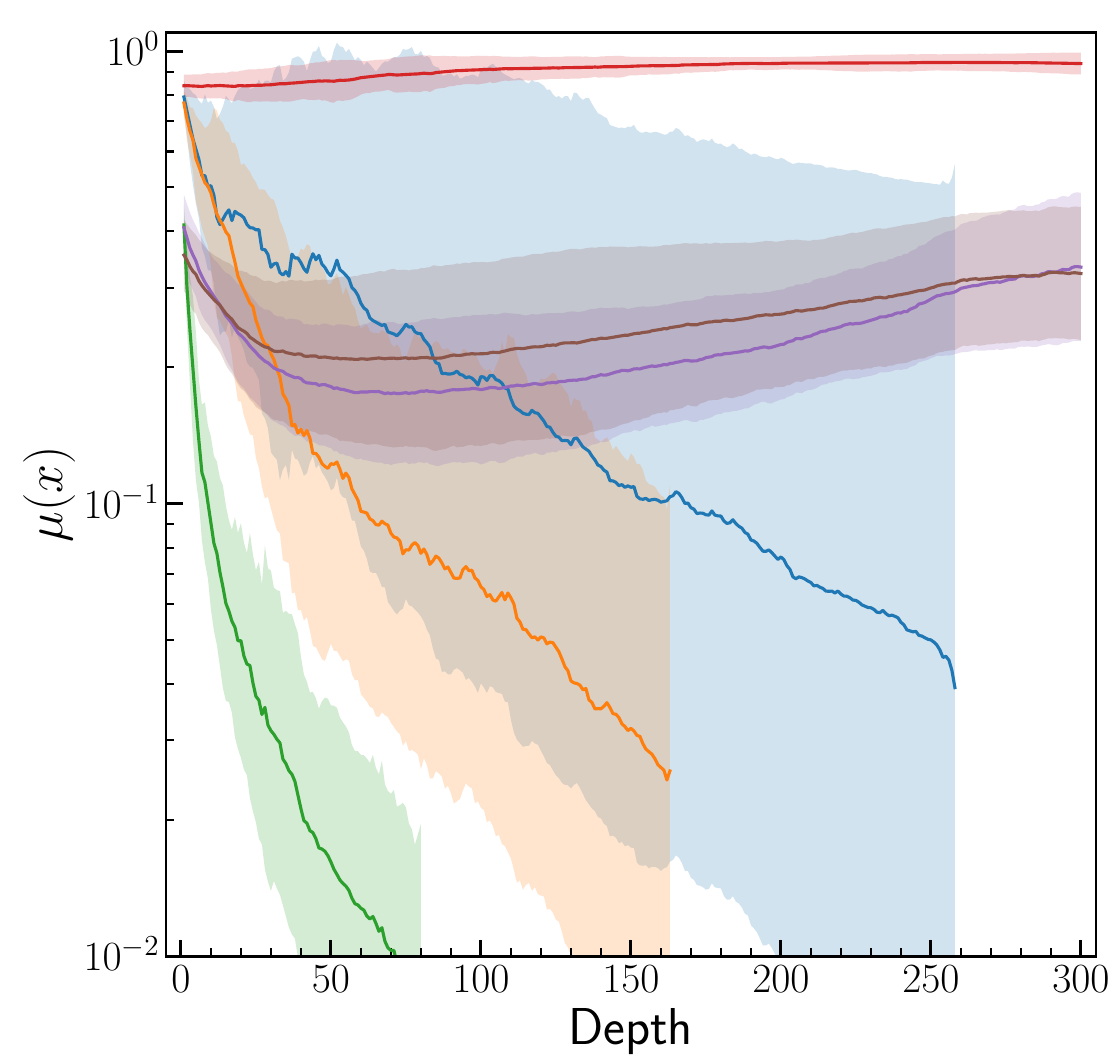}
         \caption{normalized vertex similarity on Cora}
     \end{subfigure}
     \hfill
     \begin{subfigure}[b]{0.49\textwidth}
         \centering
         \includegraphics[width=0.7\textwidth]{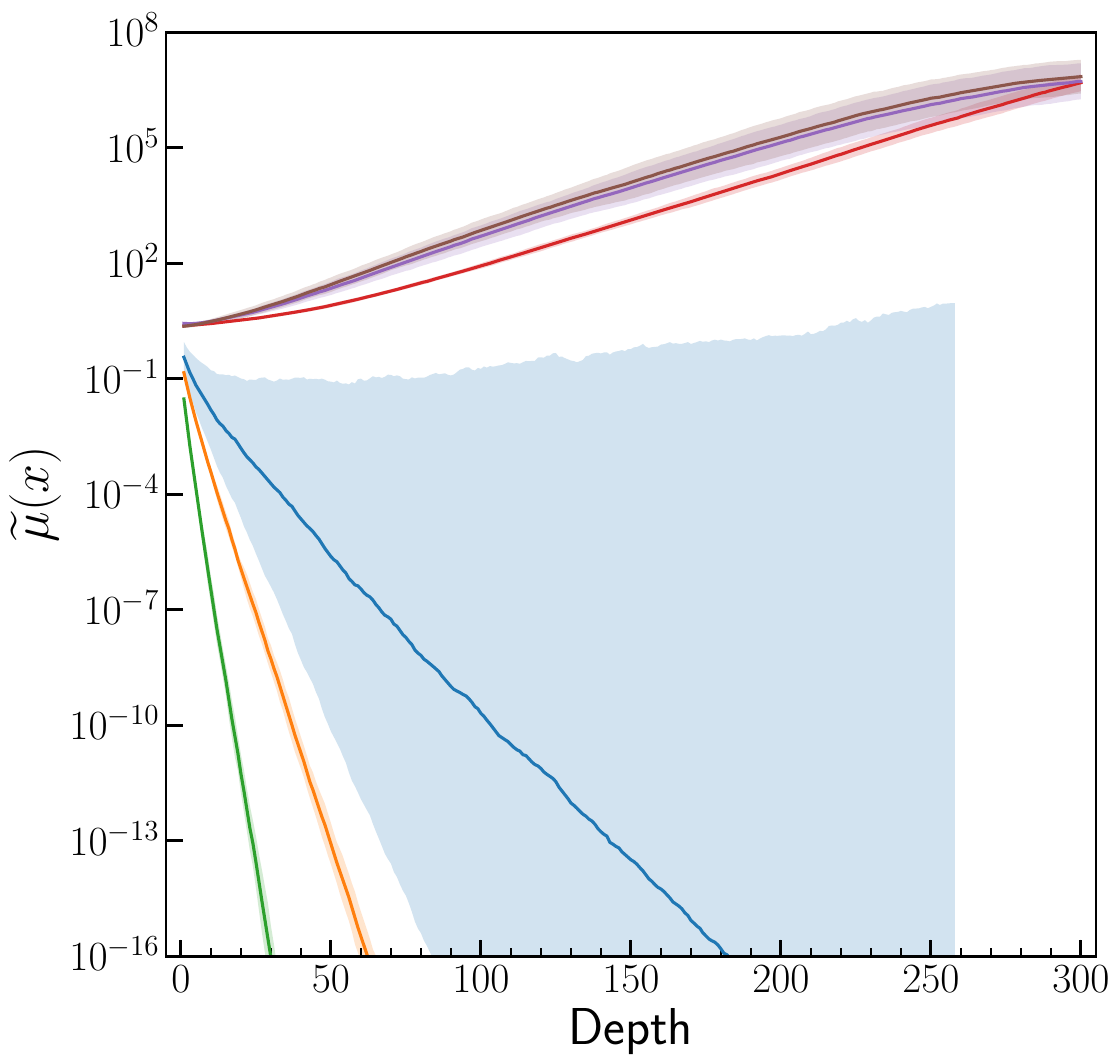}
         \caption{unnormalized vertex similarity on Cora}
     \end{subfigure}
     \begin{subfigure}[b]{0.49\textwidth}
         \centering
         \includegraphics[width=0.7\textwidth]{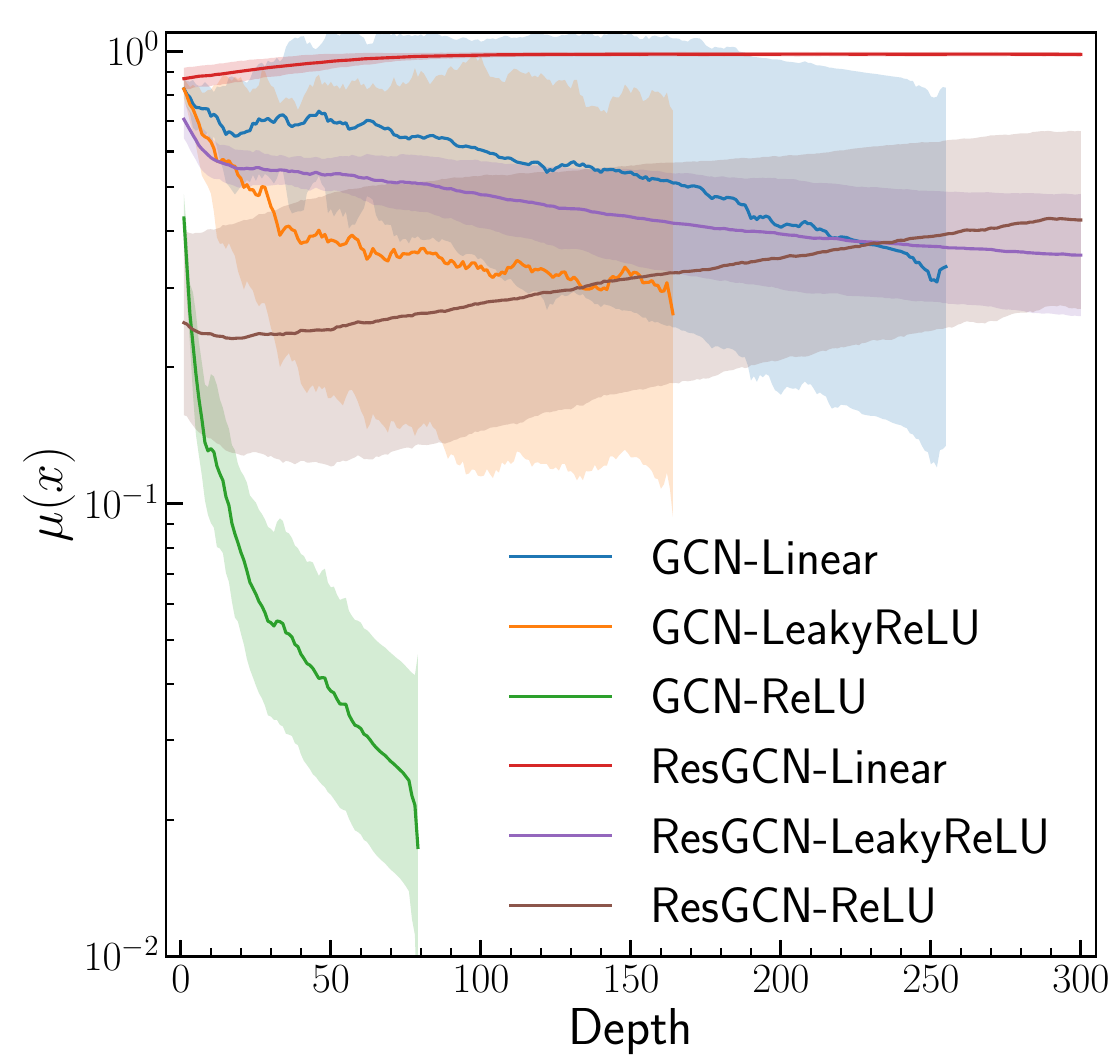}
         \caption{normalized vertex similarity on CiteSeer}
     \end{subfigure}
     \hfill
     \begin{subfigure}[b]{0.49\textwidth}
         \centering
         \includegraphics[width=0.7\textwidth]{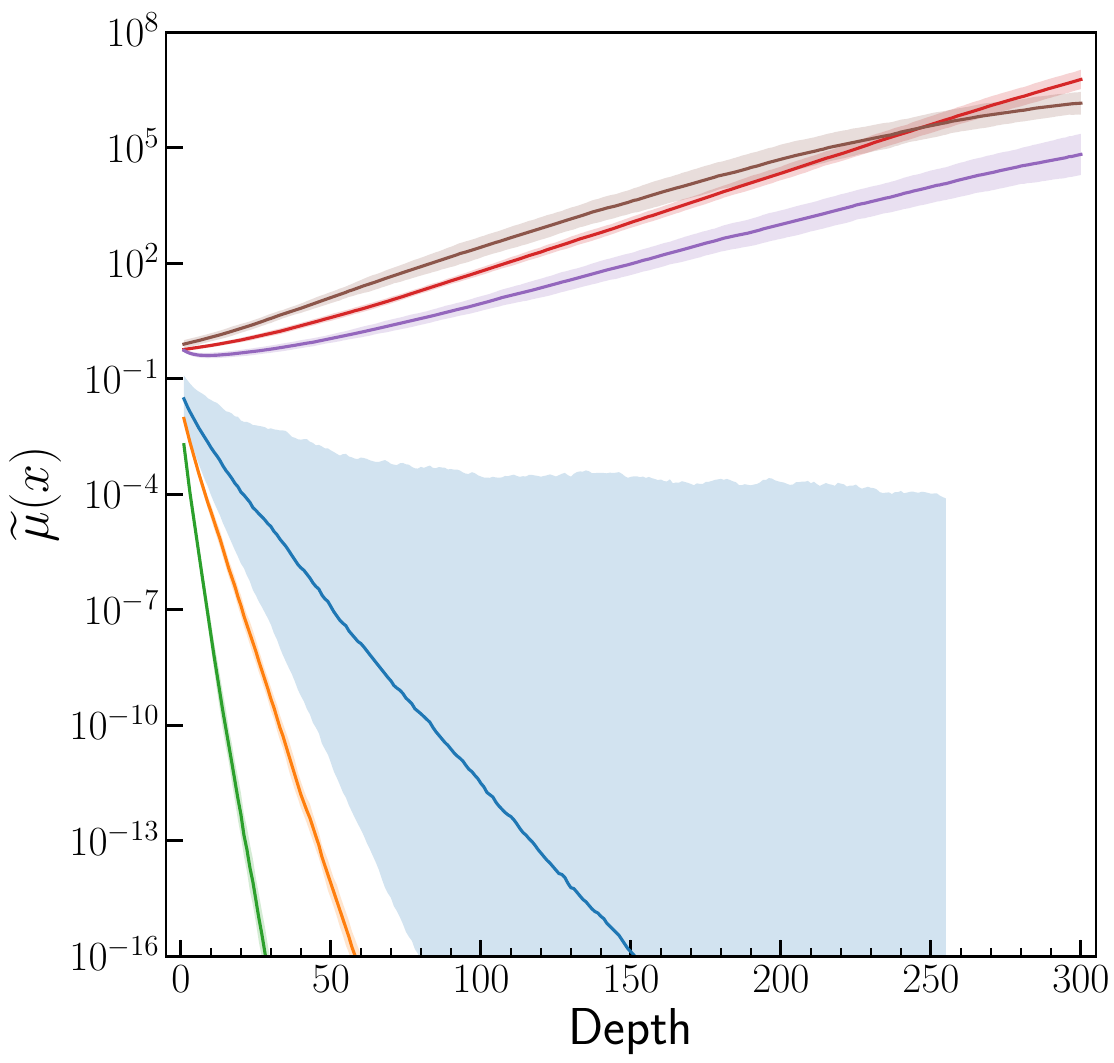}
         \caption{unnormalized vertex similarity on CiteSeer}
     \end{subfigure}
     \begin{subfigure}[b]{0.49\textwidth}
         \centering
         \includegraphics[width=0.7\textwidth]{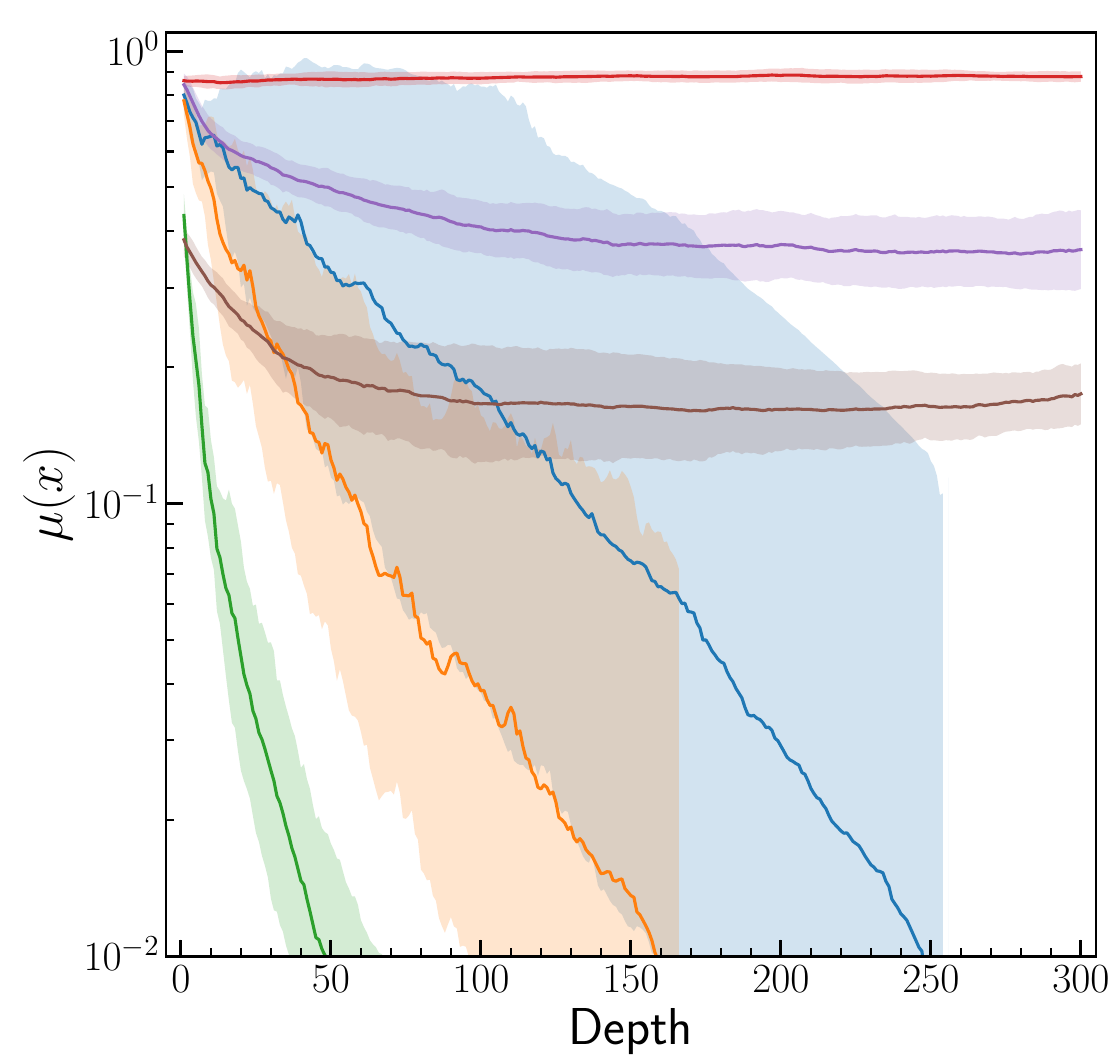}
         \caption{normalized vertex similarity on PubMed}
     \end{subfigure}
     \hfill
     \begin{subfigure}[b]{0.49\textwidth}
         \centering
         \includegraphics[width=0.7\textwidth]{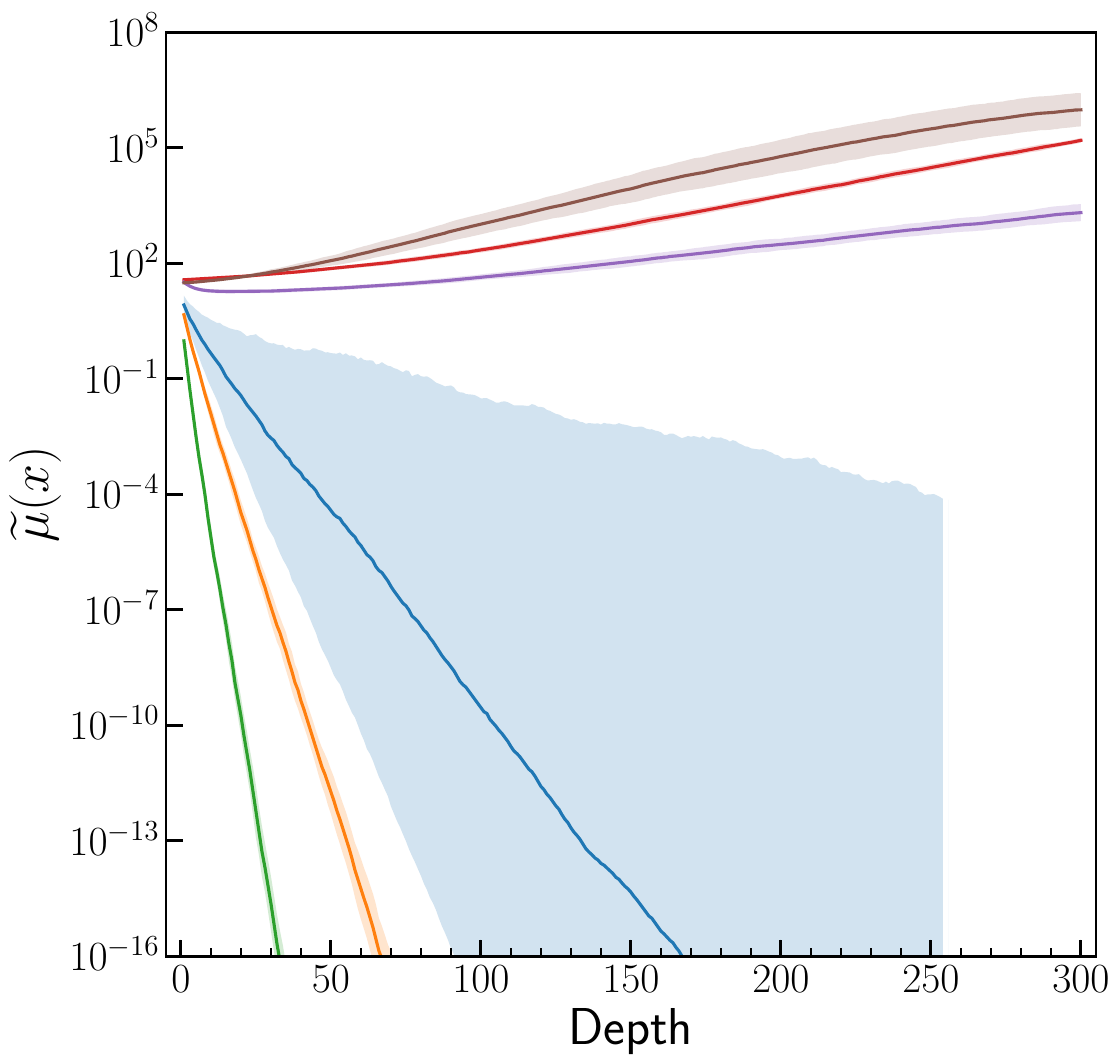}
         \caption{unnormalized vertex similarity on PubMed}
     \end{subfigure}
    \caption{Vertex similarity measure of trained GCNs and residual GCNs on the largest connected component}
    \label{fig:train_node_sim}
\end{figure}

As shown in \Cref{fig:init_node_sim}, the normalized vertex similarity measure, $\mu(x(t))$, for GCNs decays at a faster rate compared to residual GCNs across all settings, corroborating our theoretical findings in \Cref{sec:main_results}. Furthermore, the benefit of residual connections becomes more pronounced when $\lambda_{2,\mathrm{LCC}}(P)$ is smaller. This observation aligns with our theory, as \Cref{thm:nonres_GNN} establishes that $\mu(x(t))$ of deep GCNs diminishes more rapidly when $\lambda_{2,\mathrm{LCC}}(P)$ is smaller.

Additionally, due to numerical issues when the denominator in \eqref{eq:vertex_sim} becomes sufficiently small, it is challenging to observe $\mu(x(t))$ approaching a near-zero value. To address this, we plot the unnormalized vertex similarity measure, $\Tilde{\mu}(x(t))$, as defined in \eqref{eq:unormalized_vertex_sim}. As seen in \Cref{fig:init_node_sim}, $\Tilde{\mu}(x(t))$ converges to zero exponentially fast for GCNs, while for residual GCNs, it remains away from zero.

\subsection{Vertex similarity and performance of trained GCNs and residual GCNs}

In this subsection, we present the vertex similarity and performance of GCNs and residual GCNs after training. We utilize the ``full split'' mode in the \textsc{PyG} library to generate train, validation, and test masks for each dataset. Specifically, $500$ vertices are used for validation, $1200$ vertices for test, following the setup in~\cite{chen2018fastgcn}, and all remaining vertices are used for training. The cross-entropy loss is employed, and the Adam optimizer \cite{kingma2014adam} is used to minimize the loss, with a fixed learning rate of $1\times10^{-5}$ and weight decay of $5\times10^{-4}$. Each model is trained for $3000$ epochs, and all experiments are performed with 15 independent trials. The models are trained on an Nvidia RTX 4090 GPU. 

\Cref{fig:train_node_sim} shows the vertex similarity measure for trained GCNs and residual GCNs of 300 message-passing layers, with all other settings consistent with \Cref{fig:init_node_sim}. The observations are similar to those in \Cref{fig:init_node_sim}. In particular, the normalized vertex similarity measure $\mu(x(t))$ for GCNs decays more quickly than for residual GCNs. Additionally, the unnormalized vertex similarity measure $\Tilde{\mu}(x(t))$ for GCNs converges exponentially to $0$, while it remains away from $0$ for residual GCNs.

\begin{figure}[t]
     \centering
     \begin{subfigure}[b]{0.49\textwidth}
         \centering
         \includegraphics[width=0.7\textwidth]{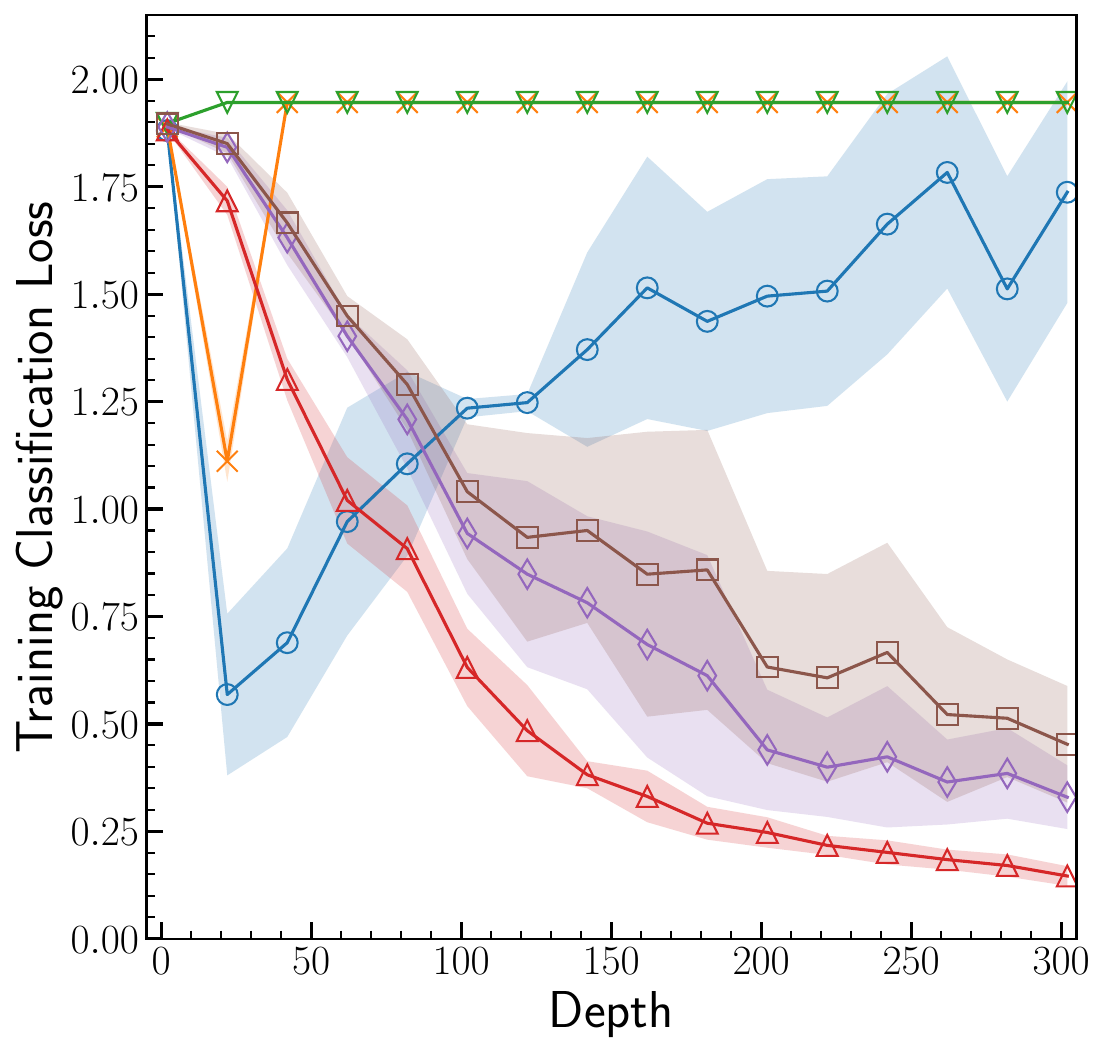}
         \caption{training loss on Cora}
     \end{subfigure}
     \hfill
     \begin{subfigure}[b]{0.49\textwidth}
         \centering
         \includegraphics[width=0.7\textwidth]{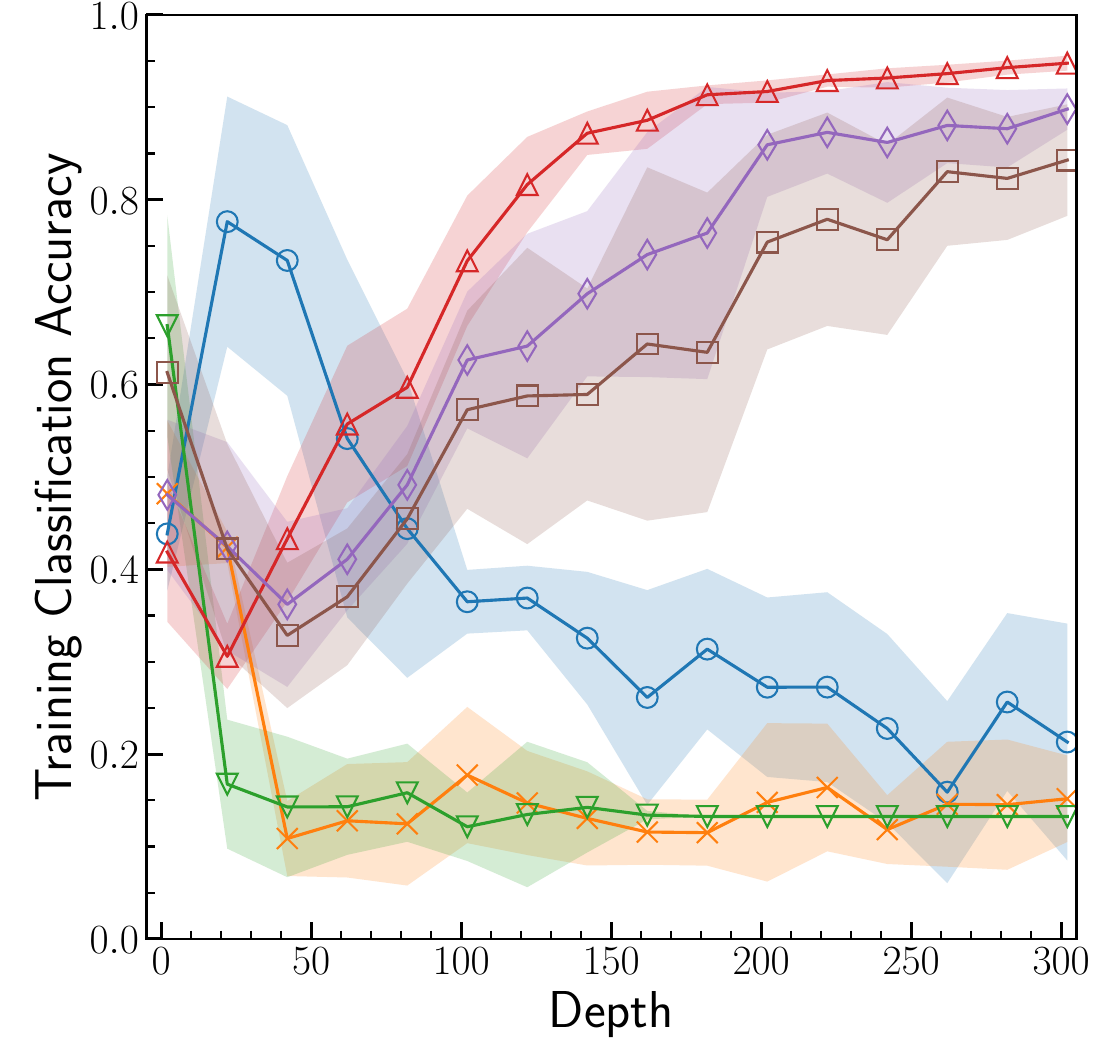}
         \caption{training classification accuracy on Cora}
     \end{subfigure}
     \begin{subfigure}[b]{0.49\textwidth}
         \centering
         \includegraphics[width=0.7\textwidth]{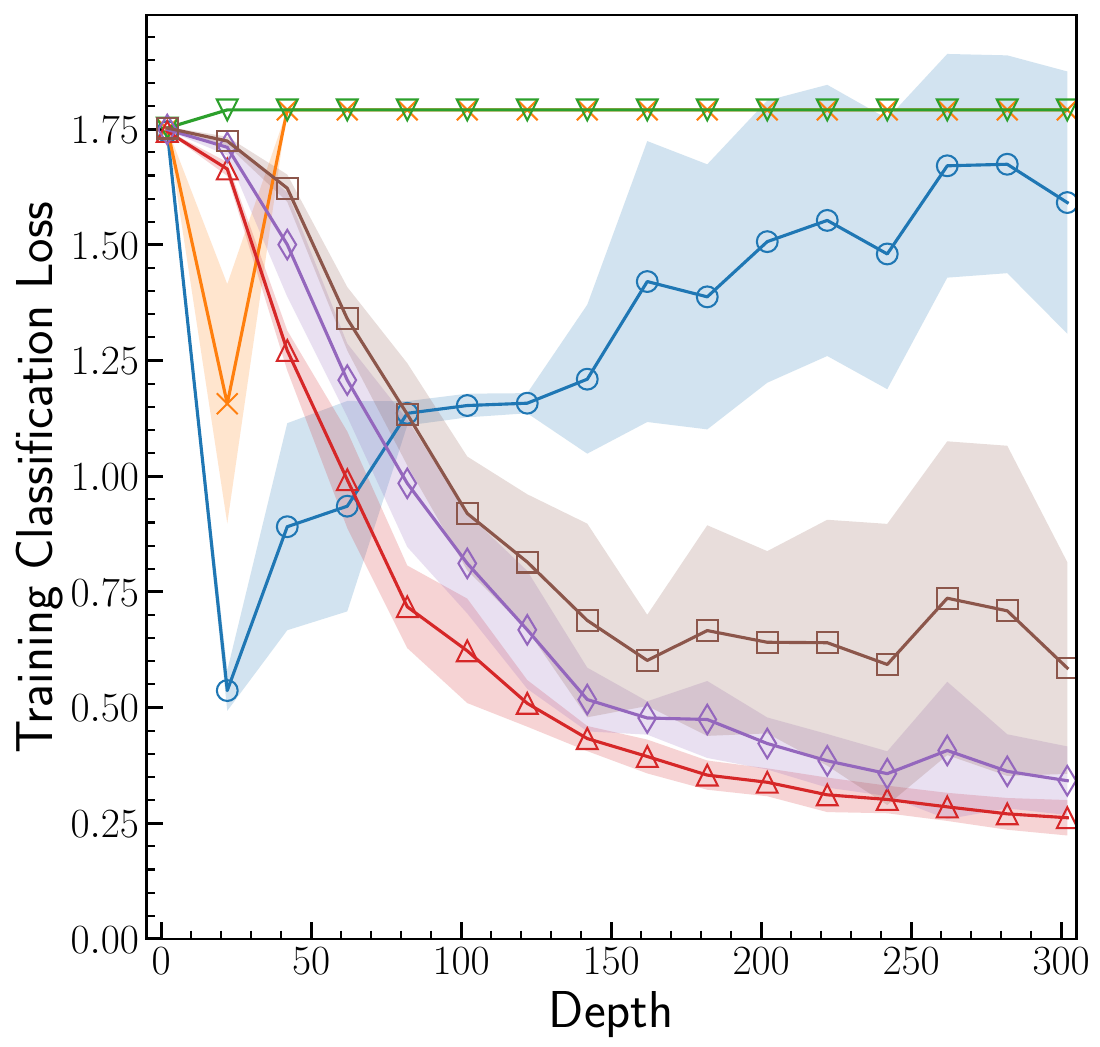}
         \caption{training loss on CiteSeer}
     \end{subfigure}
     \hfill
     \begin{subfigure}[b]{0.49\textwidth}
         \centering
         \includegraphics[width=0.7\textwidth]{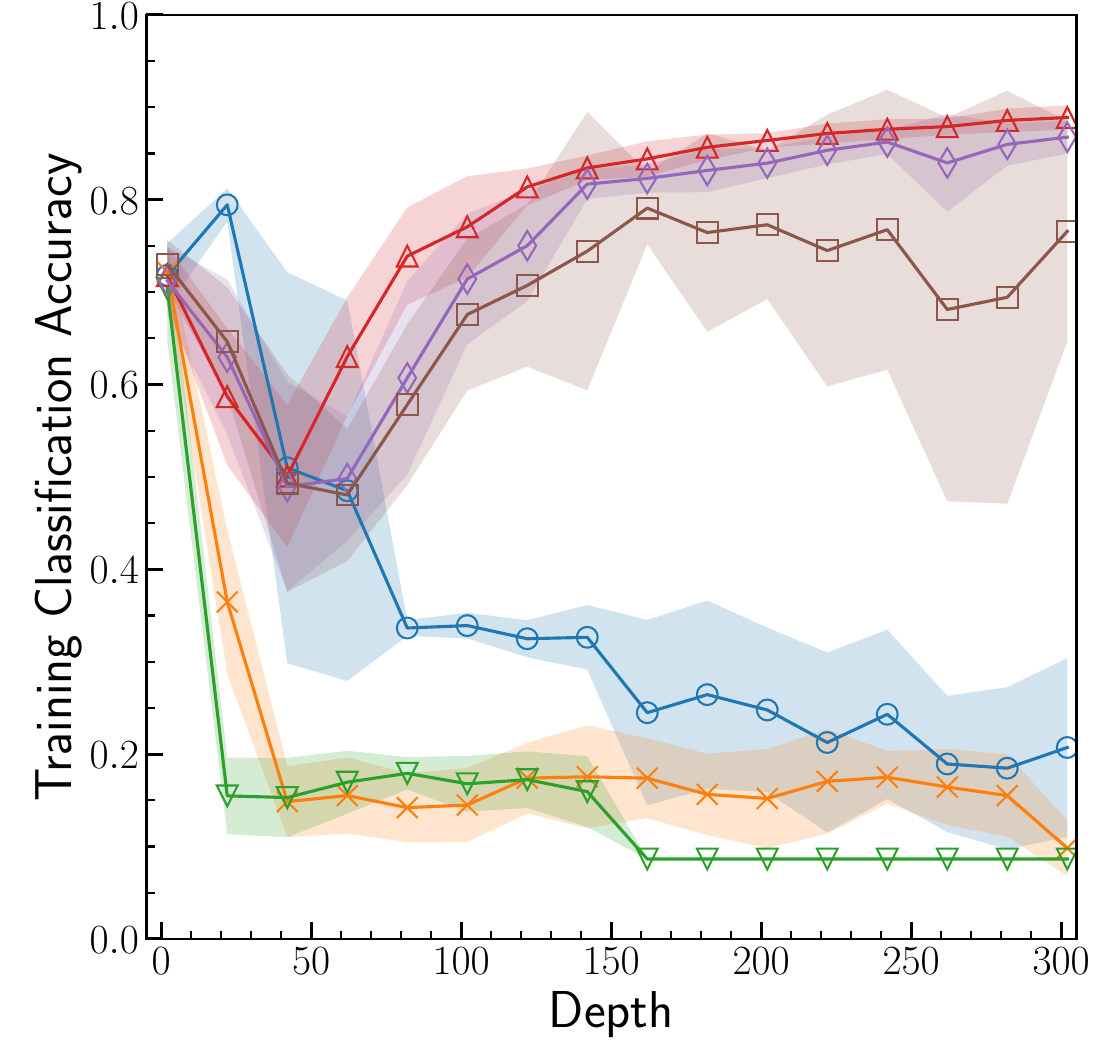}
         \caption{training classification accuracy on CiteSeer}
     \end{subfigure}
     \begin{subfigure}[b]{0.49\textwidth}
         \centering
         \includegraphics[width=0.7\textwidth]{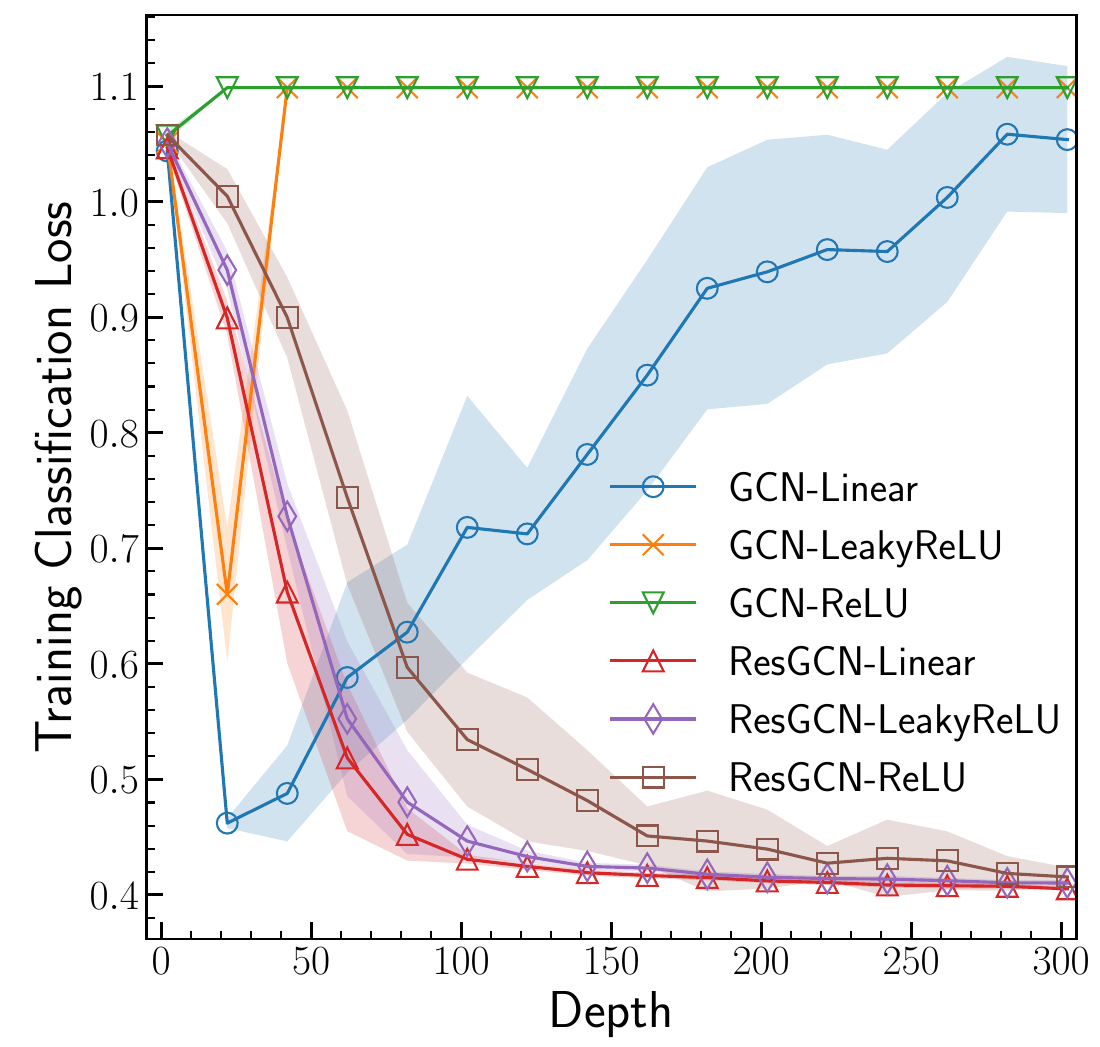}
         \caption{training loss on PubMed}
     \end{subfigure}
     \hfill
     \begin{subfigure}[b]{0.49\textwidth}
         \centering
         \includegraphics[width=0.7\textwidth]{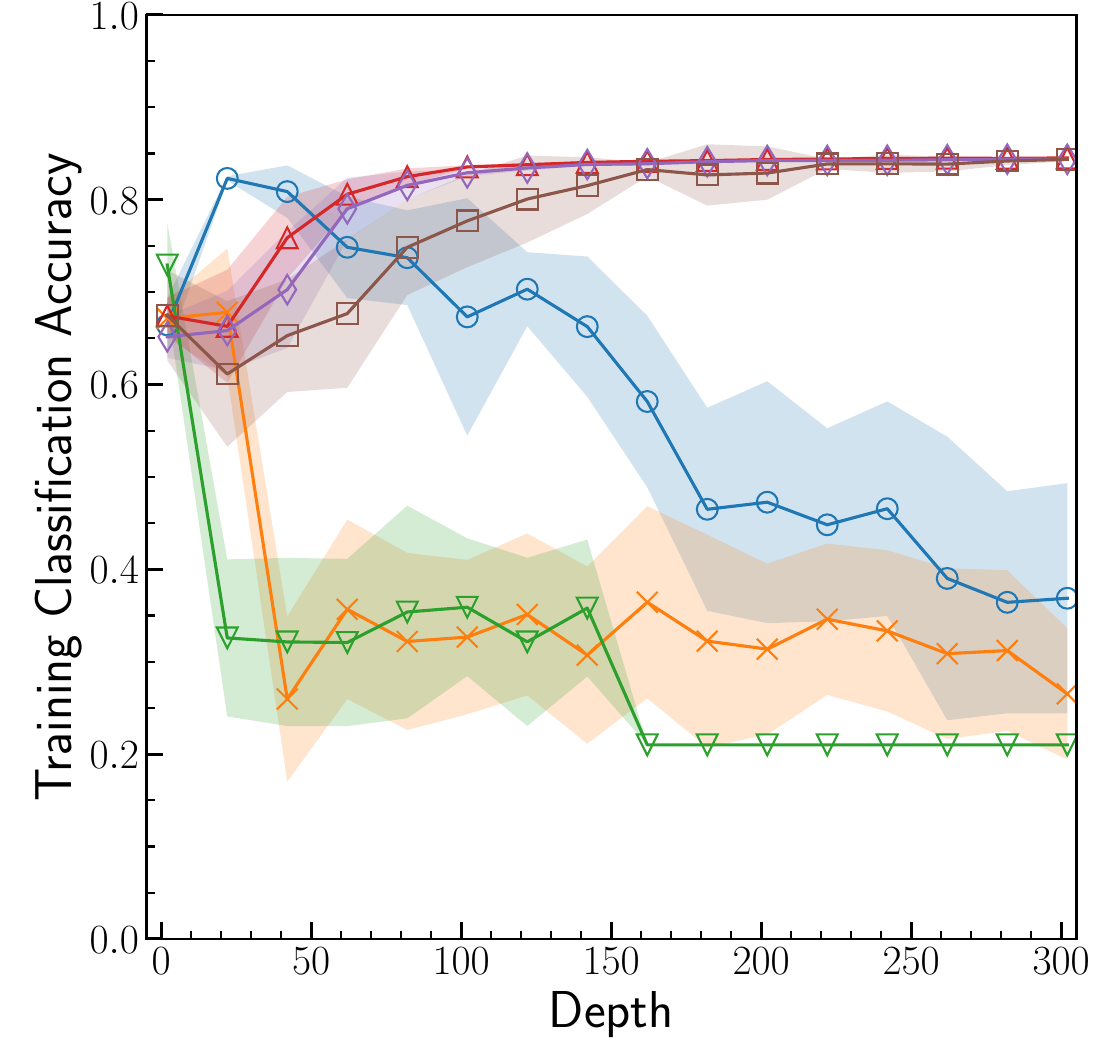}
         \caption{training classification accuracy on PubMed}
     \end{subfigure}
    \caption{Training loss and training classification accuracy of GCNs and residual GCNs}
    \label{fig:train_loss_accuracy}
\end{figure}

\begin{figure}[t]
     \centering
     \begin{subfigure}[b]{0.49\textwidth}
         \centering
         \includegraphics[width=0.7\textwidth]{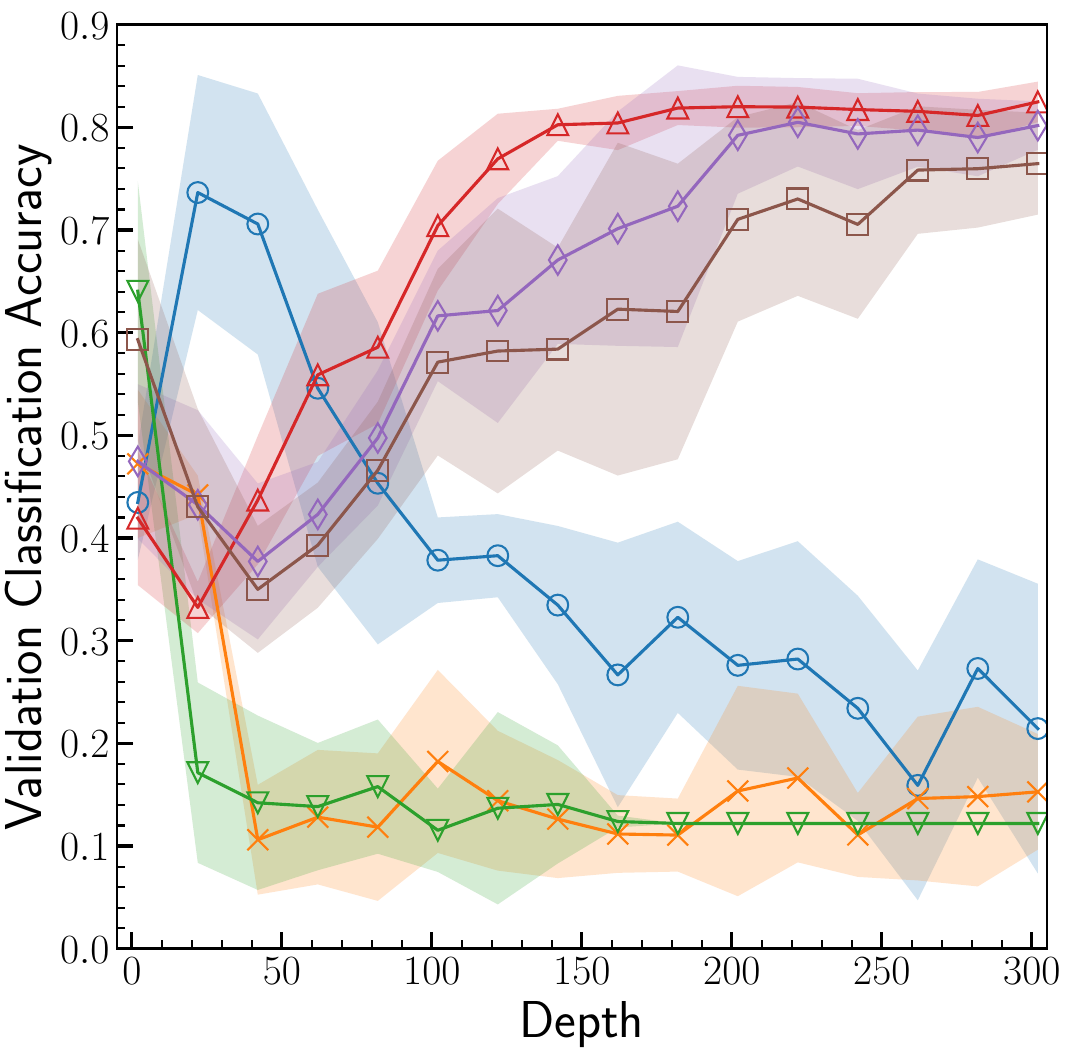}
         \caption{validation classification accuracy on Cora}
     \end{subfigure}
     \hfill
     \begin{subfigure}[b]{0.49\textwidth}
         \centering
         \includegraphics[width=0.7\textwidth]{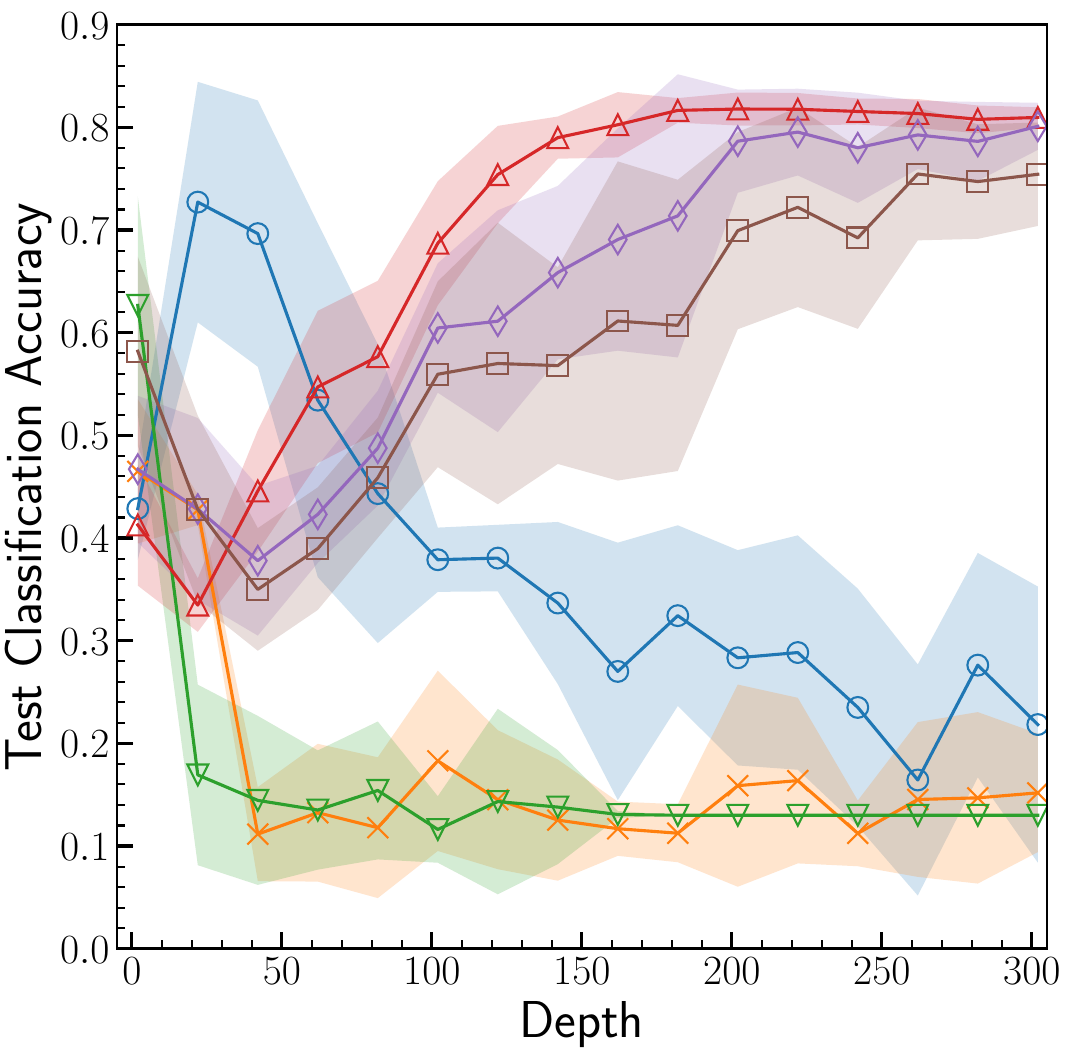}
         \caption{test classification accuracy on Cora}
     \end{subfigure}
     \begin{subfigure}[b]{0.49\textwidth}
         \centering
         \includegraphics[width=0.7\textwidth]{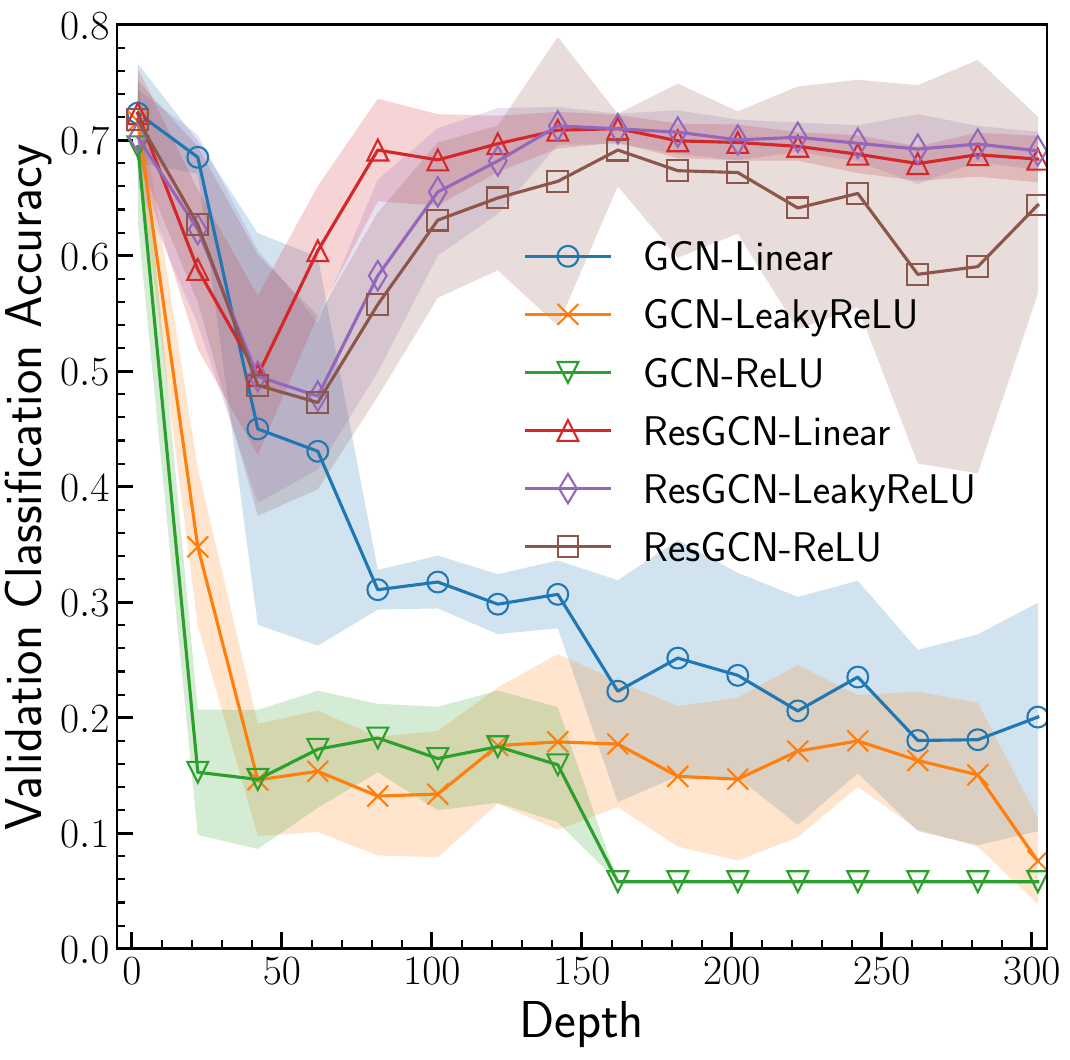}
         \caption{validation classification accuracy on CiteSeer}
     \end{subfigure}
     \hfill
     \begin{subfigure}[b]{0.49\textwidth}
         \centering
         \includegraphics[width=0.7\textwidth]{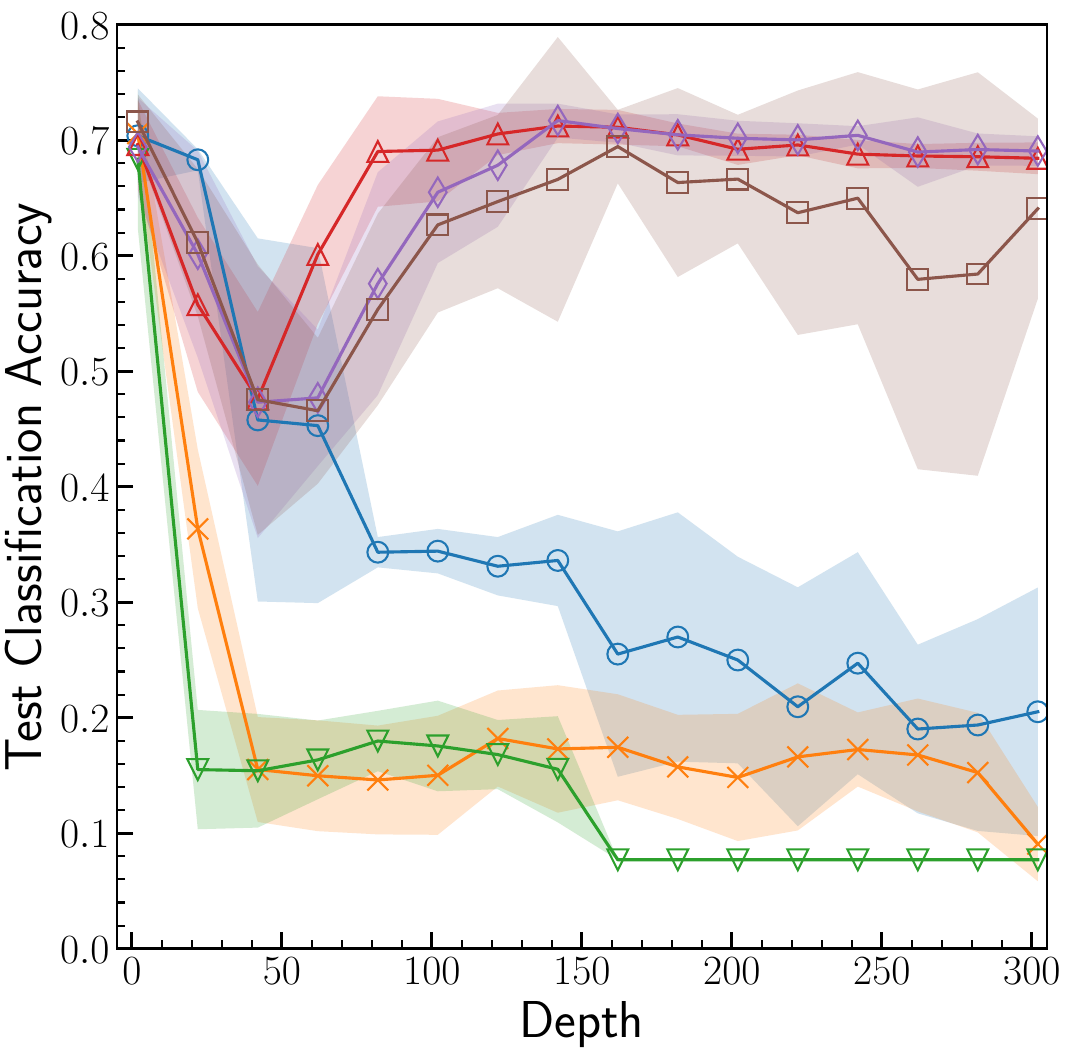}
         \caption{test classification accuracy on CiteSeer}
     \end{subfigure}
     \begin{subfigure}[b]{0.49\textwidth}
         \centering
         \includegraphics[width=0.7\textwidth]{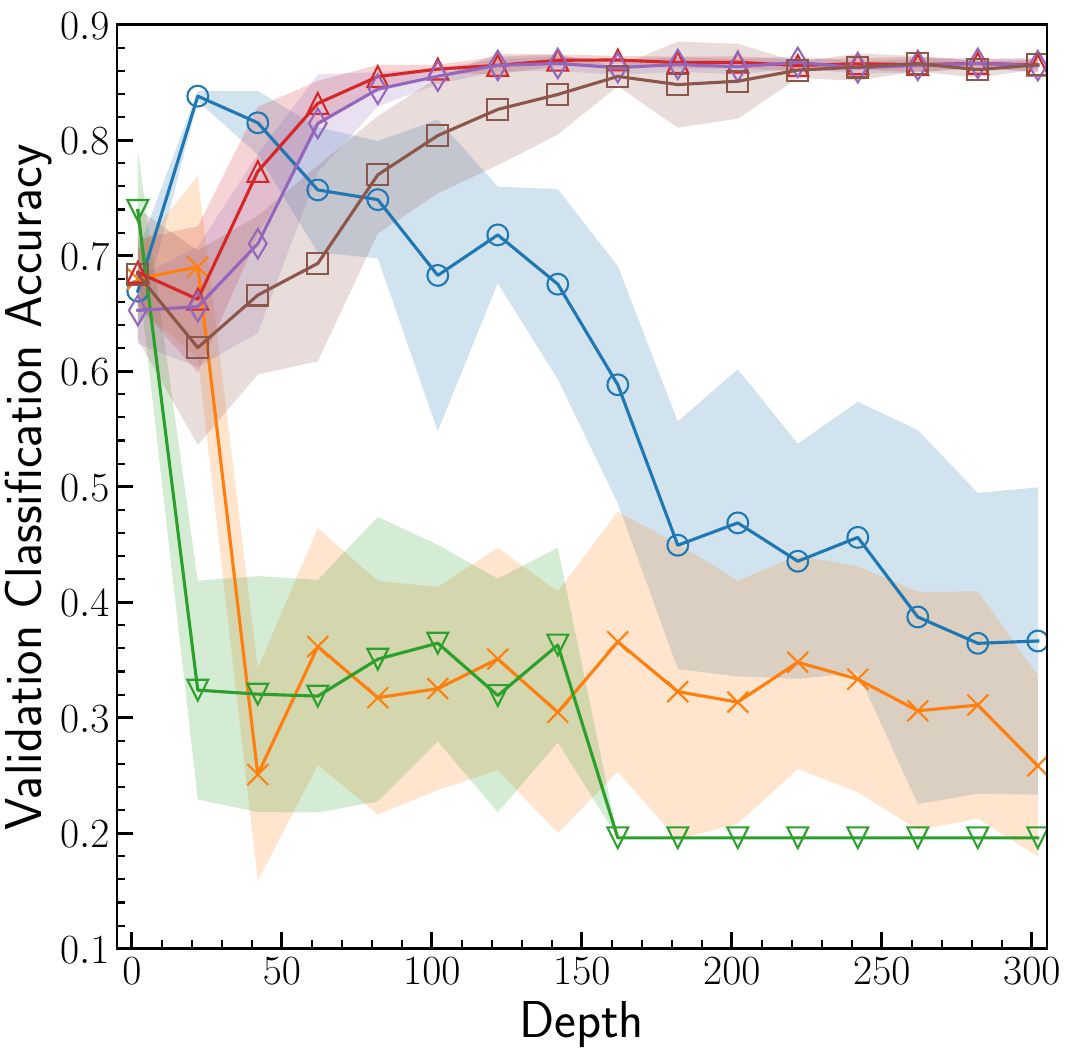}
         \caption{validation classification accuracy on PubMed}
     \end{subfigure}
     \hfill
     \begin{subfigure}[b]{0.49\textwidth}
         \centering
         \includegraphics[width=0.7\textwidth]{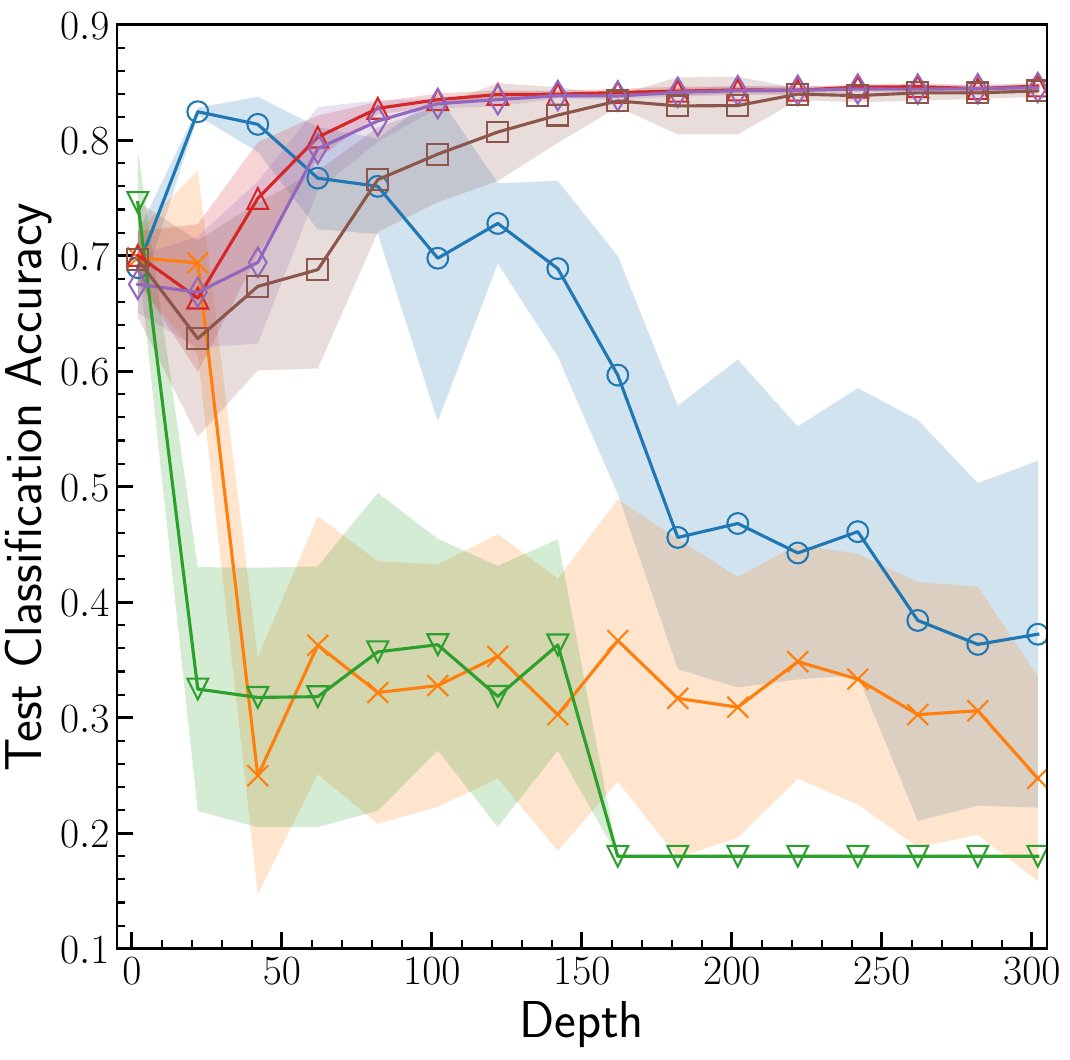}
         \caption{test classification accuracy on PubMed}
     \end{subfigure}
    \caption{Validation and test classification accuracy of GCNs and residual GCNs}
    \label{fig:val_test}
\end{figure}

Next, we train GCNs and residual GCNs with $0, 20, 40, \dots, 300$ message-passing layers, and compare their performance on the training, validation, and test sets. The training loss and training classification accuracy are shown in \Cref{fig:train_loss_accuracy}, while the classification accuracies on the validation and test sets are displayed in \Cref{fig:val_test}. In both figures, solid lines represent the average values, and shaded regions indicate the standard deviation.

As shown in \Cref{fig:train_loss_accuracy}, residual GCNs generally achieve lower training loss and higher training accuracy as the model depth increases. In contrast, deeper GCNs experience larger training loss and lower training accuracy. Similar trends are observed in the validation and test accuracies, as displayed in \Cref{fig:val_test}. Furthermore, deep residual GCNs consistently outperform their non-residual counterparts, highlighting the effectiveness of residual connections in building and training deep graph machine learning models.
\section{Conclusion}
\label{sec:conclude}

This work establishes the asymptotic oversmoothing rates for deep GNNs with and without residual connections using the multiplicative ergodic theorem. Under suitable assumptions, we show that the normalized vertex similarity of deep non-residual GNNs converges to zero at an exponential rate determined by the second-largest eigenvalue magnitude of the aggregation coefficient matrix. Furthermore, we precisely characterize the asymptotic behavior of the normalized vertex similarity of deep residual GNNs, proving that it either avoids exponential oversmoothing or exhibits a significantly slower rate compared to deep non-residual GNNs across several broad classes of parameter distributions. These findings highlight that incorporating residual connections effectively mitigates or prevents the oversmoothing issue. Our theoretical results are strongly corroborated by numerical experiments.

Let us also comment on the limitations and future works. First, the current analysis relies on the linearity of the dynamics, which does not account for nonlinear activations. Second, the assumption that the parameters of each layer are independently drawn from the same distribution may not hold for trained neural networks. Finally, our analytical framework assumes constant aggregation coefficients across layers, whereas, in practice, different aggregations could be used in different layers. These issues present important directions for future research.

\section*{Acknowledgments}
Z.C. thanks Runzhong Wang for the helpful discussions on oversmoothing. Y.P. is supported in part by NSF under CCF-2131115. P.R. is supported by NSF grants DMS-2022448 and CCF-2106377.

\bibliographystyle{alpha}
\bibliography{refs}

\end{document}